\documentclass{article}
\usepackage{iclr2026_conference,times}

\usepackage{natbib}
\bibliographystyle{plainnat} 
\usepackage{amssymb}
\usepackage[utf8]{inputenc} 
\usepackage[T1]{fontenc}    
\usepackage{hyperref}       
\hypersetup{
    colorlinks=false,  
}
\usepackage{url}            
\usepackage{booktabs}       
\usepackage{amsfonts}       
\usepackage{nicefrac}       
\usepackage{microtype}      
\usepackage[dvipsnames]{xcolor}         
\usepackage{amsmath,amssymb,amsthm}
\usepackage{enumitem}
\usepackage{tikz}
\usetikzlibrary{3d, shadows.blur,intersections,arrows.meta, perspective}
\usepackage[capitalize]{cleveref}
\usepackage{pifont}
\crefname{section}{Sec.}{Secs.}
\Crefname{section}{Section}{Sections}
\Crefname{table}{Table}{Tables}
\crefname{table}{Tab.}{Tabs.}
\Crefname{appendix}{Appendix}{Appendices}
\crefname{appendix}{Appx.}{Apps.}
\Crefname{lemma}{Lemma}{Lemmas}

\usepackage{graphicx}
\usetikzlibrary{arrows.meta,positioning,fit,backgrounds,calc}
\graphicspath{{./images/}{./figures/}} 
\newcommand{\Loss}{\mathcal{L}}
\newcommand{\R}{\mathbb{R}}
\newcommand{\E}{\mathbb{E}}

\newcommand{\BaseW}{\widetilde{w}}
\newcommand{\BaseG}{\widetilde{G}}
\newcommand{\BaseV}{\widetilde{V}}
\newcommand{\BaseE}{\widetilde{E}}
\newcommand{\ManifoldD}{\mathcal{M}_{D}}
\newcommand{\ManifoldRE}{\mathcal{M}_{RE}}
\newcommand{\ManifoldCE}{\mathcal{M}_{CE}}
\newcommand{\ManifoldBC}{\mathcal{M}_{BC}}
\usepackage[textsize=tiny]{todonotes}

\newcommand{\ff}[1]{\todo[color=blue!30,size=\tiny]{FF: #1}}
\newcommand{\AJ}[1]{\todo[color=green!30,size=\tiny]{AJ: #1}}
\newcommand{\GIU}[1]{\todo[color=purple!30,size=\tiny]{GIU: #1}}

\renewcommand{\ff}[1]{}
\renewcommand{\AJ}[1]{}
\renewcommand{\GIU}[1]{}
\newtheorem{theorem}{Theorem}[section]

\newtheorem{definition}{Definition}[section]
\newtheorem{lemma}{Lemma}[section]

\newtheorem{remark}{Remark}[section]
\newtheorem{observation}{Observation}[section] 
\numberwithin{figure}{section}
\title{Barriers for Learning in an Evolving World: \\Mathematical Understanding of Loss of Plasticity}

\author{Amir Joudaki$^{1}$,
 Giulia Lanzillotta$^{1}$,
 Mohammad Samragh Razlighi$^{2}$,
 Iman Mirzadeh$^{2}$\\
 \textbf{Keivan Alizadeh}$^{2}$,
 \textbf{Thomas Hofmann}$^{1}$,
 \textbf{Mehrdad Farajtabar}$^{2}$,
 \textbf{Fartash Faghri}$^{2}$\\
 $^{1}$ETH Zürich \quad
 $^{2}$Apple \\
 \texttt{amir.joudaki@inf.ethz.ch},
 \texttt{\{fartash, m\_farajtabar\}@apple.com}
}

\iclrfinalcopy 

\begin{document}
\maketitle

\fancyhead[L]{Published at ICLR 2026}     




\begin{abstract}
    Deep learning models excel in stationary settings but suffer from loss of plasticity (LoP) in non-stationary environments. While prior literature characterizes LoP through symptoms like rank collapse of representations, it often lacks a mechanistic explanation for why gradient descent fails to recover from these states. This work presents a first-principles investigation grounded in dynamical systems theory, formally defining LoP not merely as a statistical degradation, but as an entrapment of gradient dynamics within invariant sub-manifolds of the parameter space. We identify two primary mechanisms that create these traps: frozen units from activation saturation and cloned-unit manifolds from representational redundancy. Crucially, our framework uncovers a fundamental tension: the very mechanisms that promote generalization in static settings, such as low-rank compression, actively steer the network into these LoP manifolds. We validate our theoretical analysis with numerical simulations and demonstrate how architectural interventions can destabilize these manifolds to restore plasticity.\\
    
\end{abstract}
\vspace{-0.6cm}
\noindent\textbf{Code:} \url{https://github.com/ajoudaki/loss-of-plasticity}
\vspace{-0.3cm}

\section{Introduction}

The extraordinary success of back-propagation in training deep neural networks often relies on two implicit assumptions. First, \emph{stationarity} is assumed: the data distribution encountered during training is similar to the distribution faced during deployment, implying that post-training adaptation is minimal. Second, a \emph{single random initialization} of network parameters is the main source of diversity and exploration potential—a resource that is progressively consumed by optimization and rarely replenished. These assumptions falter when an artificial agent must operate and learn continuously within an environment characterized by changing dynamics or evolving task distributions. This scenario, commonly referred to as continual or lifelong learning, presents the stability-plasticity dilemma \citep{abraham2005memory, chaudhry2018riemannian}, demanding a system stable enough to retain acquired knowledge yet plastic enough to integrate new information.

Empirically, standard deep networks subjected to long sequences of tasks or drifting data streams exhibit a sharp decline in learning capability \citep{dohare2024loss, berariu2021plasticity, nikishin2022primacy}. This phenomenon, termed \emph{Loss of Plasticity} (LoP), is distinct from catastrophic forgetting: while forgetting erases old knowledge, LoP prevents the acquisition of new knowledge. Common symptoms are well-documented: exploding weight magnitudes \citep{nikishin2022primacy}, the emergence of ``dead'' units \citep{sokar2023dormant, lyle2022understanding}, and a collapse in the effective rank of representations \citep{papyan2020prevalence, huh2022lowrank, kumar2020implicit}. While recent works have linked LoP to back-propagation mechanics \citep{dohare2024loss} or NTK rank degradation \citep{lyle2023understanding}, these observations are largely descriptive. They characterize the \emph{symptoms} of the failing network but stop short of a formal mechanistic explanation for the \emph{persistence} of the non-plastic state.

A fundamental question remains: Why exactly does gradient descent fail to recover from these states? If LoP is merely a bad configuration, why do gradients not steer the model back toward useful regions? For example, while many works have identified low rank as a hallmark of LoP, why is the network unable to recover feature diversity under a new task distribution? Our work revisits the dynamics of gradient descent through the lens of dynamical systems theory to answer this. We posit that LoP is not simply a degradation of statistics, like rank or weight norm, but a topological entrapment under gradient descent. We seek to identify the specific geometric structures underlying the optimization landscape that act as ``sinks'' for gradient trajectories, restricting the model from accessing its full degrees of freedom.

The central contribution of this work is the formalization of \textbf{LoP manifolds}: invariant sub-manifolds of the parameter space that act as ``traps'' for gradient descent (\cref{fig:concept_lop_trap}). Once the optimization process steers parameters into these regions, the gradient flow becomes strictly tangent to the manifold, rendering escape impossible without external intervention. This formulation fills a critical theoretical gap: while prior literature has extensively cataloged the \emph{symptoms} of plasticity loss (e.g., weight explosion, rank collapse), our framework identifies the mathematical and geometric principles that provably make these states persistent and irreversible. Furthermore, we show the driver towards this entrapment—a fundamental \textbf{rank-plasticity tension}. We show that the very mechanisms that promote generalization in static settings, specifically, the compression of features toward low-rank structures, are the primary forces that steer the network into these LoP manifolds. Thus, the dynamics that maximize performance on the \emph{current} task inadvertently construct the barrier to adaptability for \emph{future} tasks.

\begin{figure}[t]
    \centering
    \scalebox{0.60}{%
    \begin{tikzpicture}[
        >={Stealth[round, length=3mm, width=2mm]},
        line width=1.8pt,
        >={Stealth[round, length=5mm, width=3.5mm]},
        font=\Large\bfseries,
    ]

        \definecolor{myred}{RGB}{165, 40, 60}
        \definecolor{myyellow}{RGB}{215, 155, 60}
        \definecolor{mygreen}{RGB}{80, 140, 80}
        \definecolor{myplane}{RGB}{240, 240, 240}
        \definecolor{mygrid}{RGB}{255,255,255}
        \definecolor{mygray}{RGB}{50, 50, 50}
        \definecolor{bgcream}{RGB}{250, 250, 240} 

        \begin{scope}[local bounding box=leftpanel,rotate around={15:(4.5, 1.25)}]
            \node[anchor=south, font=\LARGE\rmfamily] at (4.5, 6.8) {Independent tasks};
            
            \node at (2.5, 5.2) {\Huge \textcolor{mygray}{$\Theta$}};

            \fill[myplane] (0,0) -- (6,0) -- (9,2.5) -- (3,2.5) -- cycle;
            
            \foreach \i in {1,2,3,4} {
               \draw[mygrid, line width=0.2pt] ($(0,0)!\i/5!(3,2.5)$) -- ($(6,0)!\i/5!(9,2.5)$);
            }
            \foreach \i in {1,2,3,4,5} {
               \draw[mygrid, line width=0.2pt] ($(0,0)!\i/6!(6,0)$) -- ($(3,2.5)!\i/6!(9,2.5)$);
            }

            \draw[myplane!80!black, line width=1pt] 
                (0,0) -- (6,0) -- (9,2.5) -- (3,2.5) -- cycle;
             
            \node[font=\LARGE\rmfamily] at (0.8, 0.3) {$\mathcal{M}$};

            \coordinate (T0) at (4.5, 4.2);
            \fill[black] (T0) circle (3pt) node[above right=3pt] {$\theta_0$};

            \draw[->, mygreen] (T0) to[out=190, in=80, looseness=1.0] (1.9, 2.5)  node[left, black] {$\theta_C$}; 

            \draw[->, myyellow] (T0) to[out=-60, in=-110] (7.5, 6.0) node[right, black] {$\theta_B$};

            \draw[->, myred] (T0) to[out=-120, in=60,looseness=1.5] (6., 1.5) node[right, black] {$\theta_A$};
        \end{scope}

        \begin{scope}[xshift=9cm, local bounding box=rightpanel,rotate around={15:(4.5, 1.25)}]
            \node[anchor=south, font=\LARGE\rmfamily] at (4.5, 6.8) {Sequential tasks};
            
            \node at (2.5, 5.2) {\Huge \textcolor{mygray}{$\Theta$}};

            \fill[myplane] (0,0) -- (6,0) -- (9,2.5) -- (3,2.5) -- cycle;

            \foreach \i in {1,2,3,4} {
               \draw[mygrid, line width=0.2pt] ($(0,0)!\i/5!(3,2.5)$) -- ($(6,0)!\i/5!(9,2.5)$);
            }
            \foreach \i in {1,2,3,4,5} {
               \draw[mygrid, line width=0.2pt] ($(0,0)!\i/6!(6,0)$) -- ($(3,2.5)!\i/6!(9,2.5)$);
            }

            \draw[myplane!80!black, line width=1pt] 
                (0,0) -- (6,0) -- (9,2.5) -- (3,2.5) -- cycle;

            \node[font=\LARGE\rmfamily] at (0.8, 0.3) {$\mathcal{M}$};

            \coordinate (T0_seq) at (4.5, 4.2);
            \fill[black] (T0_seq) circle (3pt) node[left=3pt] {$\theta_0$};

            \coordinate (TA_seq) at (6., 1.5);
            \coordinate (TB_seq) at (3.0, 1.4);
            \coordinate (TC_seq) at (4.0, 0.8);

            \draw[->, myred] (T0_seq) to[out=-120, in=60,looseness=1.5] (TA_seq);
            \node[above left] at (TA_seq) {$\theta_A$};

            \draw[->, myyellow] (TA_seq) to[out=-160, in=10, looseness=1.2] (TB_seq) node[above left] at (TB_seq) {$\theta_C$};

            \draw[->, mygreen] (TB_seq) to[out=210, in=190, looseness=1.8] (TC_seq);
            \node[right] at (TC_seq) {$\theta_C$};

        \end{scope}
    \end{tikzpicture}
    }
    
    \caption{\textbf{Conceptual illustration of LoP as a topological trap.} The high-dimensional parameter space $\Theta$ contains low-dimensional, invariant \emph{LoP manifolds} $\mathcal{M}\subset \Theta$ . \textbf{Left (Independent tasks):} During standard learning, trajectories starting at $\theta_0$ can explore the full space (e.g., $\theta_B, \theta_C$), though implicit biases may eventually drive them toward simpler representations on $\mathcal{M}$ ($\theta_A$). \textbf{Right (Sequential tasks):} In continual learning, early tasks drive parameters onto the manifold ($\theta_0 \to \theta_A$). Once trapped, gradients for subsequent tasks become strictly tangent to $\mathcal{M}$, constraining future dynamics to this restricted subspace ($\theta_A \to \theta_B \to \theta_C\to\dots $) and preventing the recovery of plasticity needed for new data distributions.}
    \label{fig:concept_lop_trap}
\end{figure}

\paragraph{Summary of Contributions.}
In this work, we formalize the mechanics of plasticity loss and validate our theory across varied architectures. Our specific contributions are:
\begin{itemize}[leftmargin=*]
    \item \textbf{A Dynamical Systems Definition of LoP (\cref{sec:existence_lop_manifold}).} We move beyond symptomatic definitions to propose a formal definition of LoP as the entrapment of optimization trajectories within specific invariant sub-manifolds of the parameter space.
    \item \textbf{Identification of Trap Mechanisms (\cref{sec:existence_lop_manifold}).} We theoretically characterize two primary classes of invariant manifolds: \textit{Frozen-Unit Manifolds} ($\mathcal{M}_F$), caused by activation saturation, and \textit{Cloned-Unit Manifolds} ($\mathcal{M}_C$), caused by representational redundancy. We prove that standard gradient-based optimization cannot escape these manifolds once entered.
    \item \textbf{The Rank-Plasticity Tension (\cref{sec:emergence_lop}).} We derive a theoretical connection between feature rank dynamics and plasticity. We show that properties widely considered beneficial for generalization (e.g., neural collapse) act as attractive forces toward LoP manifolds.
    \item \textbf{Empirical Validation and Mitigation (\cref{sec:mitigation_recovery}).} We validate our framework through extensive experiments on MLPs, ResNets, and ViTs. We further analyze how architectural interventions (normalization) and targeted perturbations (noise injection, dropout) affect the stability of these manifolds, providing a principled basis for recovering plasticity.
\end{itemize}

\subsection{Background and Related Work}%
\label{sec:background}

\paragraph{Loss of Plasticity (LoP).}  
A network is said to suffer a {loss of plasticity} when, after some period of training, it can no longer acquire new information as effectively as a freshly‑initialised model of the same architecture.  LoP has been documented in a variety of continual‑learning and reinforcement‑learning settings~\citep{dohare2024loss,nikishin2022primacy}.  Crucially, LoP is distinct from catastrophic forgetting: performance on {past} tasks may remain intact while the ability to learn {future} tasks degrades~\citep{lyle2023understanding}.  Typical symptoms include exploding weight norms, growing numbers of dead (saturated) units, and a collapse of the effective rank of hidden representations~\citep{dohare2021continual,papyan2020prevalence}.

\vspace{-7pt}
\paragraph{Previous explanations.}  
Early accounts linked LoP to individual pathologies, e.g., weight‑norm growth or activation sparsity. However, these factors alone failed to consistently explain the phenomenon~\citep{lyle2022understanding}.  A more recent view connects LoP to a degeneration of the network’s {neural tangent kernel} (NTK) that is once the NTK becomes low‑rank, many directions in function space receive negligible gradient and can no longer be learned~\citep{lyle2023understanding}.  This perspective suggests that LoP is multi‑faceted with diverse surface‑level defects (e.g., dead units, duplicated features) sharing the common consequence of reducing the network’s effective degrees of freedom.

\vspace{-7pt}
\paragraph{Geometric Singularities and Learning Dynamics.}
The connection between overparameterization, reduced dimensionality, and learning difficulties has deep roots in the analysis of neural network geometry. Hierarchical models exhibit {singularities} that are regions in parameter space where the mapping from parameters to function is not unique (e.g., due to unit duplication or vanishing). Foundational work by \citet{fukumizu2000local} and \citet{amari2006singularities} demonstrated that these singularities cause the Fisher Information Matrix to degenerate, leading to slow learning dynamics (plateaus) as gradient descent is attracted to these regions.

\vspace{-7pt}
\paragraph{Implicit Bias and Stochastic Dynamics.}
Recent work highlighted how the optimization algorithm itself contributes to the collapse towards simpler representations. \citet{chen2023stochastic} analyzed the implicit bias of Stochastic Gradient Descent (SGD), showing that gradient noise induces an attractive force towards these singular regions (termed ``Invariant Sets''). This ``Stochastic Collapse'' suggests that the tendency towards LoP states is exacerbated by the stochastic nature of the optimization process, even if the regions are unstable under deterministic gradient descent. Furthermore, \citet{wang2024comprehensive} empirically demonstrated that maintaining {trainability} (ability to fit new data) does not guarantee {generalizability} (performance on unseen data), emphasizing the need for methods that restore genuine plasticity.

\section{LoP Manifolds: Traps for Gradient Descent}
\label{sec:existence_lop_manifold}
\label{sec:preliminaries}
In this section, we lay the groundwork for our analysis by defining Loss of Plasticity (LoP) within a dynamical systems framework and recalling the standard definitions for feed-forward neural networks and back-propagation. The stability of LoP manifolds, a crucial concept for understanding their persistence, will be discussed later in \cref{sec:existence_lop_manifold}.
Let $\theta\in\Theta\subseteq\R^p$ represent the parameters of a neural network. We consider training on a stream of data $\{(x_i,y_i)\}_{i=1}^N$ using gradient descent or its stochastic variants. The objective is typically to minimize a loss $\sum_{i=1}^N \Loss(\hat{y}_\theta(x_i),y_i),$ which we can succinctly refer to as $\Loss(\theta)$. This allows us to define LoP based on the  trajectory of parameters $\theta(t)$ in the parameter space $\Theta$ as driven by the negative gradient in the loss landscape. As illustrated in the right panel of \cref{fig:concept_lop_trap}, we are interested in scenarios where this trajectory becomes confined to a lower-dimensional subspace $\mathcal{M} \subset \Theta$.


\begin{definition}[LoP Manifold]
\label{def:lop}
A manifold $\mathcal{M}\subset\Theta$ induces LoP if the gradient of the loss function is tangent to the manifold at every point on the manifold. That is, $\nabla_\theta\Loss(\theta)\in T_\theta\mathcal{M}$ for all $\theta\in\mathcal{M}$, where $T_\theta\mathcal{M}$ denotes the tangent space of $\mathcal{M}$ at $\theta$. This tangency condition ensures that once the gradient flow enters $\mathcal{M}$, it remains within $\mathcal{M}$ under the dynamics of gradient flow $\frac{d\theta(t)}{dt} \;=\; -\nabla_\theta\Loss\bigl(\theta(t)\bigr).$
\end{definition}

\begin{remark}
\label{rem:flow-vs-gd}
Definition~\ref{def:lop} is stated in terms of continuous-time gradient flow
$\dot\theta = -\nabla_\theta \Loss(\theta)$: the tangency condition
$\nabla_\theta \Loss(\theta) \in T_\theta \mathcal{M}$ ensures that the flow
starting in $\mathcal{M}$ remains in $\mathcal{M}$. For a general curved
manifold this need not imply invariance under discrete updates
$\theta_{k+1} = \theta_k - \eta_k \nabla_\theta \Loss(\theta_k)$, as discretization step can lead to an escape from the manifold. However, for \emph{affine} LoP manifolds,
both gradient flow and (stochastic) gradient descent dynamics are invariant
once they enter $\mathcal{M}$, as the discretization step cannot  lead to an escape.
\end{remark}

\begin{remark}
If the conditions in \cref{def:lop} hold irrespective of the specific data distribution generating the loss $\Loss$, which we can think of as functional LoP, and is our primary area of interest. Such LoP arises from the network architecture and gradient descent dynamics alone and is particularly relevant as it persists even if the task or data distribution evolves.
\end{remark}

Given these definitions, we can formalize existence of these LoP manifolds, restricting subsequent learning. We present a central theorem that jointly addresses LoP arising from frozen and duplicate units.
The intuition is that once units become unresponsive (frozen) or perfectly redundant (cloned), they tend to remain so under standard gradient-based optimization. 
\begin{theorem}
\label{prop:frozen_duplicate_lop}
Let $G=(V,E)$ be the network’s computational DAG and let $\theta=\{\theta_{uv}:(u\to v)\in E\}\in\Theta$ denote the edge parameters.
\vspace*{-3pt}
\begin{enumerate}
\item \textbf{Frozen-unit manifold $\mathcal{M}_F$.}
Assume there exists $F\subset V$ such that, for all finite inputs encountered, each $v\in F$ is persistently saturated ($f'(z_v)=0$). Then the gradients w\.r.t. all incoming parameters to $v$ vanish on any mini-batch, so those coordinates remain fixed; writing the linear constraints as $\theta_{\mathrm{in}}(v)=\mathrm{const}$ for all $v\in F$, the affine subspace $\mathcal{M}_F:=\{\theta:\ \theta_{\mathrm{in}}(v)=\mathrm{const}\ \forall v\in F\}$ satisfies $\nabla\mathcal{L}(\theta)\in T_\theta\mathcal{M}_F$ and GD/SGD updates initialized in $\mathcal{M}_F$ remain in $\mathcal{M}_F$.
\vspace*{-2pt}
\item \textbf{Cloning manifold $\mathcal{M}_C$.}
Assume a partitioning of nodes into disjoint blocks $\{S_1,\dots,S_k\}$ exists with following properties. For every ordered block pair $(S_i,S_j)$, we have the linear equalities $\sum_{v\in S_j}\theta_{uv}=\sum_{v\in S_j}\theta_{u'v}$ for all $u,u'\in S_i$ (equal row-sums) and $\sum_{u\in S_i}\theta_{uv}=\sum_{u\in S_i}\theta_{uv'}$ for all $v,v'\in S_j$ (equal column-sums). Let $\mathcal{M}_C$ be the affine subspace of $\Theta$ consisting of all $\theta$ satisfying these constraints. If $\theta\in\mathcal{M}_C$, then (i) all units within any block share the same forward values on any input, (ii) all units within any block share the same backpropagated errors on any input, and therefore (iii) the per-edge gradients are constant across edges connecting the same block pair, i.e., for any $(u,v)$ and $(u',v')$ with $u,u'\in S_i$ and $v,v'\in S_j$, $\partial \mathcal{L}/\partial \theta_{uv}=\partial \mathcal{L}/\partial \theta_{u'v'}$. Hence $\nabla\mathcal{L}(\theta)\in T_\theta\mathcal{M}_C$ and GD/SGD updates initialized in $\mathcal{M}_C$ remain in $\mathcal{M}_C$.
\end{enumerate}
\end{theorem}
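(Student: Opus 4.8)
The plan is to treat the two parts independently, since each reduces to showing that a system of linear constraints defining an affine subspace is preserved by the gradient flow, which by Definition~\ref{def:lop} amounts to checking that $\nabla\mathcal{L}(\theta)$ is tangent to that subspace at every point of it. For an affine subspace $\mathcal{M}=\{\theta : A\theta = b\}$, the tangent space at any point is $\ker A$, so tangency of the gradient is equivalent to $A\,\nabla\mathcal{L}(\theta)=0$ for all $\theta\in\mathcal{M}$; since the gradient flow and its Euler discretizations (GD/SGD) only move $\theta$ in directions of the form $-\nabla\mathcal{L}$, this also gives the ``stays in $\mathcal{M}$'' claim for the discrete updates. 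So in both parts the work is purely to show that the relevant gradient coordinates vanish (Part~1) or coincide in the pattern dictated by the constraints (Part~2), on any mini-batch loss.

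For Part~1 (frozen units), I would unwind back-propagation: for $v\in F$ the pre-activation derivative $f'(z_v)=0$ by assumption for every input in the stream, hence the backpropagated error $\delta_v$ — which is $f'(z_v)$ times the incoming error signal — is identically zero on any mini-batch. The gradient with respect to an incoming edge parameter $\theta_{uv}$ is $\delta_v$ times the (post-activation) value at $u$, so it vanishes as well. Therefore $\partial\mathcal{L}/\partial\theta_{uv}=0$ for every $(u\to v)\in E$ with $v\in F$, i.e., the gradient has zero components on exactly the coordinates that the constraints $\theta_{\mathrm{in}}(v)=\mathrm{const}$ fix; equivalently $\nabla\mathcal{L}(\theta)\in T_\theta\mathcal{M}_F$. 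This part is essentially mechanical once one writes out the chain rule.

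For Part~2 (cloning), the argument is an induction over the layers of the DAG to establish claims (i) and (ii), from which (iii) and tangency follow. For (i): proceeding in topological order, the pre-activation of any $v\in S_j$ is $\sum_u \theta_{uv}\,a_u$; grouping the incoming sum by source block $S_i$ and using that all $a_u$ for $u\in S_i$ are already equal (inductive hypothesis on earlier layers) together with the equal-column-sums condition $\sum_{u\in S_i}\theta_{uv}=\sum_{u\in S_i}\theta_{uv'}$, the pre-activation of $v$ is independent of which $v\in S_j$ we picked, so after applying $f$ all units in $S_j$ share the same forward value. For (ii): proceeding in reverse topological order, the backpropagated error at $u\in S_i$ is $f'(z_u)$ times $\sum_v \theta_{uv}\,\delta_v$; grouping by target block and using that the $\delta_v$ are equal within each block (inductive hypothesis on later layers) and the equal-row-sums condition $\sum_{v\in S_j}\theta_{uv}=\sum_{v\in S_j}\theta_{u'v}$, the sum $\sum_v\theta_{uv}\delta_v$ is the same for all $u\in S_i$; combined with $f'(z_u)$ being block-constant by (i), the errors are block-constant. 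Then (iii) is immediate: $\partial\mathcal{L}/\partial\theta_{uv}=\delta_v\,a_u$ depends only on the blocks of $u$ and $v$. Finally, the constraint operator $A$ defining $\mathcal{M}_C$ takes differences of edge parameters within each block-pair, and since $\nabla\mathcal{L}$ is constant across each such block-pair, $A\,\nabla\mathcal{L}=0$, giving $\nabla\mathcal{L}(\theta)\in T_\theta\mathcal{M}_C$ and hence invariance under GD/SGD.

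The main obstacle is the bookkeeping in Part~2: one must be careful that the two induction directions (forward for values, backward for errors) are set up over the DAG's topological order in a way that the equal-column-sums condition is exactly what's needed to push the forward induction through one layer and the equal-row-sums condition is exactly what's needed for the backward induction, and that these are consistent for block pairs $(S_i,S_j)$ that are not adjacent in the DAG (where the corresponding $\theta_{uv}$ may be zero, making the constraints vacuous but still consistent). A secondary subtlety is handling inputs, biases, and output nodes as boundary cases of the induction — inputs and any non-partitioned nodes can be treated as singleton blocks, which makes all the sum conditions trivially hold there.
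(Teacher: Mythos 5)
Your proposal is correct and follows essentially the same route as the paper's proof in Appendix B: a forward induction over a topological order of the DAG for claim (i), a reverse induction for claim (ii), the factorization $\partial\mathcal{L}/\partial\theta_{uv}=h(u)\,\delta(v)$ for (iii), and the observation that block-constant gradient directions lie in the tangent (kernel) space of the affine constraints, hence GD/SGD invariance. You also correctly match the equal-incoming-sum (column) condition to the forward pass and the equal-outgoing-sum (row) condition to the backward pass, which is consistent with the theorem's stated constraints.
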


Note that both LoP manifolds $\mathcal{M}_F$ and $\mathcal{M}_C$ are defined as  linear LoP manifolds in the sense of \cref{def:lop}. Formal proofs and further details are provided in \cref{app:cloning_manifold_details}.

\vspace{-7pt}
\paragraph{Proof idea.}
\emph{Frozen units.} If a unit stays in a regime with $f'(z_v)=0$ for all finite inputs (e.g., $\tanh$ with very large $\lVert\theta_{\mathrm{in}}(v)\rVert$ or ReLU with a large negative bias), then $\partial \mathcal{L}/\partial \theta_{\mathrm{in}}(v)\approx 0$; its incoming parameters are fixed, so updates are tangent to $\mathcal{M}_F$.\
\emph{Cloning via redistribution.} The key idea the row/column-sum equalities mean total incoming/outgoing  weight from/to any block is redistributed within each block pair $(S_i,S_j)$. Thus, the total contribution to the forward and backward of each unit within a block remains identical, implying the forward and backward cloning (properties (i) and (ii)) within blocks. Thus, per-edge gradients $d\mathcal{L}/d\theta_{uv}=h(u)\,\delta(v),$ are therefore by forward and backward symmetries across the blocks the gradients will be constant for any two units in these blocks $(u,v)\in S_i\times S_j$. These block-wise constant gradients trivially satisfy the row-sum and column-sum equalities, and hence are tangent to $\mathcal{M}_C$, and first-order updates remain on both manifolds.
\begin{remark}
It is important to note that the Duplicate Manifold $\mathcal{M}_D$ (defined formally via Incoming and Outgoing Equitable partitions in \cref{app:cloning_manifold_details}) represents a significant generalization of the cloning concepts typically discussed in literature. Prior analyses of singularities \citep{fukumizu2000local} or invariant sets \citep{chen2023stochastic} generally define cloned units by requiring their associated weights to be strictly identical (the block-wise constant condition in our terminology). Our framework proves that invariance under gradient descent holds even under the relaxed condition of equitability, where individual weights may differ as long as specific incoming and outgoing sums are maintained. This significantly broadens the class of structures identified as LoP manifolds.
\end{remark}
\begin{remark}\label{rem:optimizers}
The cloning LoP manifold naturally lends itself to gradient descent and stochastic gradient descent, regardless of the order which we process the samples, will remain strictly within the manifold. This extends to virtually all variations of gradient descent based optimizations, namely Stochastic Gradient Descent (SGD), SGD with momentum, and Adam, as long as the optimizer is initialized at the onset of cloning. The only exception to this is weight decay which could break some symmetries. This fact can be empirically observed across our cloning experiments, showing that across a wide range of optimization schemes the model remains trapped onto to the LoP manifold.
\end{remark}

Remarkably, the theorem admits a \emph{modular} version, which  allows us to create practical cloning certificate for modern  architectures (see \cref{app:modular_cloning} for more details). 
\begin{theorem}[Modular Cloning (informal)]
\label{cor:modular_cloning}
This cloning property can be decomposed modularly. If a network is composed of individual modules (e.g., layers or blocks), and each module locally satisfies the cloning invariance properties—namely, (1) cloned inputs produce cloned outputs (Forward Invariance), (2) cloned backward signals at the outputs produce cloned backward signals at the inputs (Backward Invariance), and (3) gradient updates preserve these invariances (Persistence)—then the entire network resides on a cloning manifold, provided the cloning profiles (partitions) are consistent at the interfaces between modules.
\end{theorem}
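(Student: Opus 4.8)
The plan is to prove the statement by a double induction over the module DAG: a forward pass in topological order to establish clone-equality of every activation, and a backward pass in reverse topological order to establish clone-equality of every backpropagated error, followed by an assembly step that lifts the per-module Persistence hypotheses to a global tangency statement in the sense of \cref{def:lop}. Throughout, the object of interest is the intersection $\mathcal{M}:=\bigcap_m \mathcal{M}_m$ of the local cloning manifolds $\mathcal{M}_m$ of the modules; the interface-consistency hypothesis guarantees that the output cloning profile (partition) produced by each module coincides with the input cloning profile expected by every downstream module it feeds, so that $\mathcal{M}$ is a well-defined affine subspace — each $\mathcal{M}_m$ is affine by assumption, and interface matching only records which coordinates of adjacent constraint sets name the same partition.

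First I would run the forward induction. The network's input nodes carry a (possibly trivial) partition; processing modules in topological order, Forward Invariance applied to module $m$ turns clone-equal inputs — supplied by the inductive hypothesis on all predecessor modules together with interface consistency — into clone-equal outputs. Hence for any input, every hidden activation $h(u)$ is constant across units $u$ lying in a common block. Symmetrically, for the backward induction: the loss seeds clone-equal backward signals at the output layer (cloned logits receive cloned error; if the output partition is trivial this is vacuous), and processing modules in reverse topological order, Backward Invariance turns clone-equal output errors — supplied inductively by all successor modules plus interface consistency on the adjoint side — into clone-equal input errors. Hence every backpropagated error $\delta(v)$ is constant across units $v$ in a common block.

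The assembly step then mirrors the monolithic argument behind \cref{prop:frozen_duplicate_lop}: for an edge $(u\to v)$ inside a module with $u\in S_i$ and $v\in S_j$, the chain rule gives $\partial\mathcal{L}/\partial\theta_{uv}=h(u)\,\delta(v)$, and both factors depend only on the blocks $S_i,S_j$ by the two inductions, so the gradient is block-wise constant across each module. By the Persistence hypothesis each module's local constraint set is preserved by a GD/SGD step, i.e.\ the module-restricted gradient lies in $T_\theta\mathcal{M}_m$; since these constraint sets couple only through interface variables that are treated consistently, the full gradient lies in $T_\theta\mathcal{M}=\bigcap_m T_\theta\mathcal{M}_m$, so $\mathcal{M}$ satisfies the tangency condition of \cref{def:lop}. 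Optimizer state (momentum, Adam moments) inherits the same block-wise symmetry when initialized at clone onset, exactly as in the remark following \cref{prop:frozen_duplicate_lop}.

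The main obstacle I expect is the bookkeeping at the interfaces in a genuine DAG rather than a chain. When a module's output fans out into several modules (or into a residual addition), I must check that they all expect the same partition, that the adjoint at a fan-out point — a sum of incoming gradient contributions — is still block-constant (this uses that a sum of block-constant vectors over a common partition is block-constant), and that skip connections or shared-parameter modules do not impose a coarser partition on one branch incompatible with another. Pinning down the definition of a cloning profile and of interface matching precisely enough that the gluing of affine subspaces and the closure of Persistence under that gluing become automatic is the delicate part; once that formalism is fixed, the forward and backward inductions are routine. I would therefore relegate the exact definitions and the per-module verification for the standard building blocks (linear layers, element-wise nonlinearities, normalization, attention) to \cref{app:modular_cloning}, turning the theorem into a ready-to-use cloning certificate for modern architectures.
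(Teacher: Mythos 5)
Your proposal is correct and follows essentially the same route as the paper's proof of the composition theorem in \cref{app:modular_cloning}: a forward induction in topological order using (MC1) plus interface matching, a backward induction in reverse order using (MC2), and an assembly step in which (MC3) makes the per-module gradients tangent to the local cloning manifolds, whose product (your intersection of lifted constraint sets is the same affine subspace, since each module's constraints involve disjoint parameter coordinates) is therefore invariant under first-order updates. The fan-out/adjoint-summation subtleties you flag are exactly what the paper's ``matching profiles'' condition is designed to absorb, so no new idea is missing.
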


To empirically test the validity of the cloning manifold and their potential escape mechanisms we conduct \emph{cloning experiments}. First, a base model (e.g., an MLP) is trained on a specific task. Subsequently, a larger model is constructed by expanding the base model. This expansion involves increasing the width of the model for MLPs, the number of channels for CNNs and ResNets, and the feature dimension for ViTs. The weights of the cloned model are initialized in such a way that its activations are identical to those of the base model. This effectively creates blocks of units that have identical activations. Next, we train both the base and cloned models on the same task and monitor their training progress through the loss curve, the effective rank of representations, and the cloning $R^2$ score. \Cref{fig:cloning_perfect_appendix} presents the results of such experiments on MLPs, shedding light on the dynamics within and escapes from these LoP manifolds.
The empirical validation of these claims, such as demonstrating perfect cloning under specific initializations or the persistence of dead units, can be found in \cref{fig:cloning_perfect_appendix} and \cref{app:empirical_evidence_appendix}. Notably, \Cref{rem:optimizers} predicts that Adam, despite fundamental differences with SGD, also fails to escape the manifold, which is validated by our experiments. 

\vspace{-7pt}
\paragraph{Escaping the LoP manifolds with perturbations.}
While a comprehensive theoretical analysis of the stability of empirically observed LoP manifolds is beyond the scope of this work, our empirical investigations indicate that these manifolds are frequently unstable or resemble saddle-like shapes.  Certain types of noise or symmetry-breaking operations can help models escape these manifolds. We highlight two common perturbations:
(1) \emph{Noisy SGD} is a modification of SGD that adds Gaussian noise to the computed gradients before parameter updates. The magnitude of this injected noise is usually proportional to the norm of the gradient, with its initial relative strength gradually decreasing over successive steps. By applying this noise after cloning, we can determine whether the model can escape the LoP manifold or if it will fall back. 
(2) \emph{Dropout} introduces stochasticity in the forward and backward passes by randomly zeroing activations. For cloned units, this breaks the symmetry because different clones might be active in different dropout masks, leading to divergent gradient updates. This is supported by experiments where dropout helps a model escape an artificially induced cloning manifold (see \cref{fig:dropout_escapes_cloning_appendix}).

\begin{figure}[!ht]
    \centering
    \includegraphics[width=0.3\linewidth]{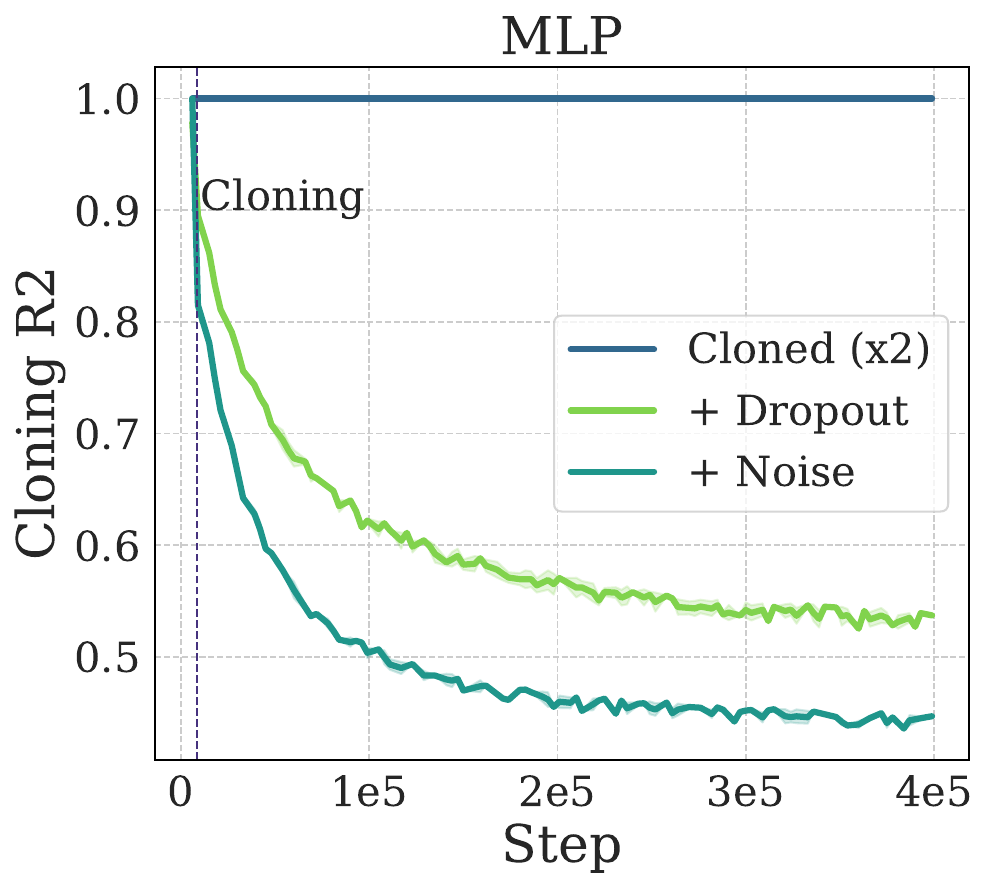 }
    \includegraphics[width=0.3\linewidth]{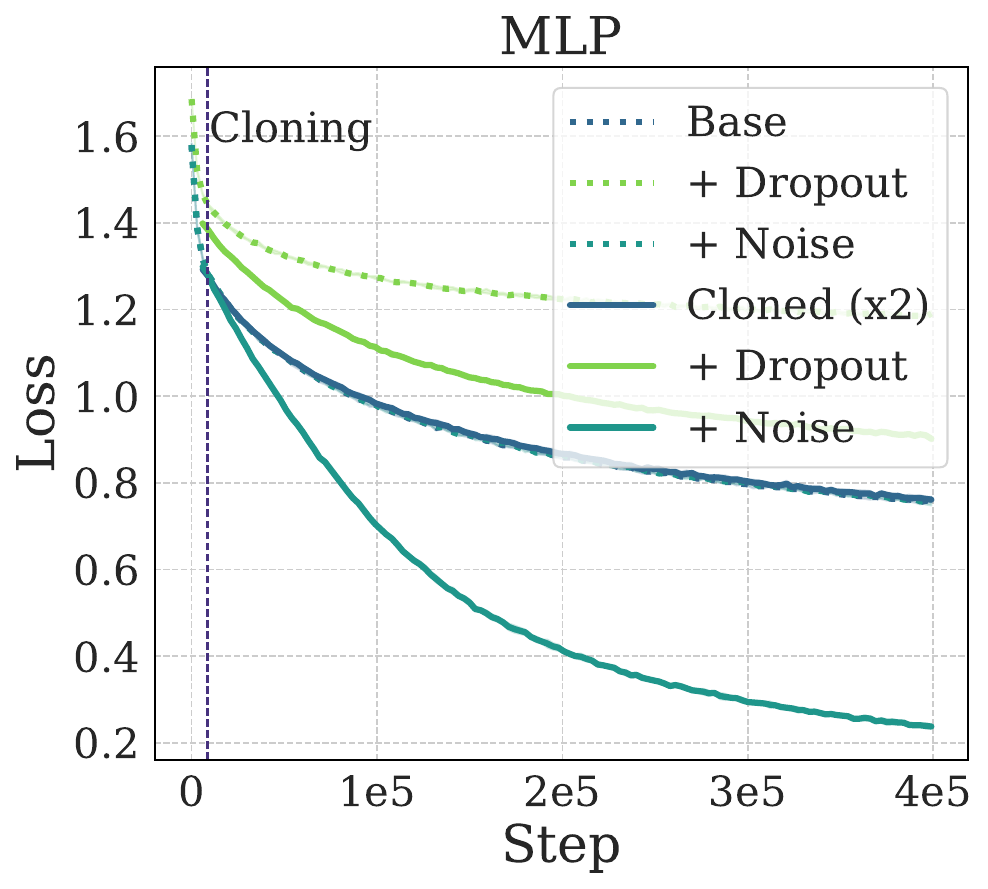}
    \includegraphics[width=0.3\linewidth]{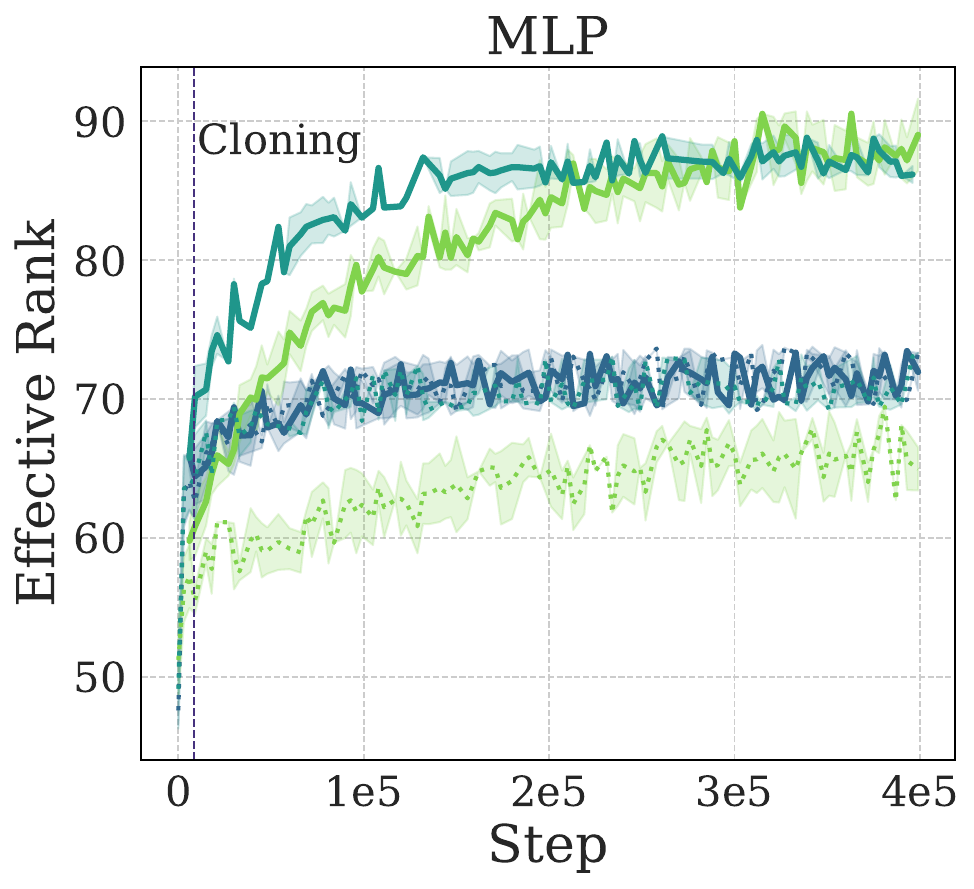}
    \vspace{-.3cm}
    \caption{Cloning MLPs experiments. The empirical data validates \cref{prop:frozen_duplicate_lop} on duplicate manifold LoP. The cloned network dynamics remain confined in the base network manifold when using SGD, however using Noisy SGD or Dropout the dynamics can escape the manifold. 
    \textbf{Left}: Cloning $R^2$ score quantifies the proportion of variance in individual unit activations within a cloned block that is explained by the mean activation of that block. An $R^2$ score of 1 indicates perfect cloning (units in a block are nearly identical), while a 0 score indicates no explained variance. See \cref{subsec:core_methodologies} (\Cref{app:empirical_evidence_appendix}) for the precise formula and calculation details.
    \textbf{Middle}: Training loss comparison. \emph{Cloned loss} refers to the loss of the cloned model during its training phase, while \emph{base loss}  refers to the loss of the original base model, which continues training for comparison.
    \textbf{Right}: Effective rank evolution showing representational diversity.}
    \label{fig:cloning_perfect_appendix}
    \label{fig:dropout_escapes_cloning_appendix}
\end{figure}
Both noisy SGD and dropout act as symmetry-breaking operations. In the case of dropout, both forward and backward passes are asymmetric for cloned units. For noisy SGD, the backward pass (gradient update) becomes asymmetric. This asymmetry causes the parameters of notionally cloned units to slowly diverge. Remarkably, in our MLP experiments, even a small amount of gradient noise, e.g., a single step with noise magnitude $0.01$ relative to gradient norms, suffices to initiate escape from an LoP manifold, though stronger noise generally leads to faster escape. In contrast, in settings such as Vision Transformers, while the model could escape from the manifold with a small perturbation, it did not move very far from it. More experimental studies into this direction would be vital to better understand the stability of these LoP manifolds.  

\section{Emergence of Loss of Plasticity from Linear–Nonlinear Rank Dynamics}
\label{sec:emergence_lop}
\label{sec:emergence_rank_dynamics}

Having established the existence of LoP manifolds, we now discuss the mechanisms within standard training that drive their formation. The optimization process naturally follows a trajectory beginning with an expansion of representational diversity, as features propagate through nonlinear layers and under some configurations can become increasingly decorrelated \citep{poole2016exponential}. More recently, it was shown that the layerwise kernel map has a globally attracting fixed-point structure that determines whether representations become orthogonal, partially aligned, or fully aligned based on the activation and hyper-parameters~\citep{joudaki2025emergence}. This orthogonalization has also been attributed to batch normalization~\citep{daneshmand2021batchnorm}. However, this expansion is counterbalanced by a drive toward geometric simplicity in the final phase of training, as captured by neural collapse \citep{papyan2020prevalence}, where within-class variability vanishes and features converge to a low-rank subspace spanned by class means. We argue that this geometric collapse—while beneficial for generalization in static settings—inherently steers the model toward the low-rank LoP manifolds we identified. Thus, the very dynamics that maximize separability on the current task provide the direct mechanism for the emergence of LoP.

To diagnose whether features are diversifying or compressing during training, we track a smooth surrogate of the rank of the feature correlation matrix. Exact rank is numerically unstable, because it is not a continuous nor a differentiable map from the matrix space. Therefore, we use differentiable proxies to rank such as Rényi-2 rank, $\mathrm{er}_2(M)=(\mathrm{tr}\,M)^2/\|M\|_F^2$, or the Shannon effective rank, $\mathrm{er}(M)=\exp(H(\lambda(M)/\mathrm{tr}\,M))$. Both increase when the eigenmass is evenly distributed and decrease when dominated by a few values. Note that both of these surrogates are maximized when matrix $M$ has equal eivenvalues and minimized when it a rank-$1$ matrix. The following theorem offers an insight into how nonlinear layers contribute to the formation of features.

\begin{theorem}[Rank gain and decorrelation potential]
\label{thm:rank_gain}
Assume $\phi \in L^2(\mathcal{N}(0,1))$ is nonlinear and normalized such that $\mathbb{E}[\phi(Z)]=0$ and $\mathrm{Var}(\phi(Z))=1$. Let $C$ be a pre-activation correlation matrix. For an activation $\phi$, define the correlation kernel $K_\phi(r)=\mathrm{Corr}(\phi(x),\phi(y))$ where $(x,y)$ are jointly Gaussian with correlation $r$. The nonlinearity acts entrywise on correlations, producing $K_\phi(C)$. We define the \emph{decorrelation strength} $\gamma_\phi = \frac{1-K_\phi'(0)}{1+K_\phi'(0)}$ and the \emph{decorrelation potential} $\Psi(C) = \sum_{i \ne j} C_{ij}^2 (1-C_{ij}^2)$. The Rényi-2 effective rank is non-decreasing according to the bound below:
\[
\frac{\mathrm{er}_2(K_\phi(C))}{\mathrm{er}_2(C)}
\;\ge\; 1 + \gamma_\phi \frac{\Psi(C)}{\|C\|_F^2}.
\]
The ratio is strictly larger than $1$  when the potential $\Psi(C)$ is vanishing (all off-diagonal $C_{ij} \in \{0, \pm 1\}$) and the the activation is linear ($\gamma_\phi$ is zero).
\end{theorem}

Note that the theorem implies that any correlations within range $|C_{ij}|\in(0,1)$ provide a potential $\Psi(C) > 0$ that the nonlinearity consumes to increase rank. The rate of this increase is governed by the scalar $\gamma_\phi \in [0, 1]$, depends on the activation and quantifies the nonlinearity's  ability to de-correlate inputs and thereby increase rank. Larger $\gamma_\phi$ guarantees a larger rank gain for the same input spectrum. For a rigorous derivation, see \cref{app:rank_gain_proof}.

\vspace{-7pt}
\paragraph{Implication for emergence of frozen units.}
In practice, the first and second moments of pre-activations drift during training, modulating the effective activation function. This modulation directly impacts the decorrelation strength $\gamma_\phi$. We find that $\gamma_\phi$ is \emph{monotonic} with respect to the saturation parameters: for ReLU, making the effective bias more negative increases $\gamma_\phi$; for $\tanh$, increasing the effective gain does the same. Consequently, optimizing for feature diversity (high decorrelation strength) creates a pressure to push units into regimes where the derivative vanishes on most inputs. This explains why training that effectively restores rank also creates ``frozen'' or ``dead'' units.

\vspace{-7pt}
\paragraph{Implication for creation of duplicate or cloned units.}
Neural collapse is a widely observed endpoint in which the penultimate features are low-rank, and the class means form a simplex or Equiangular Tight Frame (ETF) structure~\citep{papyan2020prevalence}. The theorem makes a precise prediction regarding this state: for the rank to stabilize (ratio $\approx 1$) while retaining nonlinearity ($\gamma_\phi > 0$), the decorrelation potential $\Psi(C)$ must vanish. This forces correlations to converge to the fixed points $0$ (orthogonal features) or $\pm 1$ (aligned or anti-aligned features). Thus, the dynamics of rank expansion naturally drive the network toward a mixture of orthogonal subspaces and cloned units, matching the geometry of LoP manifolds.

The two implications above provide a theoretical perspective on why duplicate features and frozen units frequently emerge at or near convergence, and why the resulting representation lies close to LoP manifolds, as proven in \cref{prop:frozen_duplicate_lop}.

\begin{remark}
Our analysis explicitly links LoP to the ``Rich'' or feature learning regime. In the ``Lazy'' learning regime (associated with the Neural Tangent Kernel), weights remain close to initialization, maintaining high-rank connectivity and avoiding feature compression. Conversely, the ``Rich'' regime is characterized by significant weight evolution and the emergence of low-rank structures \citep{chizat2019lazy, woodworth2020kernel}. The frozen units and cloned manifolds we identify are essentially symptoms of this feature learning process carried to an extreme. Thus, LoP can be viewed as a pathology specific to the Rich regime: the same mechanism that enables deep networks to learn efficient representations (data compression) eventually traps them in low-rank manifolds that hinder future adaptability.
\end{remark}

\subsection{Experimental Validation}
We validate our theory with experiments on MLP, CNN, ResNet, and ViT architectures, training them continually on a sequence of 40 5-class tasks derived from Tiny ImageNet. We track the emergence of LoP symptoms, including dead units, duplicate units, and effective rank degradation. Full experimental details are provided in \cref{app:empirical_evidence_appendix}.
The experimental evidence confirms our intuitions (see \cref{app:empirical_evidence_appendix} and \cref{fig:emergence_lop_symptoms_cl,fig:norm-rank-nG,fig:dupfrac-LoP,fig:dupfrac-LoP-dead-dup}): depending on the architecture, we observe that a degradation in the model performance is concomitant with the emergence of duplicate or frozen units, and a corresponding decrese in representational diversity. 
\begin{figure}[!ht]
    \centering
    \includegraphics[width=0.23\linewidth]{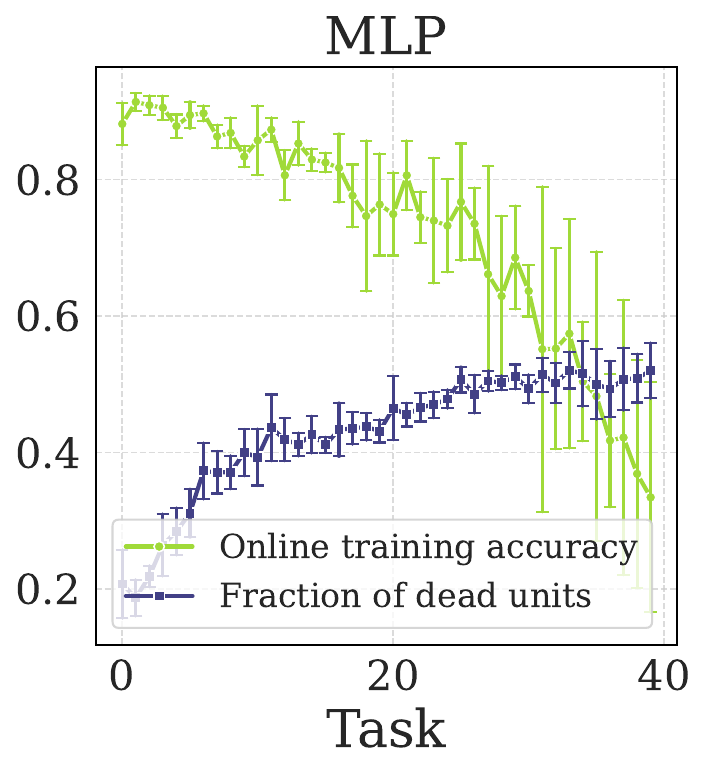}
    \includegraphics[width=0.23\linewidth]{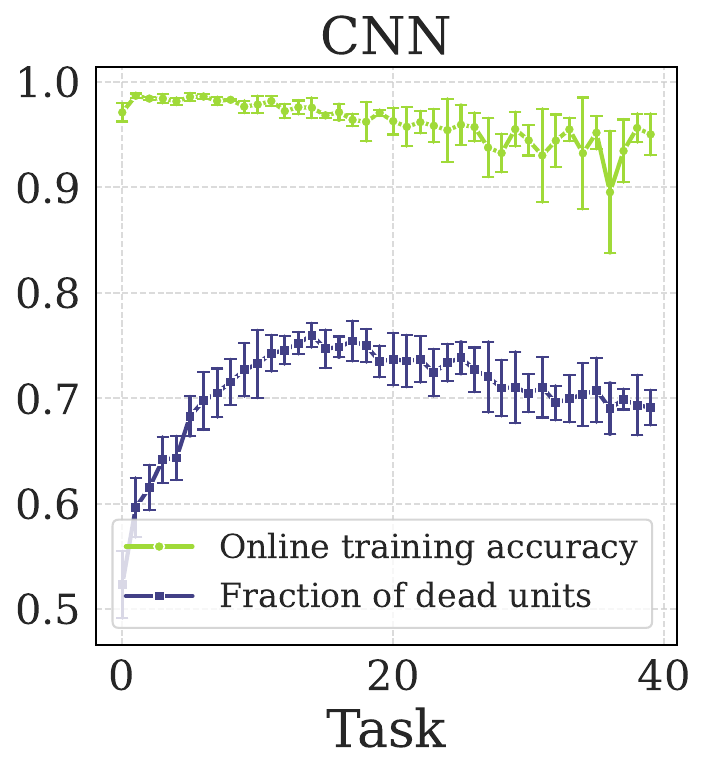}
    \includegraphics[width=0.23\linewidth]{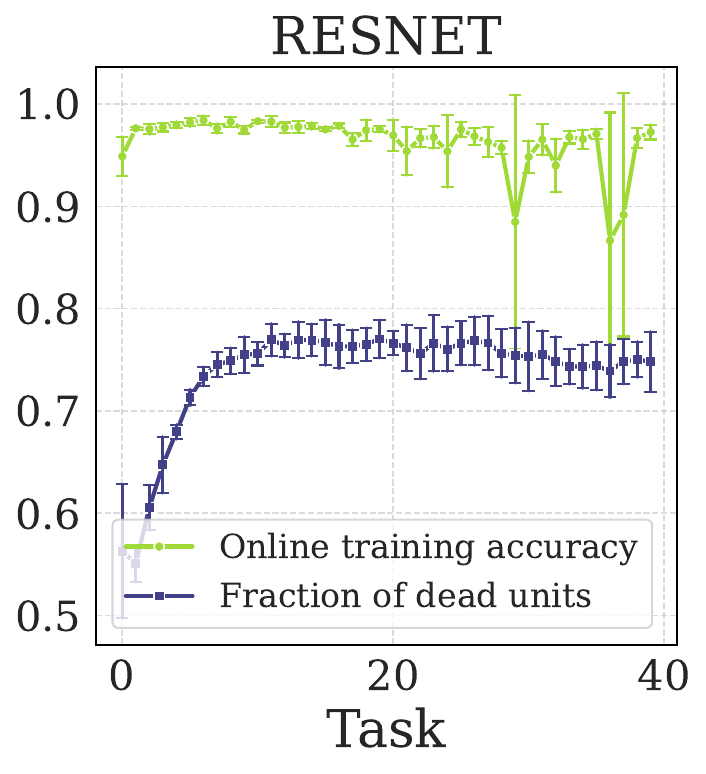}
    \includegraphics[width=0.24\linewidth]{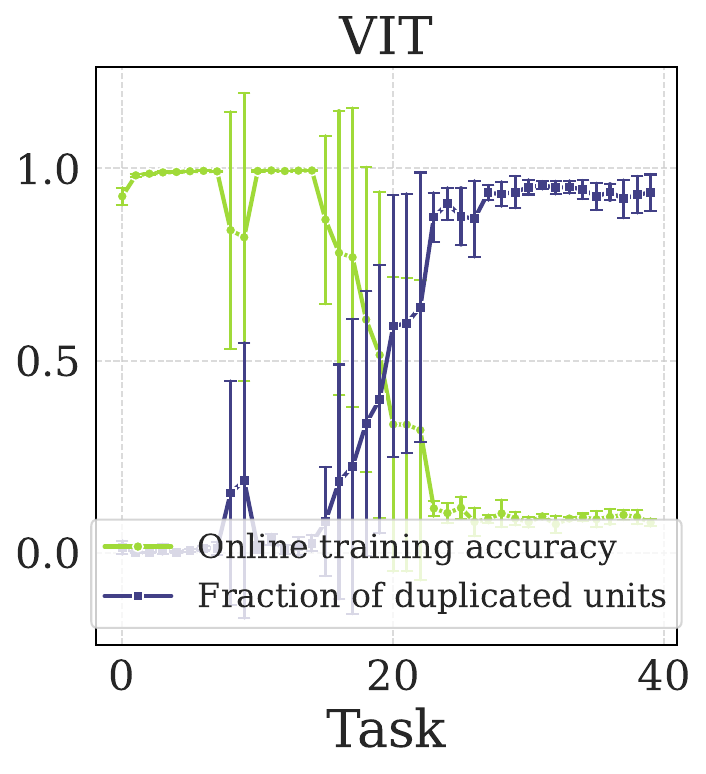}
        \vspace{-.3cm}
    \caption{Causes and symptoms of Loss of Plasticity emerging during continual learning. The plots illustrate (across different architectures like MLP, CNN, ResNet, and ViT from left to right) an increase in the fraction of dead or duplicate units during training, coincidental with a decrease in training accuracy. These are key indicators of LoP. (Details of experimental setup in \cref{app:empirical_evidence_appendix}).}
    \label{fig:emergence_lop_symptoms_cl}
\end{figure}
\begin{figure}[!ht]
    \centering
    \includegraphics[width=0.24\linewidth]{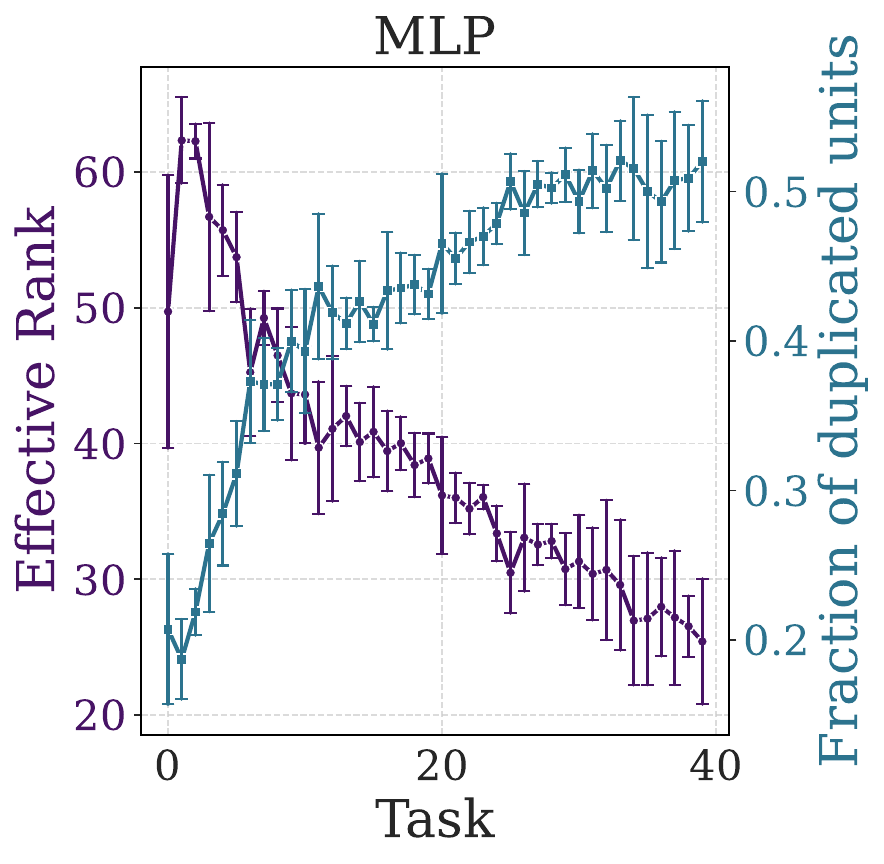}
    \includegraphics[width=0.24\linewidth]{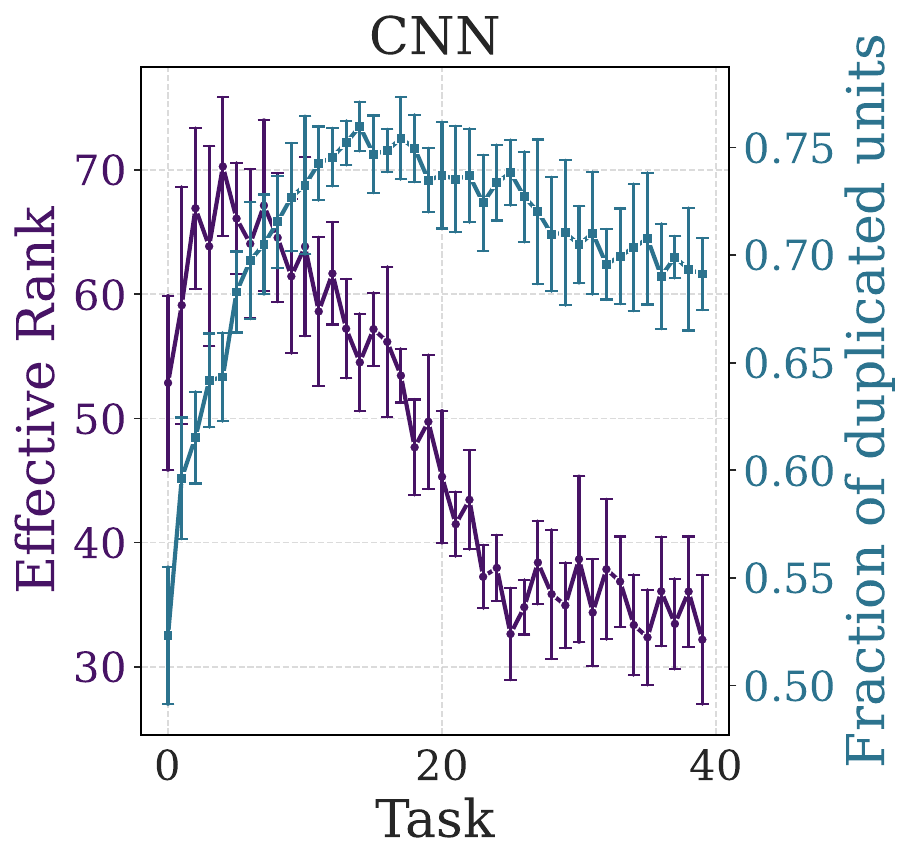}
    \includegraphics[width=0.24\linewidth]{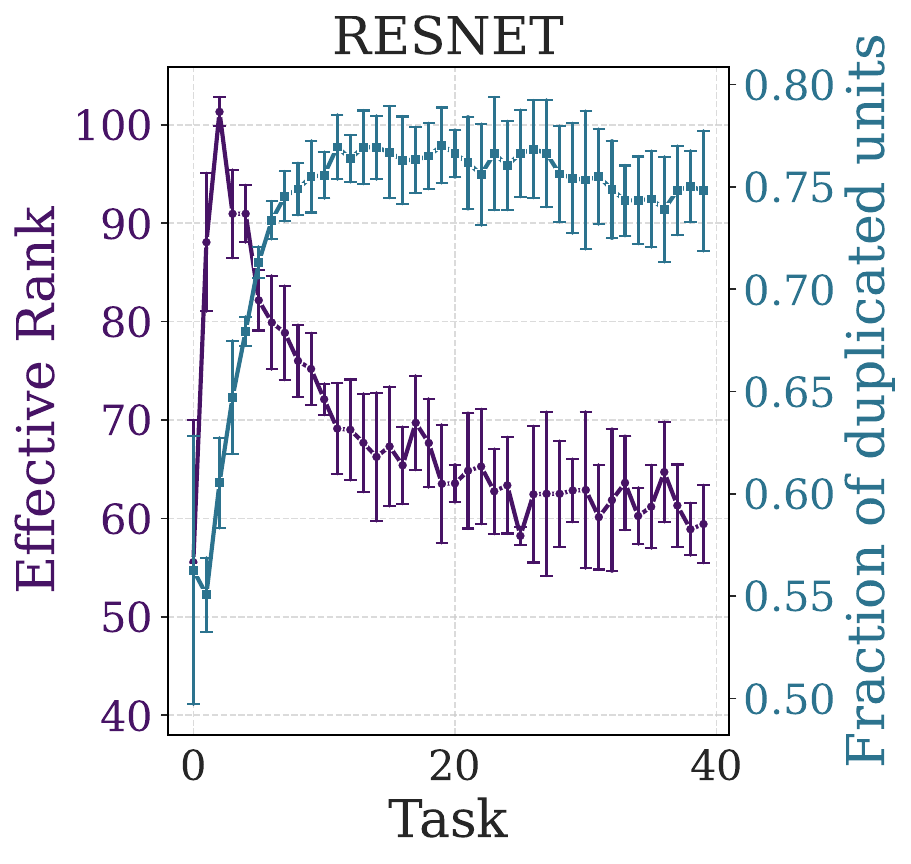}
    \includegraphics[width=0.24\linewidth]{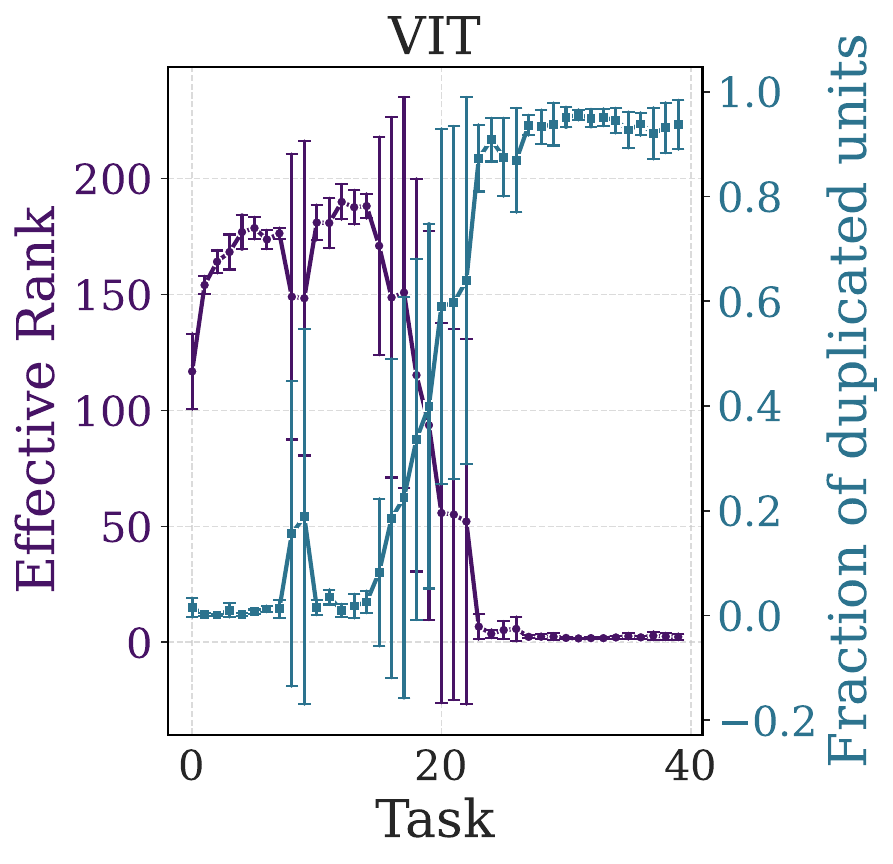}
        \vspace{-.3cm}
    \caption{Co-evolution of Effective rank and LoP symptoms, such as dead or duplicate units in the network during continual training. (Experimental details in \cref{app:empirical_evidence_appendix}).}
    \label{fig:norm-rank-nG}
\end{figure}
Our inquiry so far highlights two key pathways to LoP common symptoms:
    (1) Emergence of \textbf{duplicate features}, where distinct computational units, or groups of units, within a network layer effectively learn to become identical or highly correlated, as a potential consequence of attempting to lower representational rank, 
    (2) Emergence of \textbf{frozen or dead features}, where weights and biases of a unit stop learning, as a result of attempting to maximize rank increase (leading to saturation) or to flatten the loss landscape around the current parameters.
\section{Mitigation and Recovery Strategies}
\label{sec:mitigation_recovery}
\label{sec:mitigate}
Having discussed the emergence of LoP symtoms and the existence of LoP manifolds, we now turn to strategies for preventing their formation or recovering from them if they have already occurred.

\vspace{-7pt}
\paragraph{Preventing LoP with Normalization.}
As established in \cref{sec:emergence_lop}, one primary cause for activations becoming frozen is their pre-activations drifting into saturated regions. It is therefore natural to expect that normalization layers like Batch Normalization (BN) or Layer Normalization (LN) can help prevent this. By standardizing pre-activation statistics, these layers can keep activations operating in their more dynamic, non-linear range. Even with learnable affine parameters ($\gamma, \beta$) after normalization, these parameters often act to maintain pre-activations within a ``healthy'' range, rather than pushing them into extreme values that cause saturation (e.g., consistently negative for ReLU).
This is widely supported by empirical evidence (see \cref{app:empirical_evidence_appendix}, Figures like \cref{fig:emergence_lop_symptoms_cl} in \cref{sec:emergence_lop}, and \cref{fig:normalizations-recovering} ). BN and LN generally help maintain higher effective rank of representations throughout training (as seen in \cref{fig:norm-rank-nG}) and concurrently prevent frozen/dead features and excessive feature duplication from becoming dominant. This is consistent with theory showing that normalization layers bias the representation Gram matrix toward isometry (the identity) --- a bias shown to persist during and after training, and to counteract the rank collapse associated with LoP~\citep{joudaki2023isometry}.
\begin{figure}[h!]
    \centering
    \includegraphics[width=0.24\linewidth]{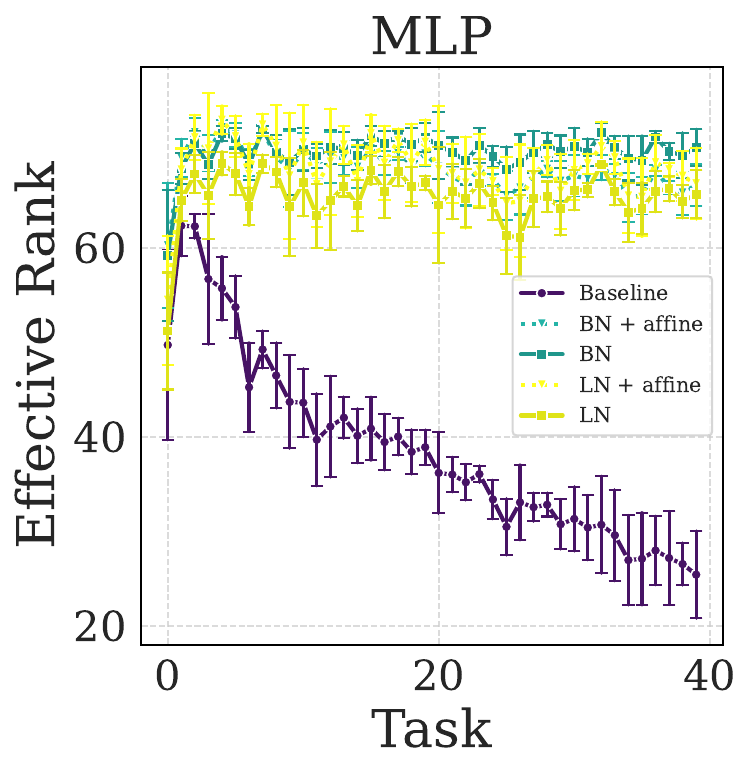}
    \includegraphics[width=0.24\linewidth]{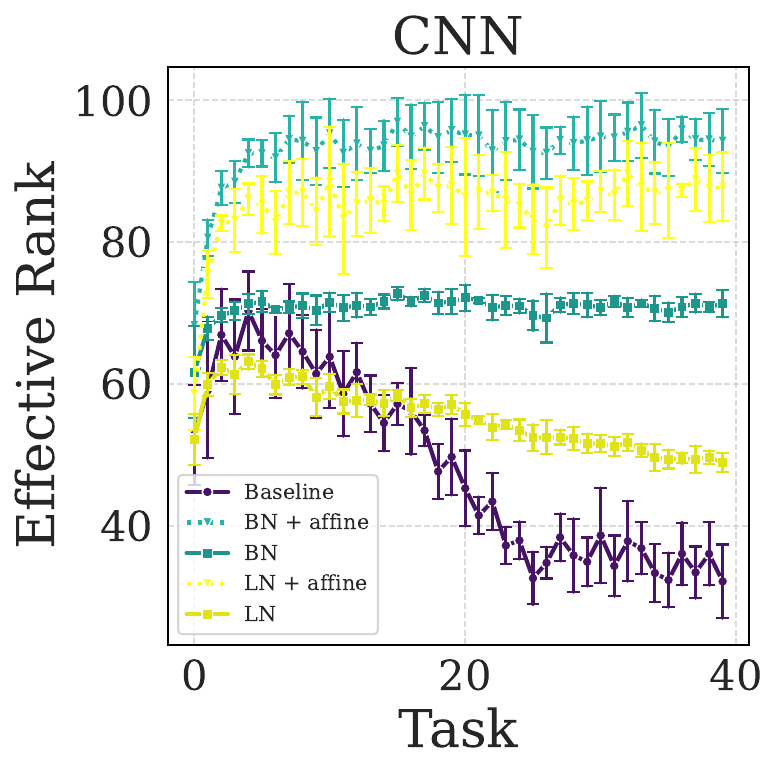}
    \includegraphics[width=0.24\linewidth]{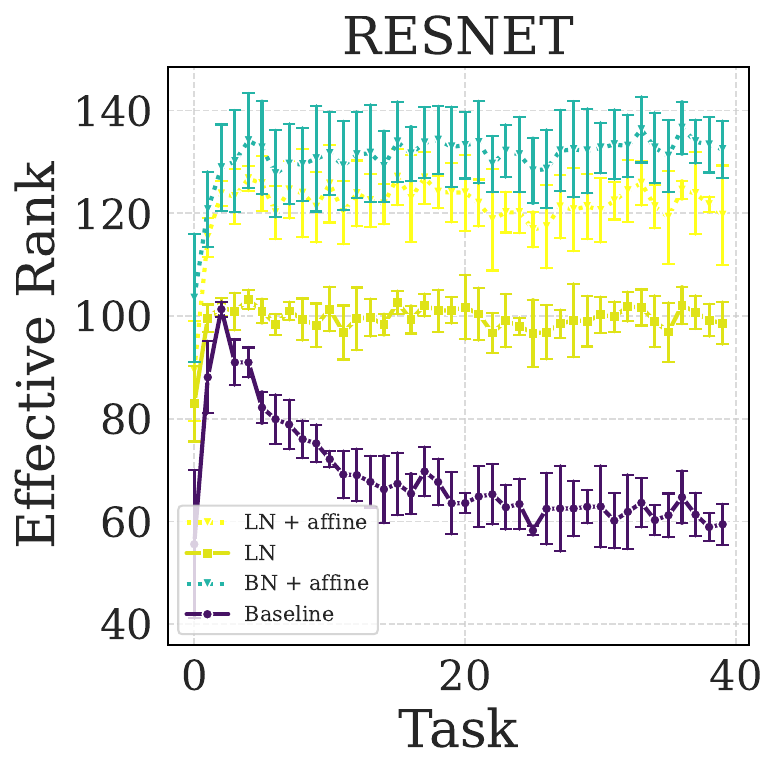}
\includegraphics[width=0.24\linewidth]{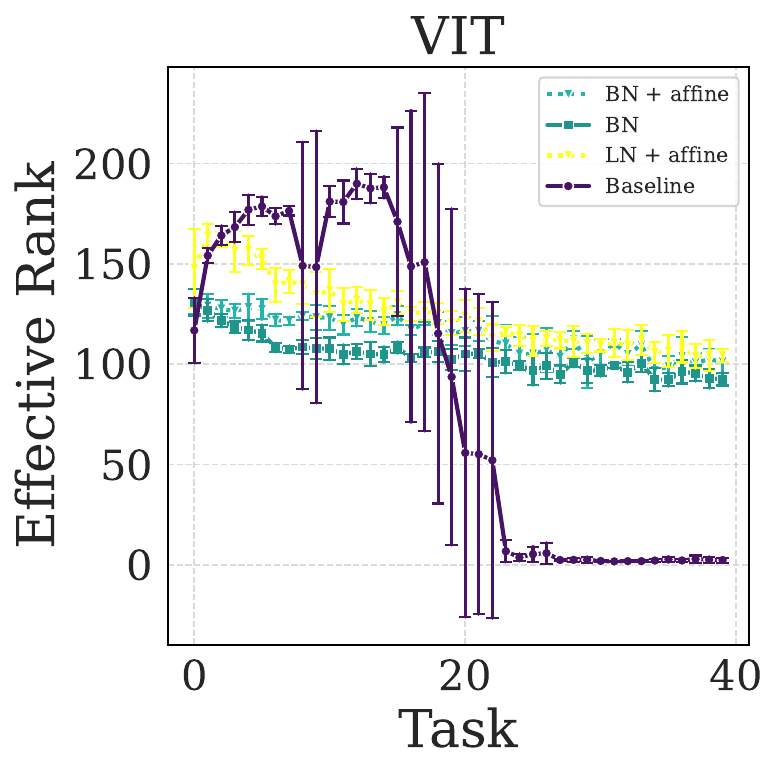}
    
    \caption{Evolution of the Effective rank during training for architectures with and without normalization layers. Dotted lines represent normalization with affine parameters. (Experimental details in \cref{app:empirical_evidence_appendix}).}
    \label{fig:norm-rank}
\end{figure}
\vspace{-7pt}
\paragraph{Recovery from LoP via Perturbations.}
What if LoP conditions, such as widespread frozen units or extensive feature cloning, have already set in? In such cases, mitigation strategies like normalization, which act proactively, may no longer be sufficient to reverse the state, as indicated by cloning experiments where normalization alone doesn't break perfect, established clones. However, similar to our discussion on manifold stability (\Cref{sec:existence_lop_manifold}), injecting noise into the training process can be a viable recovery strategy.
The principle is that if the LoP manifold is unstable or saddle-like, perturbations can allow the optimizer to find an escape route. Noisy SGD and the more sophisticated Continual Backpropagation \citep{dohare2024loss} are examples of such mechanisms. 
We test recovery from LoP on the ``bit-flipping'' benchmark, an online regression task with a non-stationary target function designed to challenge a model's adaptability. A detailed description of the task is in \cref{subsec:experiment_paradigms}.
In order to demonstrate the recovery potential of noise injection, we design an experiment where the first half of 5M samples is processed by plain Stochastic Gradient Descent (SGD) and in the second half we switch the learning rule to Continual Backpropagation (CBP). 
\Cref{fig:CBP-scale-rank} clearly shows a reversal in trend when the switch happens: whereas SGD causes the representations' rank to drop and the online training loss to increase, CBP amplifies the features' rank and reduces the online training loss, effectively \emph{recovering plasticity.}
Additionally \cref{fig:CBP-scale-dup} illustrates how aspects like rank and feature duplication are affected by the dimensionality of the model. The disparity between SGD and CBP is only increased by the model size, hinting that the model scale might aggravate the symptoms of LoP. For details see \cref{subsec:experiment_paradigms}.  

An interesting distinction arises when comparing artificially induced LoP (like explicit cloning) with naturally emerging LoP symptoms in challenging scenarios like continual learning. In controlled cloning setups (e.g., as conceptualized in \cref{fig:dropout_escapes_cloning_appendix}, dropout can be effective in breaking the artificially imposed symmetry and allowing units to diverge. In contrast, in our continual Bit Flipping experiments, the role of dropout can be mixed or even detrimental. While it might prevent some forms of LoP, it can also hinder the consolidation of new knowledge or exacerbate forgetting if it too aggressively discards learned information relevant to the new task. This suggests that the optimal strategy for maintaining or recovering plasticity might be context-dependent.

\section{Conclusion}
\label{sec:discussion}
This work presented a dynamical-systems framework for Loss of Plasticity (LoP) in deep neural networks. We formally defined LoP manifolds as regions in parameter space that trap gradient-based optimization, and identified two mechanisms for their formation: activation saturation leading to frozen units, and representational redundancy manifesting as cloned-unit manifolds. These LoP states are frequently characterized by reduced effective rank of representations. We investigated how normalization can mitigate LoP, and how perturbations like noise injection can facilitate escape, depending on manifold stability.

A key finding is the inherent tension between learning objectives in static and dynamic environments: properties that aid generalization on a fixed dataset, such as low-rank features or simplicity biases, can cause a loss of adaptability when learning extends over time or across changing tasks. This suggests continual learning needs mechanisms that actively preserve or regenerate representational diversity.

Several questions remain open. Theoretically, our analysis focused on linear or affine LoP manifolds; whether non-linear LoP manifolds exist and arise in practical training remains open. A fuller understanding of the stability conditions for different LoP manifolds---and the architectural or data conditions favoring one type over another---is also needed.

Numerically, the curvature normal to an LoP manifold is critical: for an unstable manifold with near-flat negative curvature, escape may require large perturbations or many training steps. Characterizing these curvatures and their impact on escape dynamics would be valuable.
While we have demonstrated that models can escape artificially cloned LoP manifolds with interventions like dropout or noise, the question remains: can a model, once recovered from such a state, explore the parameter space as effectively and find solutions as generalizable as a model trained from a fresh, random initialization? This is of practical importance: whether exploratory capacity can be fully restored after collapsing into a highly restricted subspace.

An intriguing outcome of this work is the connection between unit cloning---often studied in model compression or network analysis---and LoP in continual learning, which have largely been treated as separate fields. Our framework, particularly the theorems on cloned units, reveals a deep link, suggesting that insights and tools can transfer between these domains. This raises whether continual-learning techniques such as noisy backpropagation or Continual Backpropagation (CBP) \citep{dohare2024loss} could help model expansion or escaping cloned states in other scenarios.

Ultimately, understanding and overcoming LoP is crucial for AI systems that learn continuously and adapt robustly in an ever-changing world. By providing a mathematical characterization of fundamental barriers to such adaptation, we aim to enable new architectures and learning algorithms that sustain plasticity indefinitely, leading to truly lifelong learning agents.

\bibliography{refs}

\newpage 
\appendix

\appendix

\section{Theoretical Appendix}
\label{app:theory_appendix}
This section contains detailed proofs of theorems and lemmas, further theoretical derivations, and discussions extending the concepts presented in the main paper.

\subsection{Formal proof of the Rank Gain and Decorrelation Theorem}
\label{app:rank_gain_proof}

This section provides the rigorous derivation of Theorem~\ref{thm:rank_gain}, establishing a quantitative lower bound on the increase in Rényi-2 effective rank across a nonlinear layer.

\paragraph{Preliminaries: Hermite Expansion and Kernel Structure}

We consider an activation function $\phi \in L^2(\gamma)$, where $\gamma=\mathcal{N}(0,1)$. Let $\{h_k\}_{k\ge 0}$ be the orthonormal Hermite basis. The activation admits the expansion:
\[
\phi(z)=\sum_{k=0}^\infty a_k h_k(z), \qquad a_k=\mathbb{E}\big[\phi(Z)h_k(Z)\big].
\]
We follow the assumptions of Theorem~\ref{thm:rank_gain}, where $\phi$ is standardized such that $\mathbb{E}[\phi(Z)]=0$ (implying $a_0=0$) and $\mathrm{Var}(\phi(Z))=1$ (implying $\sum_{k \ge 1} a_k^2 = 1$).

The correlation kernel $K_\phi(r)$ is defined for jointly Gaussian $(X,Y)$ with correlation $r$:
\[
K_\phi(r) = \mathrm{Corr}\big(\phi(X),\phi(Y)\big).
\]
Using Mehler's identity, the kernel is given by:
\begin{equation}
\label{eq:Kphi_series_appendix}
K_\phi(r) = \sum_{k=1}^\infty a_k^2 r^k = \sum_{k=1}^\infty w_k r^k,
\end{equation}
where $w_k = a_k^2$. Key properties derived from this structure are:
\begin{enumerate}
    \item $w_k \ge 0$ for all $k$.
    \item $\sum_{k=1}^\infty w_k = 1$, thus $K_\phi(1)=1$ and $K_\phi(0)=0$.
    \item The linearity coefficient is $\alpha_\phi = K_\phi'(0) = w_1$. If $\phi$ is nonlinear, $\alpha_\phi < 1$.
    \item $|K_\phi(r)| \le |r|$ for $r \in [-1, 1]$.
\end{enumerate}

This Hermite expansion of the activation correlation kernel $K_\phi$ and the analysis of its fixed points follow the framework laid out by~\citet{joudaki2025emergence} (there applied to the sample kernel; here to the feature--feature correlation matrix under the Gaussian pre-activation assumption).

\paragraph{The Universal Quartic Gap Bound}

The key to the quantitative rank gain theorem is establishing a rigorous lower bound on the decorrelation gap $G(r) = r^2 - K_\phi(r)^2$.

\begin{lemma}[Universal Quartic Gap Bound]
\label{lem:quartic_bound}
Let $\alpha = K_\phi'(0)$. The decorrelation strength is defined as $\gamma_\phi = \frac{1-\alpha}{1+\alpha}$. For any correlation $r \in [-1, 1]$:
\begin{equation}
    r^2 - K_\phi(r)^2 \;\ge\; \gamma_\phi \cdot r^2(1-r^2).
\end{equation}
\end{lemma}

\begin{proof}
We aim to find the kernel that maximizes $|K_\phi(r)|$ for a fixed $\alpha$, subject to the constraints $w_k \ge 0$ and $\sum w_k = 1$.

Let $x = |r| \in [0, 1]$. Due to the non-negativity of $w_k$:
\[
|K_\phi(r)| = |\sum_{k=1}^\infty w_k r^k| \le \sum_{k=1}^\infty w_k |r|^k = K_\phi(x).
\]
We now seek an upper bound for $K_\phi(x)$. Since $x \in [0, 1]$, we have $x^k \le x^2$ for all $k \ge 2$.
\begin{align*}
K_\phi(x) &= w_1 x + \sum_{k=2}^\infty w_k x^k \\
&\le \alpha x + x^2 \sum_{k=2}^\infty w_k \\
&= \alpha x + x^2 (1-\alpha).
\end{align*}
Let $Q(x) = \alpha x + (1-\alpha)x^2$. This quadratic function is the extremal kernel that maximizes the magnitude for a given $\alpha$ under these constraints.

Now we bound the gap $G(r)$:
\[
G(r) = r^2 - K_\phi(r)^2 \ge x^2 - Q(x)^2.
\]
We must verify that $x^2 - Q(x)^2 \ge \gamma_\phi x^2(1-x^2)$.
\[
x^2 - [\alpha x + (1-\alpha)x^2]^2 \ge \frac{1-\alpha}{1+\alpha} x^2 (1-x^2).
\]
For $x \in (0,1)$, we divide both sides by the positive term $x^2 (1-x)$:
\[
\frac{1 - [\alpha + (1-\alpha)x]^2}{1-x} \ge \frac{1-\alpha}{1+\alpha}(1+x).
\]
We factor the LHS numerator as a difference of squares $1-Y^2 = (1-Y)(1+Y)$, where $Y = \alpha + (1-\alpha)x$.
Note that $1-Y = 1-\alpha - (1-\alpha)x = (1-\alpha)(1-x)$.
The LHS simplifies to:
\[
\text{LHS} = \frac{(1-\alpha)(1-x) [1 + \alpha + (1-\alpha)x]}{1-x} = (1-\alpha)[1+\alpha + x(1-\alpha)].
\]
The inequality becomes:
\[
(1-\alpha)[1+\alpha + x(1-\alpha)] \ge \frac{1-\alpha}{1+\alpha}(1+x).
\]
Assuming $\phi$ is nonlinear ($\alpha < 1$), we can divide by $(1-\alpha)$. We then multiply by $(1+\alpha)$:
\[
(1+\alpha)[1+\alpha + x(1-\alpha)] \ge 1+x.
\]
\[
(1+\alpha)^2 + x(1-\alpha)(1+\alpha) \ge 1+x.
\]
\[
1 + 2\alpha + \alpha^2 + x(1-\alpha^2) \ge 1+x.
\]
\[
1 + 2\alpha + \alpha^2 + x - \alpha^2 x \ge 1+x.
\]
\[
2\alpha + \alpha^2(1-x) \ge 0.
\]
Since $\alpha \ge 0$ (as $w_1=a_1^2$) and $x \in [0, 1]$, the inequality holds true. The boundary cases $r=0, r=\pm 1, \alpha=1$ hold trivially.
\end{proof}

\paragraph{Proof that rank increases under non-linearity}

We now connect the gap bound (Lemma~\ref{lem:quartic_bound}) to the increase in Rényi-2 rank.

\begin{proof}[Proof of Theorem~\ref{thm:rank_gain} (Rank Gain and Decorrelation Potential)]
Let $C$ be a $d\times d$ correlation matrix. The Rényi-2 effective rank is $\mathrm{er}_2(C) = (\text{tr}\,C)^2 / \|C\|_F^2$. Since $C$ and $K_\phi(C)$ are correlation matrices, $\text{tr}(C)=\text{tr}(K_\phi(C))=d$.

1.  Rank Ratio:
    The ratio of the effective ranks is:
    \[
    R(C) = \frac{\mathrm{er}_2(K_\phi(C))}{\mathrm{er}_2(C)} = \frac{d^2/\|K_\phi(C)\|_F^2}{d^2/\|C\|_F^2} = \frac{\|C\|_F^2}{\|K_\phi(C)\|_F^2}.
    \]

2.  Total Decorrelation Gap $\Delta(C)$:
    We define the total reduction in the squared Frobenius norm:
    \[
    \Delta(C) = \|C\|_F^2 - \|K_\phi(C)\|_F^2.
    \]
    Since $\|M\|_F^2 = d + \sum_{i\ne j} M_{ij}^2$ for a correlation matrix $M$:
    \[
    \Delta(C) = \sum_{i \ne j} (C_{ij}^2 - K_\phi(C_{ij})^2).
    \]

3.  Applying the Bound:
    We apply the Universal Quartic Gap Bound (Lemma~\ref{lem:quartic_bound}) to each term:
    \[
    \Delta(C) \ge \sum_{i \ne j} \gamma_\phi C_{ij}^2(1-C_{ij}^2) = \gamma_\phi \Psi(C).
    \]

4.  Bounding the Rank Ratio:
    We express the rank ratio using the gap $\Delta(C)$:
    \[
    R(C) = \frac{\|C\|_F^2}{\|C\|_F^2 - \Delta(C)}.
    \]
    Substituting the lower bound for $\Delta(C)$:
    \[
    R(C) \ge \frac{\|C\|_F^2}{\|C\|_F^2 - \gamma_\phi \Psi(C)} = \frac{1}{1 - \gamma_\phi \frac{\Psi(C)}{\|C\|_F^2}}.
    \]
    We use the inequality $(1-u)^{-1} \ge 1+u$, valid for $u \in [0,1)$.
    Let $u = \gamma_\phi \frac{\Psi(C)}{\|C\|_F^2}$. We know $u \ge 0$. We verify $u < 1$. Since $\Delta(C) \ge \gamma_\phi \Psi(C)$, we have $u \le \frac{\Delta(C)}{\|C\|_F^2}$. As $\|K_\phi(C)\|_F^2 \ge d > 0$, we have $\Delta(C) < \|C\|_F^2$. Thus $u < 1$.

    Applying the inequality:
    \[
    R(C) \ge 1 + \gamma_\phi \frac{\Psi(C)}{\|C\|_F^2}.
    \]

5.  Equality Conditions:
    The rank ratio equals 1 if and only if $\Delta(C)=0$. This implies $\gamma_\phi \Psi(C)=0$. This occurs if $\gamma_\phi=0$ (the activation is linear, $\alpha=1$) or if $\Psi(C)=0$ (all correlations $C_{ij} \in \{0, \pm 1\}$).
\end{proof}

\paragraph{Validation of the Quartic Gap Bound.}Figure~\ref{fig:validation} (Top Row) validates the Universal Quartic Gap Bound (\cref{lem:quartic_bound}). For representative activations, we plot the actual decorrelation gap $\Delta(c) = c^2 - K_\phi(c)^2$ against the theoretical lower bound $\gamma_\phi c^2(1-c^2)$ over the correlation domain $c \in [-1, 1]$.In all cases, the actual gap strictly dominates the lower bound. The gap vanishes only at the fixed points $c \in \{0, \pm 1\}$ or for linear functions ($\gamma_\phi=0$), confirming that nonlinear activations strictly reduce correlation magnitude (and thus increase rank) for any non-trivial input correlation structure.\paragraph{Validation of Rank Gain.}Figure~\ref{fig:validation} (Bottom Row) tests the Rank Gain Theorem (\cref{thm:rank_gain}). We generate a continuum of correlation matrices $C(\lambda)$ ranging from the identity matrix to a dense random matrix. We compute the actual Rényi-2 rank expansion ratio $R(C) = \mathrm{er}_2(K_\phi(C)) / \mathrm{er}_2(C)$ and compare it to the derived lower bound $1 + \gamma_\phi \Psi(C)/\|C\|_F^2$.The results show that the actual rank gain consistently exceeds the theoretical guarantee. The gain is positive whenever the decorrelation potential $\Psi(C)$ is non-zero, demonstrating that the nonlinearity actively consumes correlation structure to generate effective rank, with an efficiency bounded by $\gamma_\phi$.

\begin{figure}[h]
    \centering
    \includegraphics[width=0.85\textwidth]{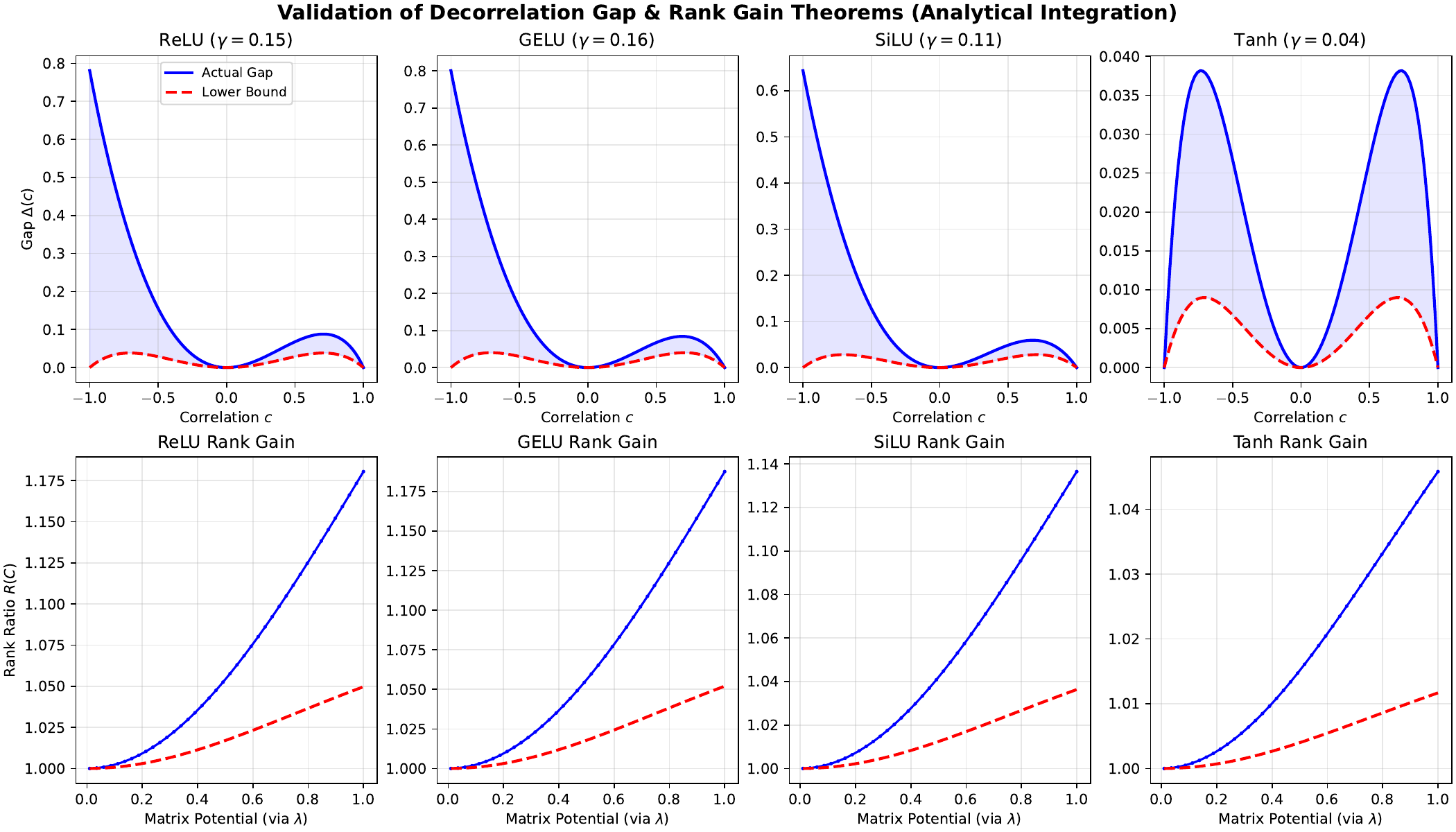}
    \caption{\textbf{Decorrelation and rank-gain under non-linearity.} Empirical analysis activations theoretical predictions about quadratic lower bound for decorrelatioin, and for rank gain potential. }
    \label{fig:validation}
\end{figure}

\subsection{Rank-plasticity tension: an empirical validation}
\label{app:numerical_validations_activations}

This section shows that maximizing the de-correlation strength $\gamma_\phi$, necessitates driving the activation function into regimes where the gradient vanishes (freezing).

In particular, we consider modulating activations  by scale $a$ and bias $b$: $\psi(z) = \phi(az+b)$, and analyze the de-correlation strength $\gamma(a,b) = \frac{(\mathbb{E}[\psi'(Z)])^2}{\text{Var}(\psi(Z))}$ as a function of shape parameters $\{a, b\}$.

Note that 
$\gamma(a,b) = \bigl(\mathbb{E}[\psi'(Z)]\bigr)^2 / \operatorname{Var}(\psi(Z))$ 
coincides with the derivative of the corresponding correlation kernel at $0$, 
i.e.\ $\gamma(a,b) = K'_{\psi}(0)$ after normalization. Thus, $\gamma(a,b)$ and 
the $\gamma_{\phi}$ used in Theorem~\ref{thm:rank_gain} are related by a simple 
monotone reparameterization.

Here we empirically validate that the bias $b$ for ReLU and scale $a$ for Tanh under standard Gaussian input to measure the trade-off between decorrelation and gradient availability. As shown in Figure \ref{fig:frozen}, the decorrelation coefficient $\gamma_\phi$ is strictly monotonic with respect to the modulation parameters for both activations. Crucially, the analysis reveals a severe rank-plasticity tension: approaching maximum decorrelation ($\gamma_\phi \to 1$) necessitates driving the activation into a ``frozen'' regime (dead zone for ReLU or saturation for Tanh) with probability approaching $1.0$, thereby effectively eliminating the gradient flow required for training. 

To extend this beyond ReLU and Tanh, Figure~\ref{fig:tension} visualizes the fundamental trade-off described in \cref{sec:emergence_rank_dynamics}. For all activations we plot the decorrelation strength $\gamma_\phi$ against the probability that the unit is frozen ($P(|f'|<\epsilon)$ where $\epsilon=0.01$). The results confirm that to achieve high rank expansion capabilities (large $\gamma_\phi$), in most cases the parameters must push the activation into a regime where the gradient vanishes for the vast majority of inputs, demonstrating the tension between feature learning and trainability.

\begin{figure}[h]
    \centering
    \includegraphics[width=0.9\textwidth]{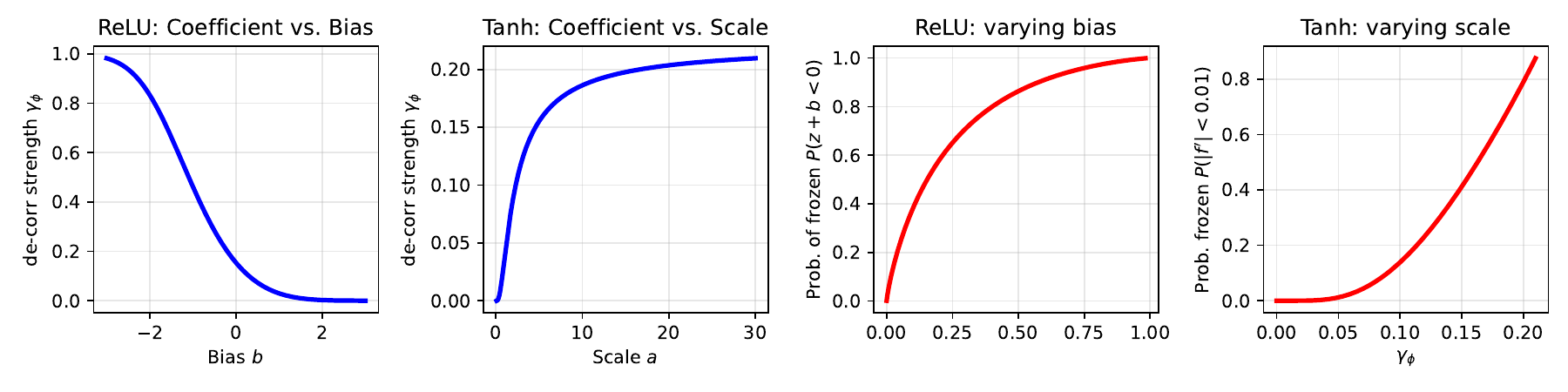}
    \caption{\textbf{The rank-plasticity tradeoff for ReLU and Tanh.} Empirical analysis of ReLU and Tanh activations confirms that increasing the decorrelation coefficient $\gamma_\phi$ (to improve rank) strictly requires increasing the probability that the unit is frozen (dead or saturated), approaching $1.0$ as $\gamma_\phi \to 1$.}
    \label{fig:frozen}
\end{figure}

\begin{figure}[h]
    \centering
    \includegraphics[width=0.8\textwidth]{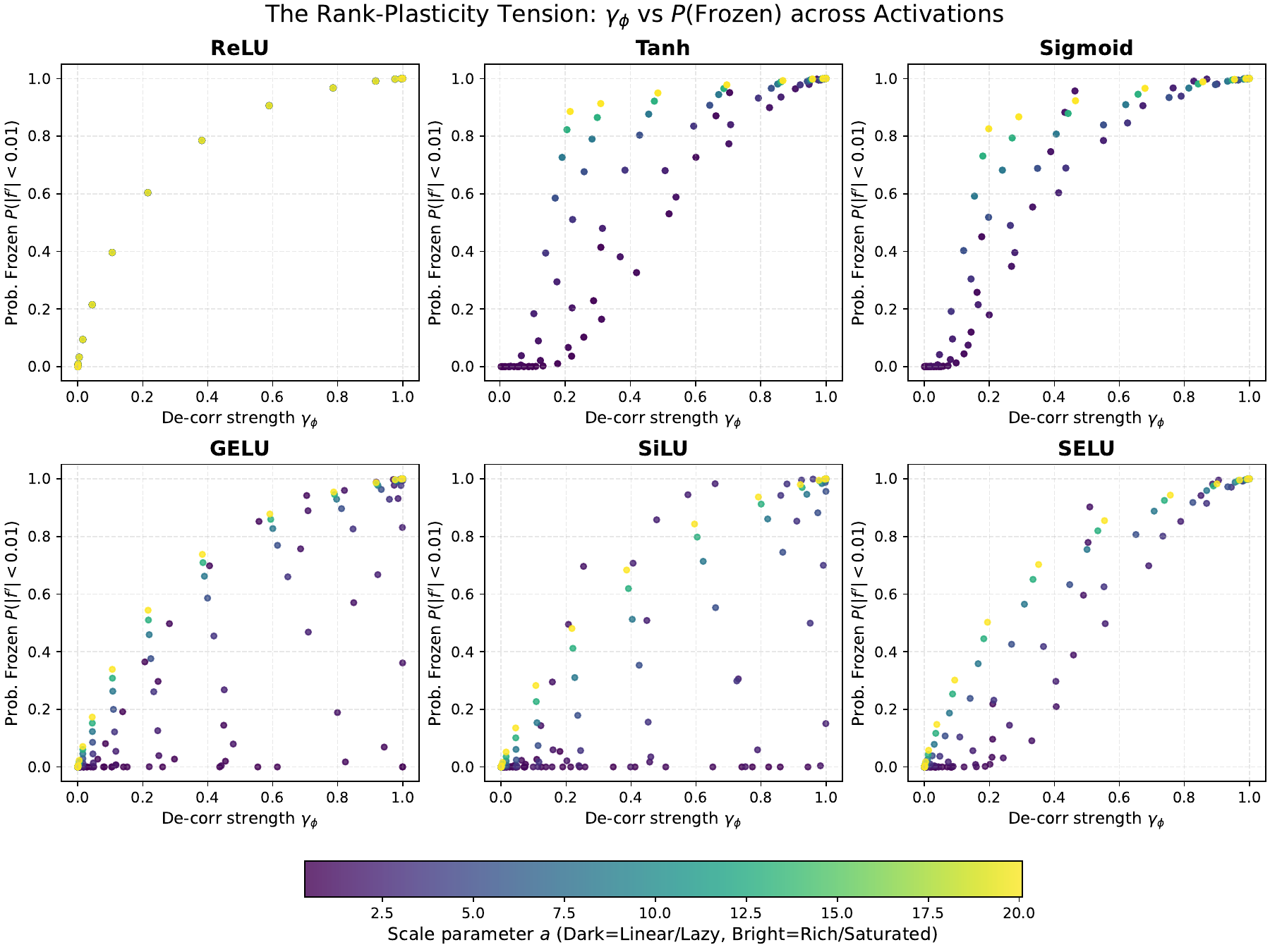}
    \caption{\textbf{The rank-plasticity tradeoff for general activations.} Empirical analysis of all activations $\gamma_\phi$ vs the probability that the unit is frozen, when we vary the pre-activations mean and bias. }
    \label{fig:tension}
\end{figure}

\subsection{LoP manifolds: Formal Statement and Proof}
\label{app:cloning_manifold_details}
This section provides the formal definitions, statement, and proof for the LoP manifold theorem. Since the frozen manifold argument is self explanatory, we will only prove the cloning manifold result that is more non-trivial. 
First, let us introduce our neural network  formalization. 
A feed-forward neural network is defined by a directed acyclic graph $G=(V,E, w)$, where $V$ is the set of nodes (neurons), $E$ is the set of directed edges (connections), $E$ representations the structure of the computational graph of the network, and  $w:\E \to \R,$ is the weight parameters of the network, which will be denoted $w(u,v)$ for each edge $(u,v)\in E$. Furthermore, $V_{\text{in}} \subset V$ are the input nodes, and $V_{\text{out}} \subset V$ are the output nodes. The post-activation $h(v)$ of a node $v \in V$ is computed as:
\[
h(v)=
\begin{cases}
x_v, & \text{if } v\in V_{\text{in}},\\
f_v \Big(\underbrace{\Sigma_{u\in \text{in}(v)}w_{u,v}\,h(u)}_{\text{pre-activation }z(v):=}\Big), &\text{otherwise}.
\end{cases}
\]
Here, $x_v$ is the input value for input node $v$, $f_v$ is the activation function associated with node $v$, $\mathrm{in}(v)$ is the set of nodes with edges towards $v.$ The network output is the vector of activations $h(V_{\text{out}})$. 
With the formal pass formally define, we can now define our backward passes.  Given a loss function $\Loss(h(V_{\text{out}}),y)$ comparing the network output $h(V_{\text{out}})$ to a target $y$, the back-propagation algorithm computes gradients via error signals $\delta(v)$. The error signal is defined recursively:
\[
\delta(v)=
\begin{cases}
\partial\Loss(h(V_{\text{out}}),y)/\partial h(V_{\text{out}}), & \text{for output nodes },\\[4pt]
\displaystyle\sum_{u\in\mathrm{out}(v)}\delta(u)\,w(v,u)\,f'_u(z(u)), &v\notin V_{\text{out}},
\end{cases}
\]
where $\mathrm{out}(v)$ is the set of nodes receiving input from $v$, and $f'_u$ is the derivative of the activation function $f_u$. The gradient of the loss with respect to a weight $w(u,v)$ is then given by $\partial\Loss/\partial w(u,v)=\delta(v)\,f'_v(z(v))\,h(u)$.
\paragraph{Network partition and base network definitions.}
Let $G = (V, E, w)$ be the main network. A partitioning refers to a partitioning of nodes defined as:
$$
\cup_{i=1}^k S_i = V, \qquad S_j\cap S_i=\emptyset\text{ for all } i\neq j.
$$
Given the partitioning, we  define the base network $\BaseG = (\BaseV, \BaseE, \BaseW)$ where each partition is a node, the edges are union of edges between two corresponding partitions, and weights are the sum total sum of edges divided by the number of rows:
$$
\BaseV:=\{S_i: i\in[k]\}
\qquad 
\BaseE = \left\{(S_i,S_j): S_i\times S_j\cap E\neq \emptyset \right\}
\qquad
\BaseW_{ij} = \frac{1}{|S_i|}\sum_{u\in S_,v\in S_j} w_{uv}.
$$
We can view the base graph as a ``meta'' graph, whose nodes are set of nodes, and its edges correspond to set of edges of the main graph. While the node and edge definitions are standard, the weight definition is slightly deviating from one might expect from standard quotient graph definitions, where the weights are total sum without averaging. The reason for this is more specific to our construction and is there to ensure similarity of the cloned and base networks forward and backward passes. 
\paragraph{Definitions of Weight Manifolds}
Given a network partitioning $S_1,\dots , S_k$, and the corresponding base graph $\BaseG = (\BaseV, \BaseE, \BaseW)$, here are the manifold definitions:
\begin{itemize}
    \item  The Row-wise Equitable (RE) manifold consists of all cloned weight matrices $w$ such that for every connection $(i,j) \in \BaseE$ in the base network, each block $w[S_i, S_j]$ all row-sums are equal:
    $$
\ManifoldRE = \left\{ w\in \R^{|E|} \;\middle|\; \forall (i,j) \in \BaseE, \text{ and } \forall r,r' \in S_i, \text{ it holds } \sum_{u \in S_j} w_{r u} = \sum_{u \in S_j} w_{r' u}
\right\}
$$
The Column-wise Equitable (CE) manifold, consists of all cloned weight matrices $w$ such that for partitioned block $w[S_i, S_j]$ , all column sums are equal:
$$
\ManifoldCE = \left\{ w\in \R^{|E|} \;\middle|\; \forall (i,j) \in \BaseE, \text{ and } \forall c,c' \in S_j, \text{ it holds } \sum_{u \in S_i}  w_{u c} = \sum_{u \in S_i}  w_{u c'} \right\}
$$
\item The Block-wise Constant (BC) manifold consists of all cloned weight matrices $w$ such that for every block $w[S_i, S_j],$ all its elements are equal:
$$
\ManifoldBC = \left\{ w\in \R^{|E|} \;\middle|\; \forall (i,j) \in \BaseE, \text{ and } \forall u,u' \in S_i, \forall v,v' \in S_j, \text{ it holds }  w_{uv} = w_{u'v'} \right\}
$$
\item Finally, we can define the family of all duplicate manifolds, that are affine sub-spaces of the parameters. For any matrix with row and column equitability, $w\in \ManifoldRE\cap\ManifoldCE,$ they shift the block constant manifold $\ManifoldD.$ Formally:
$$
\mathbb{M}_D = \left\{ \ManifoldD(w) \;\middle|\; w\in \ManifoldRE\cap\ManifoldCE \right\}, \qquad \ManifoldD(w):= \left\{w+T\;\middle|\; T\in \ManifoldBC \right\}
$$
\end{itemize}
Note that all the manifolds defined above are linear or affine sub-spaces, as their constraints are all linear. 
There are two important facts worth mentioning that will shed more light on the upcoming theorem. 
\begin{remark}
Note that the dimensionality of manifolds in the family $\mathbb{M}_D$ are given by the number of blocks in $W,$ as opposed to number of its elements. Thus, for example if the partitioning of units forms blocks of size $n,$ we would roughly expect $1/n^2$ fewer dimensions in $\mathbb{M}_D$ than in the original full parameter space. 
\end{remark}
Furthermore, the following remark clarifies why we define these networks as cloned networks. Because when we are on these manifolds, the clone network units form perfect copies of the base network units. 
\begin{remark}
    If $W \in \ManifoldRE,$ any unit in a block $v \in S_k,$ the forward activations will be identical to the corresponding base unit $h(v) = h(\tilde{v}),$ where $\tilde{v}$ is the corresponding unit in the base network to block $S_k.$ If we further assume  $W \in \ManifoldRE\cap \ManifoldCE,$ we will have a similar property for the backwards $\delta (v) = \delta (\tilde{v}).$
\end{remark}
Let us re-state the theorem on cloning to make this section more self-contained. 
\begin{theorem}[Cloned-Unit Manifold (Re-stated)]
\label{prop:cloned_manifold_restated}
Let $G=(V,E,W),$ denote a network that is partitioned with $S_1,\dots, S_k. $ Furthermore, units within a single block share the same activation function $\forall u,v\in S_k: f_v = f_u,$ and there are no edges within a block. 
For any input and label $(x,y)$:
    \begin{enumerate}
        \item If $W \in \ManifoldCE$, then all units in the same cluster $u,v \in S_k$  have identical forward activations $h(u) = h(v).$
        \item If $W \in \ManifoldRE \cap \ManifoldCE$, then all units in the same cluster $u,v \in S_k$  have identical backward activations $\delta(u) = \delta(v).$ Furthermore, the gradients $\partial \Loss/\partial W$ will have a block-wise constant structure, such that gradients between any two units in two blocks will be equal, i.e., for any $u,u'\in S_i$ and $v,v'\in S_j,$ we have $\partial \Loss/\partial W_{uv} = \partial \Loss/\partial W_{u'v'}.$  
        \item If the model weights at initialization or any point in training touch, if they lie on a manifold from the family  $W\in \ManifoldD$ where $\ManifoldD \in \mathbb{M}_D,$ given any arbitrary batches of input label pairs used to obtain subsequent model parameters $W(t), $, any subsequent training parameter trajectory constrained to the same manifold:
    \begin{align*}
    W(0)\in \ManifoldD \implies    W(t) \in \ManifoldD && \text{$\ManifoldD\in \mathbb{M}_D$, $t$ gradient steps}
    \end{align*}
\end{enumerate}
\end{theorem}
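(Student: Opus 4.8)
The three claims build on one another, so the plan is to prove them in order, part~(1)$\to$(2)$\to$(3), with part~(3) following almost immediately once the gradient structure of part~(2) is in hand. For part~(1) (\emph{forward cloning}) I would fix a topological ordering of the DAG $G$ consistent with the edge directions and induct along it on the assertion ``$h(u)=h(v)$ for all $u,v$ in a common block $S_k$.'' Input nodes form the base case: either every input is its own singleton block, or units in a block receive equal inputs by hypothesis, so the claim is immediate. For the inductive step, take a non-input $v\in S_j$, group its incoming edges by the block of their source, and use the inductive hypothesis to pull the (now block-common) activation $h(u)$ out of each group; this rewrites $z(v)$ as a fixed linear combination of the source-block activations, whose coefficients are exactly the block sums appearing in the constraints defining $\ManifoldRE$. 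Those equalities force each coefficient to be the same for every $v\in S_j$, hence $z(v)$ --- and therefore $h(v)=f_v(z(v))$, using a shared activation within each block --- is constant on $S_j$.

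For part~(2) I would run the mirror-image argument \emph{backwards} along the reverse topological order: for a non-output $v\in S_i$, the backpropagation recursion writes $\delta(v)$ in terms of the downstream error signals $\delta(u)$, the weights, and the downstream derivatives $f'_u(z(u))$; by part~(1) the $f'_u(z(u))$ are block-constant, by the backward inductive hypothesis so are the $\delta(u)$, and the extra equalities of $\ManifoldRE\cap\ManifoldCE$ make the coefficients $v$-independent, giving $\delta(u)=\delta(v)$ within blocks. Then the per-edge gradient $\partial\Loss/\partial w_{uv}=\delta(v)\,f'_v(z(v))\,h(u)$ depends on the edge $(u,v)$ only through the blocks of $u$ and $v$; summing over an arbitrary mini-batch preserves this by linearity, so $\nabla\Loss(W)$ is \emph{block-wise constant}, i.e.\ $\nabla\Loss(W)\in\ManifoldBC$ whenever $W\in\ManifoldRE\cap\ManifoldCE$. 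I view this last point --- that the gradient lands not just in the equitable subspace but in the strictly smaller block-constant subspace --- as the conceptual core of the theorem.

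Part~(3) is then an induction on the step count. The one auxiliary fact is the containment $\ManifoldBC\subseteq\ManifoldRE\cap\ManifoldCE$ (a matrix constant on each block trivially has equal row-sums and equal column-sums on each block); with linearity of $\ManifoldRE,\ManifoldCE$ this shows the affine subspace $\ManifoldD(w_0)=w_0+\ManifoldBC$ sits inside $\ManifoldRE\cap\ManifoldCE$ for every $w_0\in\ManifoldRE\cap\ManifoldCE$ (equivalently, for every $\ManifoldD\in\mathbb{M}_D$), so part~(2) applies at every point of $\ManifoldD$. Hence if $W(t)\in\ManifoldD$ then the batch gradient at $W(t)$ lies in $\ManifoldBC$, which is exactly the direction space of the affine subspace $\ManifoldD$, so $W(t+1)=W(t)-\eta\,\nabla\Loss(W(t))\in\ManifoldD$; the same closure under addition covers SGD with momentum (the velocity buffer stays in $\ManifoldBC$) and, coordinate-wise, Adam (squaring, exponential averaging, and division preserve block-constancy), matching the optimizer-robustness remark, whereas weight decay rescales the anchor $w_0$ and can leave the particular slice. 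I expect the main obstacle to be graph bookkeeping rather than the dynamics: making ``block-constant'' well defined when a block pair $(S_i,S_j)$ is not a complete bipartite subgraph of $E$, and verifying that the equitable-partition equalities are precisely the linear conditions that are both propagated by the forward and backward inductions and automatically satisfied by every block-constant perturbation --- the latter being what identifies $\ManifoldBC$ as the correct tangent space.
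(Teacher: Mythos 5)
Your proposal is correct and follows essentially the same route as the paper's proof: forward cloning by induction along a topological order using the block-sum equalities, backward cloning by the mirror induction in reverse order, block-constant per-edge gradients from $\partial\Loss/\partial w_{uv}=\delta(v)f'_v(z(v))h(u)$, and persistence by induction over gradient steps with $\ManifoldBC$ as the direction space of the affine manifold. Your observation that $\ManifoldBC\subseteq\ManifoldRE\cap\ManifoldCE$ lets you carry a single invariant $W(t)\in\ManifoldD$, a slight streamlining of the paper's two-invariant bookkeeping, but the argument is the same in substance.
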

\begin{proof}[Proof of \cref{prop:cloned_manifold_restated} (Cloned-Unit Manifold)]
The proof will be done as a series of inductions. 
First, let us assume that we have sorted the units in a topological order $v_1,\dots, v_n$ which exists because the network is a directed acyclic graph. Let us further assume that input nodes appear first in this list, and that outputs as the last edges in the list. Finally, because we assume no edges inside each block between the units, let us assume that the units in the same block are adjacent in our topological sort. Thus, for any two distinct blocks $S_i\neq S_j,$ we either have all nodes in $S_i$ before $S_j$ or vice versa, but cannot have a mix. 
\paragraph{Forward cloning.} Column-equitability assumption implies identical forward for units in the same block. The induction hypothesis is that for all $k,$ any preceding unit $p \le k,$ that belongs same partition  $ u_p, u_k \in S_i ,$  will have identical forward $h(u_p) = h(u_k). $ Because cloning does not apply to input units, meaning that every unit is a separate block, the hypothesis trivially holds for all input units $k=1,\dots, d$ where $d$ is input dimension. Now, let us prove the induction step, assuming step $k.$ Let $p\le k$ correspond to a unit in the same block $u_p,u_k  \in S_i. $ Now, consider all the units that have incoming edges to these two units, which necessarily must appear before $p$. Let's consider all such units within the same block $S_j.$ Because these units appear before $k,$ the induction hypothesis tells us that they have identical forward. Thus, the total contribution from these units to pre-activations $z(u_p)$ and $z(u_k)$ will be proportional to sum of edge weights from units in $S_j$. Because of our construction of the ordering, all the units in $S_j$ that feed into $S_i$ must occur before them. 
Now, the column-equal assumption implies that the sum of weights from all these units to $u_p $ and $u_k$ must be equal weight sum. Thus, we have proven that pre-activation contribution from units in  $S_j$ will be identical for $u_p$ and $u_k$. Because we chose $S_j$ arbitrarily and it could have been any block, we have proven that pre-activation these units must be identical $z(u_p)=z(u_k)$. Since they also have identical activation function, they will have identical outputs $h(u_p)=h(u_k)$. This completes the induction hypothesis for forward pass cloning. 
\paragraph{‌Backward cloning.} We want to prove that column and row-equitability assumption implies identical backward for units in the same block. The  proof strategy will be highly similar to the forward cloning case, with the key difference that our induction will be backward in our ordering, starting from latest output units and then moving in backward in the list. The induction hypothesis for step $k$ is that, for al $q > k$, if they are in the same block $u_k,u_q\in S_i$, they will have identical backwards $\delta(u_k)=\delta(u_q).$ Because output units are not themselves cloned, the induction step holds trivially for the last output nodes. Now let us prove the induction hypothesis for $k$ assuming that it holds for all higher steps. Now, for some arbitrary block $S_j$ that units in $S_i$ feed into, consider all outgoing connections from $u_k,u_q$ to the units in this block. Because of our construction of the ordering, all the units in $S_j$ that $S_i$ feeds into must occur after $S_j.$ Thus, by induction hypothesis, all these units must have identical backwards. Furthermore, from our row-equitability assumption we know that total edge weights from $u_k,u_q$ to these units must be identical. Thus, the summation formulas in the backward of $u_k$ and $u_q$ are similar. Finally, since $S_j$ was chosen arbitrarily, this summation is identical for all subsequent blocks, which implies the overal sum is also identical. To conclude the proof, note that because of row-equitability condition we already inherit the proof from the forward case, implying  $f'(z(u_k)) = f'(z(u_q)). $ Thus, both parts to the backward formula for $u_k,u_q$ will be identical, which proves they have identical backwards. This finishes the induction step. 
\paragraph{Gradient cloning.} This step is a straightforward consequence of the forward and backward cloning steps, and the formula that gradient of an edge from $u$ to $v$ is simply $h(u)\delta(v).$ Thus, the cloning structures in forward and backward, manifest themselves as a block structure in the gradients. 
\paragraph{Constrained training trajectory.} Here, the key induction step is over the gradient steps. For step $t,$ the induction hypothesis is that $W(t) \in \ManifoldCE\cap\ManifoldRE,$ and that $W(t)-W(0) \in \ManifoldBC. $ This trivially holds for initial step $t=0.$ Let us prove the induction step $t+1$ assuming that it holds for $t.$ Suppose gradient at this step $\Delta W(t)$ is defined over the loss arbitrary number of samples $\{(x_i,y_i)\}.$ Because of the induction hypothesis $W(t) \in \ManifoldCE\cap\ManifoldRE,$ our earlier results imply that the gradients for each sample $\partial \Loss_i / \partial W(t),$ will have a block-wise constant structure $\partial \Loss_i / \partial W(t) \in \ManifoldBC. $ Thus, the sum of these gradients will also have a block-wise constant structure $\Delta W(t) := \partial \Loss_ / \partial W(t) \in \ManifoldBC. $ Because block-wise matrices are also row- and column-equitable, this implies that the new weights will inherit those $W(t+1) = W(t) + \Delta W(t) \in \ManifoldCE\cap\ManifoldRE. $  Finally, our parameter shift can be written as $W(t+1)-W(0) = \Delta W(t) + W(t)-W(0), $  where $W(t)-W(0)  $ is a block-wise constant matrix and thus $W(t+1)-W(0)$ becomes sum two block-wise constant matrices, which is itself block-wise constant $W(t+1)-W(0)\in \ManifoldBC. $ This finishes our induction step. 
\end{proof}




\subsection{Modular cloning profiles and a composition theorem}
\label{app:modular_cloning}

This subsection formalizes a modular extension of the cloning theorem in \cref{prop:frozen_duplicate_lop} (see also \cref{app:cloning_manifold_details}) and proves that local, module-level cloning certificates glue to yield cloning for the entire composed architecture.

\paragraph{Modules, interfaces, and profiles.}
A \emph{module} is a feed-forward sub-DAG $G_M=(V_M,E_M,W_M)$ together with disjoint sets of \emph{interface nodes} $I_M$ (inputs) and $O_M$ (outputs). We allow internal nodes $V_M^\circ:=V_M\setminus(I_M\cup O_M)$ and edges that connect interface nodes to internal nodes or to other interface nodes as permitted by the DAG.
Let $\widetilde{G}_M=(\widetilde{V}_M,\widetilde{E}_M,\widetilde{W}_M)$ denote a smaller \emph{base} module.

A \emph{cloning profile} for $M$ relative to $\widetilde{M}$ consists of surjections
\[
\pi^\mathrm{in}_M:I_M\twoheadrightarrow\widetilde{I}_M,
\qquad
\pi^\mathrm{out}_M:O_M\twoheadrightarrow\widetilde{O}_M,
\]
inducing partitions $\mathcal{P}^\mathrm{in}_M=\{\ (\pi^\mathrm{in}_M)^{-1}(i)\ :\ i\in\widetilde{I}_M\ \}$ and $\mathcal{P}^\mathrm{out}_M=\{\ (\pi^\mathrm{out}_M)^{-1}(o)\ :\ o\in\widetilde{O}_M\ \}$. Intuitively, all interface units in the same block are \emph{clones} of the corresponding base port.

We say two wired modules $A\to B$ have \emph{matching profiles} if their shared interface partitions coincide after applying the wiring map $\omega_{A\to B}:O_A\to I_B$, i.e.
\[
\omega_{A\to B}\big(\mathcal{P}^\mathrm{out}_A\big)=\mathcal{P}^\mathrm{in}_B
\quad\text{as partitions of }I_B,
\]
and dually for the reversed wiring used by backpropagation. More generally, a whole network has matching profiles if this holds on every inter-module edge set.

\paragraph{Module-level cloning manifold.}
Fix a module $M$ with profile $(\mathcal{P}^\mathrm{in}_M,\mathcal{P}^\mathrm{out}_M)$. Extend these interface partitions to a partition of \emph{all} nodes $V_M$ by assigning each internal node of $M$ to the block of its corresponding base-node in the collapsed base graph $\BaseG$ defined in \cref{app:cloning_manifold_details}. On $M$, define the (affine) \emph{module cloning manifold}
\[
\ManifoldD(M)\;=\;\{\,W_M:\ W_M \in \ManifoldRE\cap\ManifoldCE\ \text{with respect to the induced partition of }V_M\,\},
\]
i.e., each inter-block weight submatrix is row- and column-equitable (block-wise constant up to redistribution), reusing the notation of \cref{app:cloning_manifold_details}. This generalizes the block-constant manifold $\ManifoldBC$ by allowing intra-block redistribution while preserving equal in/out block-sums.

\begin{definition}[Module-level cloning certificate]
\label{def:module_certificate}
A module $M$ endowed with profile $(\mathcal{P}^\mathrm{in}_M,\mathcal{P}^\mathrm{out}_M)$ admits a \emph{cloning certificate} if the following hold for every batch:
\begin{enumerate}[label=(MC\arabic*), leftmargin=2.1em]
\item \textbf{Forward interface preservation.} If inputs in the same block of $\mathcal{P}^\mathrm{in}_M$ carry identical values, then for any $W_M\in\ManifoldD(M)$ all outputs in the same block of $\mathcal{P}^\mathrm{out}_M$ are identical (forward cloning).
\item \textbf{Backward interface preservation.} If the output adjoints (backprop signals) are blockwise identical on $\mathcal{P}^\mathrm{out}_M$, then for any $W_M\in\ManifoldD(M)$ the input adjoints are blockwise identical on $\mathcal{P}^\mathrm{in}_M$ (backward cloning).
\item \textbf{Gradient closedness.} Under (MC1)–(MC2), the per-edge gradient $\partial\Loss/\partial W_M$ is block-wise constant on each inter-block submatrix, hence $\nabla\Loss(W_M)$ is tangent to $\ManifoldD(M)$ and first-order parameter updates initialized on $\ManifoldD(M)$ remain on $\ManifoldD(M)$.
\end{enumerate}
\end{definition}

\begin{remark}[Optimizers covered]
\label{rem:optimizers_modular}
(MC3) implies closure under any first-order optimizer whose update is a (possibly stateful) scalar multiple of the local gradient on each parameter and whose internal state is identical across clones at initialization (e.g., SGD, momentum, RMSProp, Adam with tied clone states). Weight decay that acts per-parameter independently may break exact symmetry; see \cref{app:cloning_manifold_details}. Dropout violates (MC1)–(MC2) because independent masks destroy blockwise equality in the forward/backward signals.
\end{remark}

\begin{lemma}[Module certificate from \texorpdfstring{$\ManifoldRE\cap\ManifoldCE$}{RE∩CE}]
\label{lem:module_from_RE_CE}
If $W_M\in \ManifoldRE\cap\ManifoldCE$ for the induced partition of $V_M$, then $M$ satisfies (MC1)–(MC3).
\end{lemma}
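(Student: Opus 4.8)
The plan is to reduce (MC1)--(MC3) to the three inductions already carried out in the proof of \cref{prop:cloned_manifold_restated}, applied to the module $G_M$ viewed as a standalone feed-forward DAG equipped with the induced partition of $V_M$. The only structural difference from the setting of \cref{prop:cloned_manifold_restated} is that the interface nodes $I_M$ and $O_M$ need not be singleton blocks; this changes the \emph{base cases} of the forward and backward inductions but nothing else. Concretely, fix a topological order $v_1,\dots,v_n$ of $V_M$ in which (i) the input interface $I_M$ comes first, (ii) the output interface $O_M$ comes last, and (iii) nodes sharing a block of the induced partition are contiguous — exactly as in the proof of \cref{prop:cloned_manifold_restated}. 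As there, we use that there are no edges internal to a block; edges between distinct blocks (including interface-to-internal edges) are governed by the row/column-sum constraints encoded in $\ManifoldRE\cap\ManifoldCE$.

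For (MC1), I would run the forward-cloning induction of \cref{prop:cloned_manifold_restated} essentially verbatim, replacing its trivial base case (``input units are their own blocks'') with the hypothesis of (MC1): inputs in the same block of $\mathcal{P}^{\mathrm{in}}_M$ carry identical values, so $h(u_p)=h(u_k)$ for $u_p,u_k$ in a common input block. The inductive step is unchanged: for a later pair $u_p,u_k$ in a common block $S_i$ and any feeding block $S_j$, the induction hypothesis gives equal forward values inside $S_j$, and row-equitability ($W_M\in\ManifoldRE$) forces the partial pre-activation coming from $S_j$ to agree across $u_p,u_k$; summing over all feeding blocks yields $z(u_p)=z(u_k)$, hence $h(u_p)=h(u_k)$. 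Propagating through the last layer gives blockwise-equal outputs on $\mathcal{P}^{\mathrm{out}}_M$. For (MC2), I would run the backward-cloning induction in reverse topological order with base case the hypothesis of (MC2) — output adjoints blockwise constant on $\mathcal{P}^{\mathrm{out}}_M$. The inductive step is again unchanged: column-equitability ($W_M\in\ManifoldCE$) makes the outgoing-weight sums from two same-block nodes to any downstream block agree, the induction hypothesis makes the downstream backward signals agree, and the derivative factors agree because forward cloning (already available from $W_M\in\ManifoldRE$) gives equal pre-activations; combining both factors in the backward recursion yields $\delta(u_k)=\delta(u_q)$ for same-block nodes, down to the input interface.

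Finally, (MC3) is immediate from the gradient product formula $\partial\Loss/\partial w_{uv}=h(u)\,\delta(v)$ used in the proof of \cref{prop:cloned_manifold_restated}: by (MC1)--(MC2) both factors are blockwise constant, so $\partial\Loss/\partial W_M$ is block-wise constant on every inter-block submatrix, i.e.\ lies in $\ManifoldBC$. Since $\ManifoldBC\subseteq\ManifoldRE\cap\ManifoldCE=\ManifoldD(M)$ and $\ManifoldD(M)$ is a linear subspace, the gradient is tangent to $\ManifoldD(M)$, and a first-order update $W_M\mapsto W_M-\eta\,\nabla\Loss(W_M)$ remains on $\ManifoldD(M)$ — this is precisely the one-module instance of the ``constrained training trajectory'' argument in \cref{prop:cloned_manifold_restated}, so it transfers with no change.

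The one point requiring genuine care — the main obstacle — is the bookkeeping that makes the two inductions legitimate: one must check that the \emph{induced} partition of $V_M$ is well defined (each internal node is assigned to a unique base-node block of $\BaseG$), that a single topological order can be chosen that is block-contiguous while simultaneously putting $I_M$ first and $O_M$ last, and that the ``no intra-block edge'' assumption is not violated by any interface-to-interface edges inside the module; if such edges can occur, one either rules them out as part of the definition of an admissible profile or adds a one-line observation that on $\ManifoldBC$ their weights are constant and hence pose no obstruction. Once this is settled, every line of the proof of \cref{prop:cloned_manifold_restated} carries over with only the base-case substitutions described above.
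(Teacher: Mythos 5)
Your proposal is correct and follows essentially the same route as the paper, which proves the lemma by restricting the forward/backward/gradient inductions of \cref{prop:cloned_manifold_restated} to the subgraph $G_M$ with its induced partition. Your explicit handling of the changed base cases (interface blocks need not be singletons, so the blockwise-equality hypotheses of (MC1)/(MC2) replace the trivial base cases) is a detail the paper's one-line proof glosses over, but it is the same argument.
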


\begin{proof}
This is the restriction of \cref{prop:frozen_duplicate_lop} to the subgraph $G_M$ with its node partition: row-equitability yields identical forward values within blocks, column-equitability yields identical backward adjoints within blocks, and $d\Loss/dW_M=h\,\delta^\top$ is block-wise constant across inter-block submatrices. Tangency of the gradient to $\ManifoldRE\cap\ManifoldCE$ follows exactly as in \cref{app:cloning_manifold_details}.
\end{proof}

\begin{theorem}[Composition theorem for modular cloning]
\label{thm:modular_composition}
Let a feed-forward network be formed by wiring modules $\{M_\ell\}_{\ell=1}^L$ with matching profiles at every interface. Suppose each $M_\ell$ admits a cloning certificate (Def.~\ref{def:module_certificate}) and that parameters are initialized on the product manifold $\prod_\ell \ManifoldD(M_\ell)$. Then:
\begin{enumerate}[leftmargin=2.1em]
\item \textbf{Global forward cloning.} If the external inputs respect the input profile of the first modules, then all internal interfaces and the final outputs are blockwise identical according to the propagated profiles. Equivalently, the composed network is a cloned enlargement of the composed base network.
\item \textbf{Global backward cloning.} For any loss, if the final output adjoints are blockwise identical, then all internal interface adjoints and the external input adjoints are blockwise identical according to the propagated profiles.
\item \textbf{Persistence under training.} The network gradient is tangent to $\prod_\ell \ManifoldD(M_\ell)$, hence any first-order parameter update that preserves (MC3) at the module level preserves the global cloning manifold and items 1–2 continue to hold at all subsequent steps.
\end{enumerate}
\end{theorem}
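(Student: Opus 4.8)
The plan is to run two structural inductions over the module DAG — one forward, one backward — using the module-level certificates of \cref{def:module_certificate} as the inductive step, and then to bootstrap persistence from items~1 and~2 together with the single-graph argument already proved in \cref{prop:cloned_manifold_restated}. First, since the composed network is feed-forward, the inter-module dependency graph is a DAG, so I can fix a topological order $M_1,\dots,M_L$ compatible with the wiring maps $\omega_{\cdot\to\cdot}$. For \textbf{global forward cloning} I would induct on this order with the hypothesis: after processing $M_\ell$, every output interface of $M_1,\dots,M_\ell$ carries blockwise-constant values according to its propagated profile. The base case is the external input, which respects the input profile of the source modules by assumption. For the step, the inputs of $M_\ell$ are, through the wiring maps, exactly the outputs of already-processed upstream modules (or external inputs); matching of profiles guarantees the partition they carry coincides with $\mathcal{P}^{\mathrm{in}}_{M_\ell}$, so the premise of (MC1) holds, and since $W_{M_\ell}\in\ManifoldD(M_\ell)$ by hypothesis, (MC1) yields blockwise-identical outputs on $\mathcal{P}^{\mathrm{out}}_{M_\ell}$. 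The ``cloned enlargement'' conclusion follows because on $\prod_\ell\ManifoldD(M_\ell)$ each module computes (by \cref{lem:module_from_RE_CE}, i.e. the forward part of \cref{prop:cloned_manifold_restated}) the same interface values as the corresponding base module, and these compositions agree at the matched interfaces.

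For \textbf{global backward cloning} I would run the identical argument along the reverse topological order, with (MC2) replacing (MC1). The one extra wrinkle is fan-out: a single output interface may feed several downstream modules, so its adjoint is the \emph{sum} of the backprop signals returned by those modules. Since each summand is blockwise constant on the shared (matched) profile, the sum is blockwise constant as well, so the inductive step still applies. The base case is that the final output adjoints are blockwise identical, which is the hypothesis of item~2.

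For \textbf{persistence under training} I would observe that items~1 and~2, evaluated at the current parameter point, supply exactly the premises of (MC3) for every module at once: at each $M_\ell$ the forward signals at its inputs are blockwise identical and the backprop signals at its outputs are blockwise identical. Hence (MC3) gives $\partial\Loss/\partial W_{M_\ell}$ block-wise constant on each inter-block submatrix, i.e. $\nabla\Loss(W_{M_\ell})\in T_{W_{M_\ell}}\ManifoldD(M_\ell)$. Because the tangent space of a product manifold is the product of the tangent spaces, $\nabla\Loss(W)$ is tangent to $\prod_\ell\ManifoldD(M_\ell)$; any first-order update that is a per-parameter scalar multiple of the local gradient with clone-symmetric optimizer state (\cref{rem:optimizers_modular}) keeps each $W_{M_\ell}$ on $\ManifoldD(M_\ell)=\ManifoldRE\cap\ManifoldCE$ — this is precisely the ``constrained training trajectory'' argument from the proof of \cref{prop:cloned_manifold_restated}, applied module by module ($\ManifoldBC\subseteq\ManifoldRE\cap\ManifoldCE$ and the sum of block-wise constant matrices is block-wise constant). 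An outer induction on the number of training steps then reinstates the hypotheses of items~1–2 at every subsequent step.

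The main obstacle I anticipate is not any hard estimate but the bookkeeping at the interfaces: one must verify that ``matching profiles'' is exactly the condition making the partition propagated \emph{out} of the upstream modules equal to the partition expected \emph{in} by (MC1)/(MC2) downstream, including when an interface is shared by more than two modules (forward fan-out and backward fan-in of adjoints) and when internal nodes of distinct modules must be assigned mutually consistent blocks so that $\prod_\ell\ManifoldD(M_\ell)$ is literally a submanifold of the full parameter space cut out by the induced global partition. Pinning down the wiring maps $\omega_{A\to B}$ and the partition push-forwards $\omega(\mathcal{P})$, and checking they compose consistently along any path in the module DAG, is the only genuinely delicate step; once that is in place, all three claims follow immediately from the certificates and the already-established single-graph theorem.
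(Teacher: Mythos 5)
Your proposal is correct and follows essentially the same route as the paper's proof: a forward topological induction over modules using (MC1) with profile matching at each interface, a reverse induction using (MC2), and persistence via (MC3) plus the fact that the tangent space of the product of affine manifolds is the product of tangents, closed by an outer induction over training steps. Your explicit treatment of fan-out (adjoints at a shared interface being sums of blockwise-constant signals) is a detail the paper's proof leaves implicit, and is a welcome addition rather than a divergence.
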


\begin{proof}
\emph{Forward.} Order modules topologically. Assume the external inputs are blockwise identical on the first-layer profiles. Applying (MC1) to the first module yields blockwise-identical outputs on its output profile. By profile matching, these outputs equal the input profile of the next module, so (MC1) applies again. Induction over modules yields blockwise equality at every interface and at the final outputs.

\emph{Backward.} Reverse the topological order. Start from blockwise-identical adjoints at the final outputs. By (MC2) for the last module, the incoming adjoints to its inputs are blockwise identical. Profile matching identifies these with the previous module’s output profile, so (MC2) applies again. The inductive step propagates back to the external inputs.

\emph{Persistence.} By (MC3), in each module the gradient is block-wise constant on inter-block submatrices, i.e., tangent to $\ManifoldD(M_\ell)$. The product of affine manifolds is an affine manifold with tangent equal to the product of tangents, so the global gradient is tangent to $\prod_\ell \ManifoldD(M_\ell)$. Thus first-order updates initialized on this product manifold remain on it, and the previous two items re-apply at every step.
\end{proof}

\begin{remark}[Coverage: modern architectures]
\label{rem:coverage}
The certificate (Lemma~\ref{lem:module_from_RE_CE}) is satisfied by the standard width/channel/heads expansions used in practice:
\begin{itemize}[leftmargin=1.5em]
\item \textbf{MLPs / Linear layers:} Duplicate hidden units; enforce RE/CE by tiling weights with appropriate $1/\text{(input expansion)}$ scaling; duplicate biases. Matches the implementation in \texttt{clone\_linear}.
\item \textbf{CNNs / Conv layers:} Duplicate channels (in/out); tile kernels with $1/\text{(input expansion)}$ scaling; duplicate biases (\texttt{clone\_conv1d}, \texttt{clone\_conv2d}). Spatial pooling is per-channel and thus profile-preserving. 
\item \textbf{Normalization:} BN/LN/GN with duplicated $(\gamma,\beta)$ and running stats per clone are profile-preserving (\texttt{clone\_normalization}).
\item \textbf{Activations and elementwise ops:} Elementwise maps are profile-preserving (\texttt{clone\_activation}); parameter-free ops are trivially preserved (\texttt{clone\_parameter\_free}).
\item \textbf{ResNets:} Residual addition preserves cloning provided both branches use the same profile; block-level expansions meet RE/CE at each addition.
\item \textbf{Transformers/ViTs:} 
      (i) Embedding/patch-projection expansions via tiling (\texttt{clone\_embedding}); 
      (ii) Multi-head attention via head duplication; per-head linear maps satisfy RE/CE; concatenation is a profile-preserving reshape; 
      (iii) MLP sub-blocks as in MLPs; 
      (iv) LayerNorm is profile-preserving.
      The \texttt{CloneAwareFlatten} operator ensures profile-preserving reshapes between conv/linear stages.
\end{itemize}
By contrast, \textbf{Dropout} with independent masks across clones breaks (MC1)–(MC2) and thus is excluded from this corollary (see also discussion in the main text).
\end{remark}

\begin{remark}[Minimal check-list for a new module]
\label{rem:checklist}
To certify a new module $M$:
\begin{enumerate}[leftmargin=1.5em]
\item Choose interface partitions $(\mathcal{P}^\mathrm{in}_M,\mathcal{P}^\mathrm{out}_M)$ and extend them to $V_M$.
\item Verify $W_M\in\ManifoldRE\cap\ManifoldCE$ for the induced partition (row/column equitability per inter-block submatrix).
\item Conclude (MC1)–(MC3) by Lemma~\ref{lem:module_from_RE_CE}.
\item Ensure adjacent modules use matching profiles at shared interfaces.
\end{enumerate}
Under these conditions, Theorem~\ref{thm:modular_composition} guarantees network-level cloning and its persistence under training.
\end{remark}

\begin{observation}[Connection to the implementation]
\label{obs:implementation_link}
The functions \texttt{clone\_\{linear,conv1d,conv2d,normalization,embedding,activation\}} and \texttt{model\_clone} implement the RE/CE tiling and profile-preserving reshapes described above, while \texttt{test\_activation\_cloning} empirically verifies (MC1)–(MC2) layer-wise via forward/backward $R^2$. The \texttt{CloneAwareFlatten} operator is a profile-preserving connector that keeps duplicated channels adjacent, ensuring that profiles match across CNN$\to$FC boundaries.
\end{observation}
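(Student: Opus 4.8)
The plan is to treat this observation as a verification claim rather than a deep theorem: each named routine is to be shown to realize, at the level of weights and partitions, exactly the row/column-equitable tiling and profile-preserving reshapes that the module-level certificate of \cref{def:module_certificate} requires, so that \cref{lem:module_from_RE_CE} and the composition theorem \cref{thm:modular_composition} apply verbatim to the constructed network. Concretely I would proceed routine by routine, and for each one (i) read off the node partition it induces, (ii) check that the tiled parameters lie in $\ManifoldRE\cap\ManifoldCE$, and (iii) conclude (MC1)--(MC3) by invoking the lemma; profile matching at interfaces then discharges the hypotheses of \cref{thm:modular_composition}.

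I would first handle the weight-carrying primitives. For \texttt{clone\_linear}, \texttt{clone\_conv1d}, \texttt{clone\_conv2d}, and \texttt{clone\_embedding} I would formalise the operation as replacing each base unit, channel, or token $\tilde u$ by a block $S_{\tilde u}$ of clones carrying identical forward values, and setting the cloned weight from $u\in S_{\tilde u}$ to $v\in S_{\tilde v}$ equal to $\BaseW_{\tilde u\tilde v}/c_{\tilde u}$, where $c_{\tilde u}$ is the input-expansion factor of the source block. The verification then reduces to two arithmetic checks: that every inter-block submatrix is block-constant, hence in $\ManifoldBC\subseteq\ManifoldRE\cap\ManifoldCE$, and that the $1/c_{\tilde u}$ scaling makes the forward pre-activation sum $\sum_{u}w_{uv}h(u)$ reproduce the base pre-activation, consistent with the averaged weight convention $\BaseW_{ij}=\tfrac{1}{|S_i|}\sum w_{uv}$ of \cref{app:cloning_manifold_details}. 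Duplicated biases are block-constant by construction, and \texttt{clone\_normalization} (duplicated $\gamma,\beta$ and per-clone running statistics) together with \texttt{clone\_activation} and \texttt{clone\_parameter\_free} follow from the complementary fact that elementwise and parameter-free maps commute with duplication, so they send blockwise-identical inputs to blockwise-identical outputs and preserve the profile with no weight to check.

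Next I would treat the connectors and the test. For \texttt{CloneAwareFlatten} the content to prove is that the permutation $\rho$ applied when merging a convolutional channel index with the spatial indices into a single feature index satisfies $\rho(\mathcal{P}^{\mathrm{out}}_{\mathrm{conv}})=\mathcal{P}^{\mathrm{in}}_{\mathrm{FC}}$ as partitions of the flattened coordinates; equivalently, clones of the same base channel stay contiguous after flattening, so the induced feature partition is exactly the one the downstream \texttt{clone\_linear} expects. This equality of partitions is precisely the matching-profiles hypothesis at the CNN$\to$FC interface that \cref{thm:modular_composition} needs to glue the two stages. For \texttt{test\_activation\_cloning} I would only record the correspondence that a within-block forward and backward $R^2$ equal to $1$ is by definition the assertion that all units in a block share forward values and backpropagated errors, i.e.\ the empirical instantiation of (MC1)--(MC2); no further argument is required.

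The main obstacle is bookkeeping rather than anything conceptual: verifying that the scaling factor $1/c_{\tilde u}$ and the flatten ordering are exactly, and not merely up to a harmless relabelling, the ones demanded by the averaged base-weight convention and by profile matching. The delicate case is the conv$\to$FC boundary, where a naive flatten interleaves channel and spatial indices and scatters a single channel block across non-contiguous feature coordinates, breaking $\rho(\mathcal{P}^{\mathrm{out}}_{\mathrm{conv}})=\mathcal{P}^{\mathrm{in}}_{\mathrm{FC}}$; confirming that \texttt{CloneAwareFlatten} permutes so as to keep each block contiguous is the one step that genuinely requires tracking indices. Once these conventions are pinned down, every routine reduces to a block-constant (hence equitable) submatrix plus a profile-preserving reshape, and \cref{lem:module_from_RE_CE,thm:modular_composition} close the argument.
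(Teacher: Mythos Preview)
Your proposal is correct, and in fact considerably more detailed than anything the paper supplies: the paper does not prove this observation at all. It is stated as a bare assertion linking the theoretical framework to the codebase, with the supporting implementation details relegated to the narrative paragraph ``Cloning Implementation'' in \cref{subsec:core_methodologies}, which describes the $1/\alpha_{\text{in}}$ scaling, per-channel duplication, and the clone-aware flatten informally without invoking \cref{lem:module_from_RE_CE} or \cref{thm:modular_composition} explicitly.

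Your routine-by-routine verification, reducing each primitive to a block-constant submatrix in $\ManifoldBC\subseteq\ManifoldRE\cap\ManifoldCE$ and then discharging (MC1)--(MC3) via \cref{lem:module_from_RE_CE}, is exactly the right way to make the observation rigorous, and your identification of the conv$\to$FC flatten as the one genuinely index-sensitive step is accurate. The paper simply takes all of this for granted; what you have written is the proof the paper omits.
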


\subsection{Stability of LoP manifolds.}
While \cref{prop:frozen_duplicate_lop} establishes the \emph{existence} of LoP manifolds under exact conditions (perfect saturation, perfect cloning), in practice, these conditions might only be approximately reached during training. This leads to the question of whether  near-LoP states will move back closer to the LoP manifold under gradient descent dynamics, or will they move away from it. To address this, we introduce the notion of the stability of an LoP manifold.
\begin{definition}[Stability of LoP Manifold]
\label{def:lop_stability_main}
Let $\mathcal{M}$ be an LoP manifold and $N_\theta\mathcal{M}$ be the normal space to $\mathcal{M}$ at $\theta \in \mathcal{M}$. The stability of $\mathcal{M}$ is characterized by the Hessian $\nabla_\theta^2\Loss(\theta)$ in directions normal to $\mathcal{M}$:
\begin{itemize}
    \item \textbf{Stable LoP:} $\forall v\in N_\theta\mathcal{M}\setminus\{0\}: v^\top\nabla_\theta^2\Loss(\theta)v > 0$. (Perturbations revert to LoP)
    \item \textbf{Unstable LoP:} $\forall v\in N_\theta\mathcal{M}\setminus\{0\}: v^\top\nabla_\theta^2\Loss(\theta)v < 0$. (Perturbations  escape LoP)
    \item \textbf{Saddle LoP:} $\exists v_1,v_2\in N_\theta\mathcal{M}$ s.t. $v_1^\top\nabla_\theta^2\Loss v_1>0$ and $v_2^\top\nabla_\theta^2\Loss v_2<0$. (Escape is direction-dependent)
\end{itemize}
\end{definition}
\begin{remark}
Stability in the normal space to the manifold (convexity of the loss in these directions) does not imply that the loss is convex in general (i.e., also within the manifold or in other directions). These conditions are local characterizations of the loss landscape geometry around the manifold.
\end{remark}
To understand the practical implications of these stability types, consider injecting a small perturbation $\Delta\theta$ that pushes the parameters $\theta$ slightly off the manifold $\mathcal{M}$. If $\mathcal{M}$ is stable, the subsequent gradient steps $-\nabla \Loss(\theta+\Delta\theta)$ will tend to project back towards $\mathcal{M}$. If $\mathcal{M}$ is unstable, these steps will tend to move further away. For a saddle LoP manifold, escape depends on the direction of the initial perturbation relative to the eigenvectors of the Hessian in the normal space.
Therefore, the strongest form of LoP corresponds to a \emph{stable} LoP manifold, as it actively resists escape. An unstable manifold is the easiest to escape. A saddle manifold presents a mixed scenario, where random perturbations may or may not escape depending on the perturbation vector being in a positively or negative space orientation. 

\section{Empirical Appendix}
\label{app:empirical_evidence_appendix}
This section provides comprehensive details of the experimental setups, additional empirical results, figures supporting claims made in the main text, and visualizations.
\subsection{Experimental Details}
\label{sec:experimental_details}
This section outlines the experimental setup, methodologies, and general procedures employed for the empirical analysis of Loss of Plasticity (LoP) in neural networks.
\subsubsection{Overview of Experimental Paradigms}
\label{subsec:experiment_paradigms}
Our investigation into LoP encompasses three primary experimental paradigms.
\paragraph{Continual Learning Experiments}
These experiments involve training models on a sequence of temporally independent tasks where data from previously learned tasks is unavailable. Tasks are typically formulated by partitioning the output classes of standard datasets, and for any given task $t$, the model is trained exclusively on its assigned class subset $\mathcal{C}_t$. We trained our models on Tiny ImageNet, which consists of 200 classes, by creating a sequence of 40 tasks, each containing a disjoint subset of 5 classes. Each task is trained for 500 steps, and validation is performed periodically, resulting in 20,000 total training steps. The training protocol included optional reinitialization of model output layer weights and biases are reset to zero before starting each new task to mitigate interference.

\paragraph{Neural Network Cloning Experiments}
These experiments study the effects of neuron duplication using a two-stage training protocol. Initially, a base model is trained on a target task to establish baseline performance. Subsequently, this base model is expanded by a specified factor (always fixed to two), using the cloning procedures detailed later. The expanded (cloned) model is then trained. To compare the base and cloned model, we also keep training the base model at the same time during this second phase. The results presented here are all on the CIFAR-10 dataset, and we used 20 epochs to train the base model and 500 epochs to train the cloned model. Functional equivalence post-cloning is verified by ensuring the cloned model produces activations identical to its base, assessed via $R^2$ scores between corresponding layer activations. $R^2$ scores, computed for each layer, measure if the mean of cloned units can explain the variance of all units in that block. 

\paragraph{Bit Flipping Experiments}
These experiments simulate a slowly-changing regression problem to evaluate network adaptability to gradually drifting input distributions. An illustrative benchmark for studying adaptability is the 'bit-flipping' experiment, an online regression task where the model receives an $m$-bit input vector $x$ and must predict an output $y$. The environment is non-stationary: a subset of $f$ input bits are designated 'flipping bits,' and at regular $T$-step intervals, one of these $f$ bits is randomly inverted. The remaining $m-f$ input bits are randomly sampled at each step. The target output $y$ is generated by a fixed (but unknown to the learning model) two-layer network, and a two-layer MLP is trained to learn this continuously drifting target function. The complexity of the learning model is typically designed to be less than that of the data-generating process, thereby creating a challenging scenario for maintaining plasticity. A target network with Linear Threshold Units (LTUs) implements $h_i = \text{LTU}(w_i^T x - \theta_i)$ and $y = w_{\text{out}}^T h + b_{\text{out}}$. A Linear Threshold Unit operation is defined as $\text{LTU}(z) = 1$ if $z \ge 0$, and $0$ otherwise (a Heaviside step function). For the target network, the specific thresholds are $\theta_i = (m \cdot \beta) - S_i$, and $S_i = \sum_{j: w_{ij} < 0} 1 - 0.5 \cdot w_{i,m+1}$. Input consists of $m$ bits plus a bias bit; $f$ of these bits are ``flipping bits'' changing every $T$ time steps (one randomly selected flipping bit is inverted), while the remaining $m-f$ bits are randomly sampled each step. A two-layer MLP with a configurable activation function is trained online to learn this target.

\begin{figure}[!ht]
    \centering
    \includegraphics[width=0.3\linewidth]{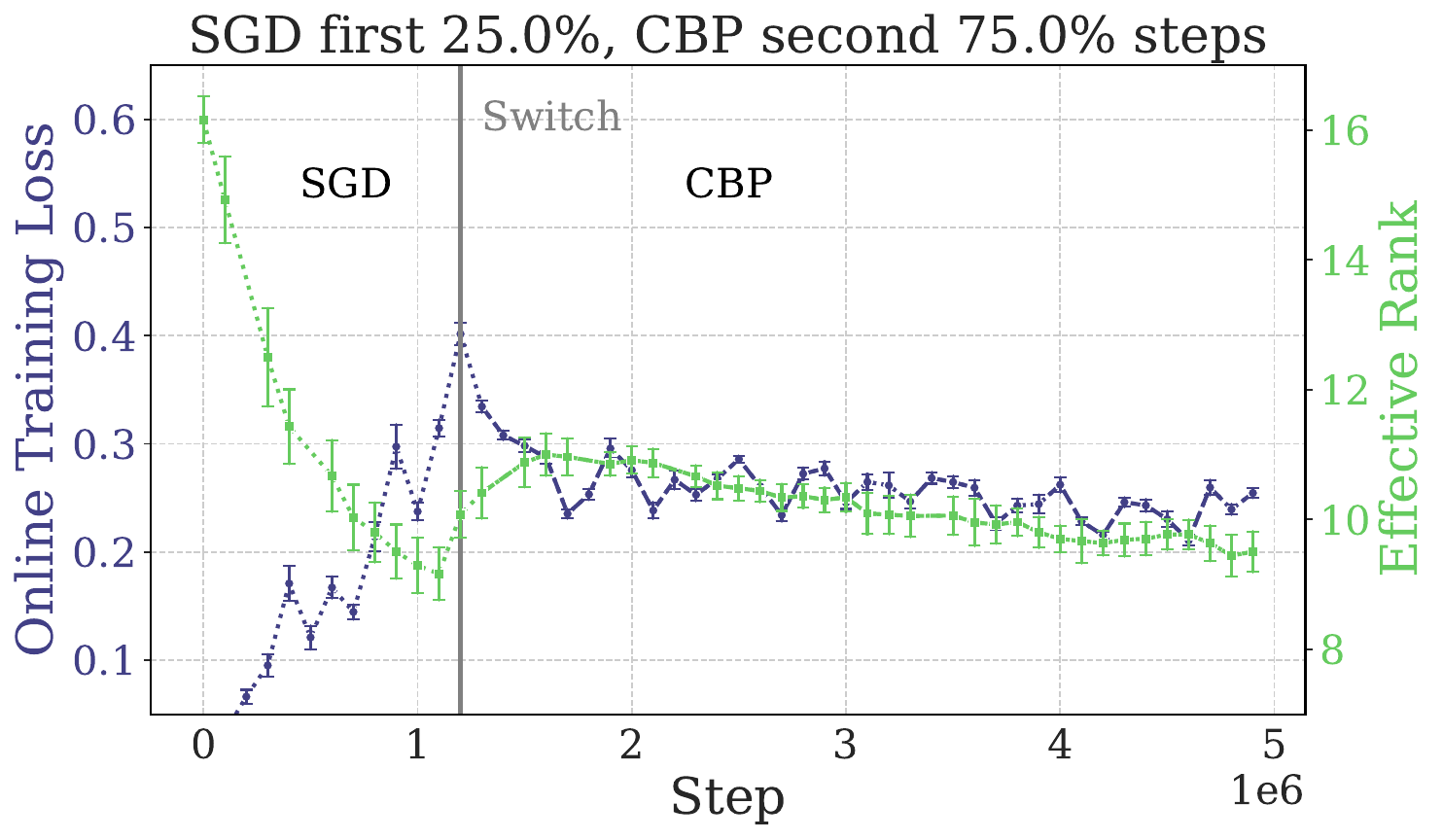}
    \includegraphics[width=0.3\linewidth]{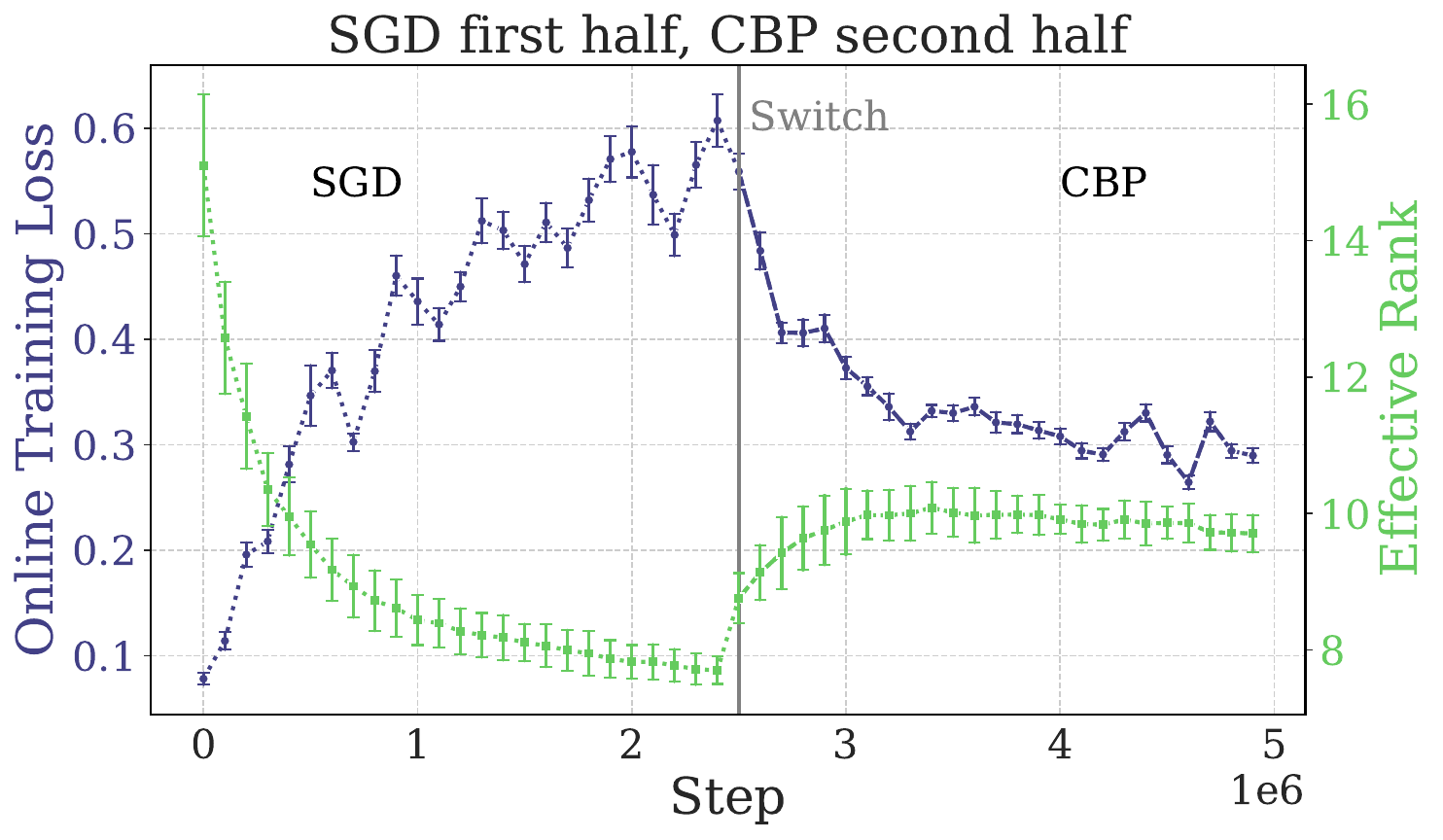}
    \includegraphics[width=0.3\linewidth]{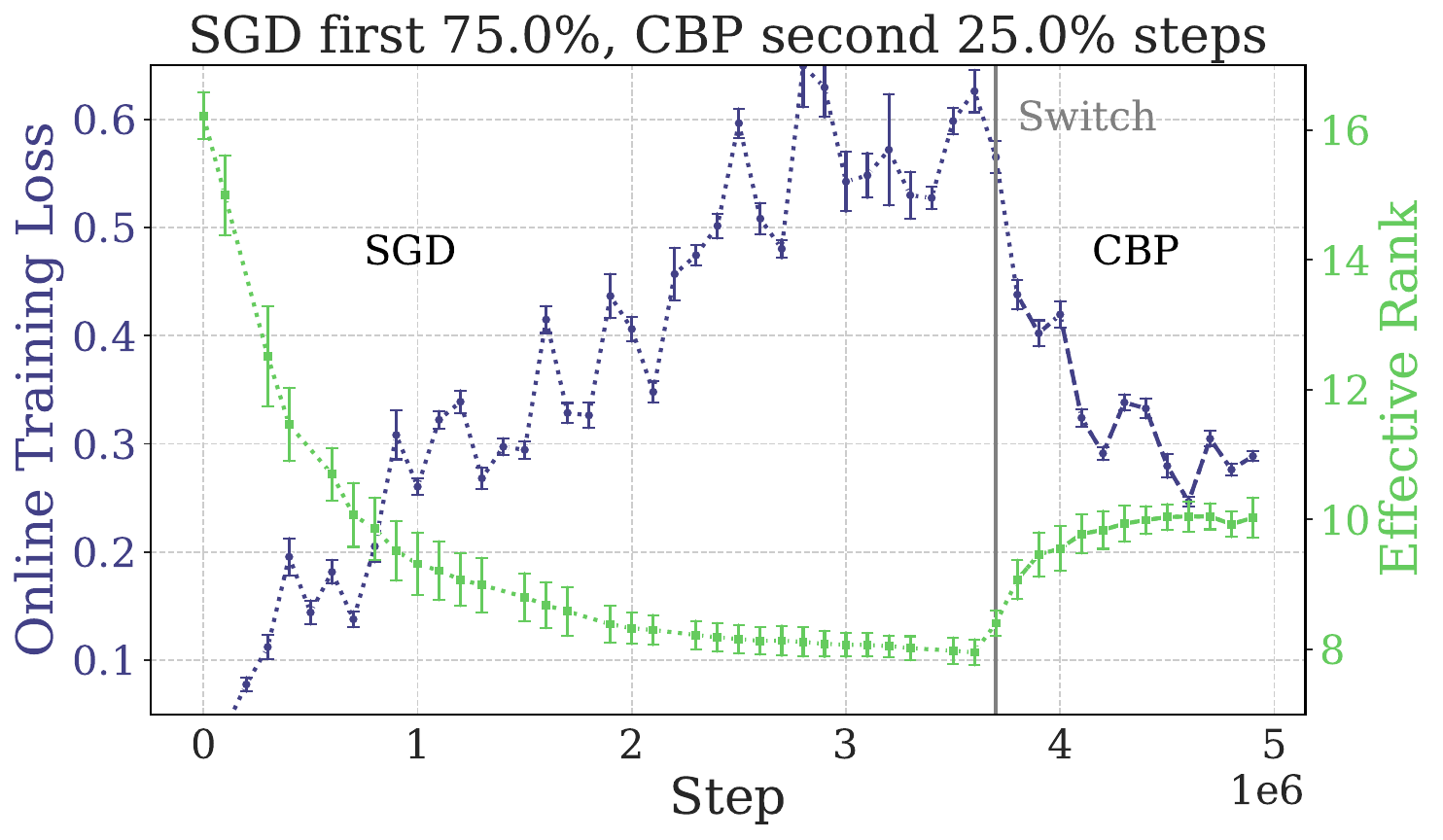}
    
        \vspace{-.3cm}
    \caption{Bit Flipping experiment on 5M samples, switching from SGD to CBP at 2.5M samples. Low rank structures emerge during training with standard Backpropagation (SGD), but after the switch Continual Backpropagation (CBP) is able to recover representational diversity, suggesting that CBP-like training could be effective for cloning too.
    (Experimental details in \cref{app:empirical_evidence_appendix}).}
    \label{fig:CBP-scale-rank}
\end{figure}
\begin{figure}[!ht]
    \centering
    \includegraphics[width=0.23\linewidth]{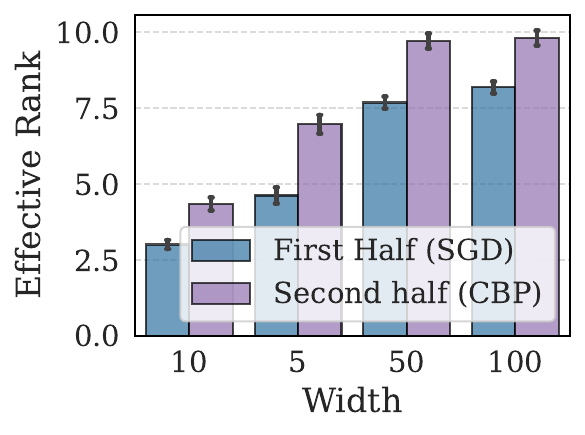}
    \includegraphics[width=0.23\linewidth]{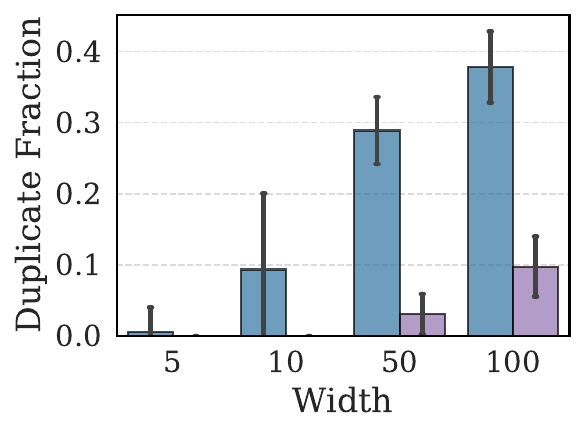}
    \includegraphics[width=0.23\linewidth]{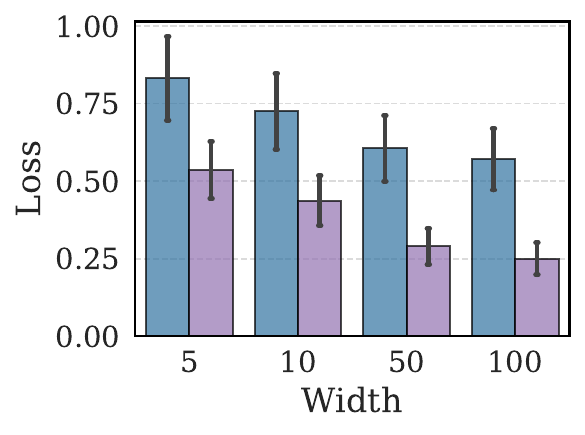}
    \vspace{-.3cm}
    \caption{Bit Flipping experiment on 5M samples, switching from SGD to CBP at 2.5M samples. For each of the two phases, we show the average over the last 100K steps. Duplicated structures (indicated by fraction of duplicate features at different layers/scales) emerge during training with standard Backpropagation (BP) but Continual Backpropagation (CBP) is able to decouple the cloned units. As the model width is increased more duplicate features emerge. The size of the data generating function is $100$. (Other experimental details in \cref{app:empirical_evidence_appendix}).}
    \label{fig:CBP-scale-dup}
\end{figure}

\subsubsection{Core Methodologies and Implementations}
\label{subsec:core_methodologies}
Several core methodologies underpin our experiments.
\paragraph{Cloning Implementation.}  Our cloning implementation is modular. For each architecture, we first need to decide the ``free'' parameter to expand. This is the feature dimension for MLP and ViT, and channels for CNN and ResNet. After creating a base and expanded model, our cloning implementation proceeds in a modular fashion. The key implementation idea that allowed this modular design is the principle that the cloning profile of inputs and outputs of different modules must be consistent. For example if inputs A and B to a module are assumed to be cloned, and if these are output by a different modules, that module must ensure this cloning. We can think of this as a matching cloning profile between connected modules. With this design in mind, for linear layers, weights and biases are replicated according to input/output expansion factors; weights connected to cloned input neurons are scaled (e.g., by $1/\alpha_{\text{in}}$ for an input duplication factor of $\alpha_{\text{in}}$) to maintain activation magnitudes. Convolutional layers see similar expansion of input/output channels, with kernels tiled and appropriately scaled while preserving spatial dimensions. For normalization layer, if affine features are learned, their cloning will be a simple duplication for different cloned units. The same applies to modules such as patch embeddings, which require a simple duplication.  Parameterized activations (e.g., PReLU) have their parameters correspondingly duplicated or broadcast. Any other units that does not have parameters, such as softmax layer or activations without parameter, will not require any particular treatment, because it has the potential to create cloning profiles that do not match. 
To fix this, we implemented a clone-aware flattening operation in CNNs ensures duplicated channels remain adjacent after flattening to preserve structure for subsequent fully-connected layers.
\paragraph{Noisy SGD optimizer} introduces Gaussian noise $\epsilon_t \sim \mathcal{N}(0, \sigma_t^2 \|g_t\|^2 I)$ to gradients $g_t$, where the noise scale $\sigma_t = \sigma_0 \cdot \lambda^t$ decays over time $t$ from an initial value $\sigma_0$. The values of $\sigma_0$ and $\lambda$ are hyperparameters of the optimizer. Later, we show the effect of varying them on the cloned model dynamics.
 
\paragraph{Continual Backpropagation (CBP)}, implemented in \texttt{src/utils/cbp\_optimizer.py} following the Generate-and-Test framework, aims to maintain plasticity by selectively replacing low-utility neurons. Utility tracking involves measures like \emph{Contribution Utility} ($(u_{\text{contrib}}^{(t)}))_i = |h_i^{(t)}| \cdot |\bar{w}_{\text{out},i}|$) and \emph{Adaptable Contribution} ($(u_{\text{adapt}}^{(t)})_i = \frac{|h_i^{(t)} - \bar{h}_i^{(t)}| \cdot |\bar{w}_{\text{out},i}|}{|\bar{w}_{\text{in},i}|}$), where $h_i^{(t)}$ is activation, $\bar{h}_i^{(t)}$ is its running average, and $\bar{w}$ terms are mean weight magnitudes. Instantaneous utilities are smoothed using an exponential moving average ($\rho$ is decay rate, $a_i^{(t)}$ is neuron age): $u_i^{(t)} = \rho u_i^{(t-1)} + (1-\rho) \tilde{u}_i^{(t)}$, with a bias-corrected version $\hat{u}_i^{(t)} = u_i^{(t)}/(1 - \rho^{a_i^{(t)}})$. Neuron replacement occurs for eligible mature neurons ($a_i > \tau_{\text{maturity}}$) with the lowest utility (a fraction $r_{\text{replace}}$ of layer neurons $N_L$). Selected neurons are reinitialized (incoming weights via Kaiming, outgoing to zero, utility/age reset). A bias correction ($b_{\text{next}} \leftarrow b_{\text{next}} + W_{\text{out}}[:, i] \cdot \bar{h}_i$) is applied to the subsequent layer.
\paragraph{Metrics for Analysis} Our comprehensive metric suite quantifies various aspects of network behavior and plasticity loss. \emph{Single and pair feature metrics} include the fraction of ``dead'' neurons, identified when $\frac{1}{N} \sum_{i=1}^{N} \mathbf{1}[|H_{ij}| < 10^{-7}] > \tau_{\text{dead}}$ for neuron $j$ across $N$ samples, with $\tau_{\text{dead}} = 0.95$. ``Duplicate'' neurons are detected through cosine similarity patterns, with neurons $j,k$ are considered duplicates if $\tilde{H}_j^T \tilde{H}_k > \tau_{\text{corr}} = 0.95$, where  activations are normalized by feature $\tilde{H}_j = H_{\cdot,j}/\|H_{\cdot,j}\|_2$. ``Saturated'' neurons are identified when the ratio of gradient magnitude to mean activation magnitude $|G_{ij}|/\max(\mu_j, \epsilon)$ falls below $\tau_{\text{sat}} = 10^{-4}$ for more than $p_{\text{sat}} = 99\%$ of samples in a batch. \emph{Representation diversity metrics} include effective rank, computed as $\exp(-\sum_i p_i \log p_i)$ where $p_i = \sigma_i/\sum_j \sigma_j$ are normalized singular values from the activation matrix SVD; stable rank, calculated as $\|\tilde{H}\|_F^4/\text{tr}((\tilde{H}^T\tilde{H})^2)$ for mean-centered activations $\tilde{H}$;  \emph{Cloning quality} is assessed by $R^2$ scores between base and cloned model activations, computed as $R^2 = 1 - \text{Var}(\text{residuals})/\text{Var}(\text{total})$ where the predictor is the mean of $N$ cloned units and we measure explained variance across individual units relative to the total variance in that layer. This is done for both forward and backward activations across all layers, and numbers presented here are averages across all layers and both forward and backwards for the fixed batch that we are measuring the metrics. We also keep tracking all metrics for both base and cloned model after training to provide a comparison between the two. 
\subsubsection{General Setup and Procedures}
\label{subsec:general_setup}
\paragraph{Model Architectures} include Multi-Layer Perceptrons (MLPs), Convolutional Neural Networks (CNNs), ResNets, and Vision Transformers (ViTs), with configurations (depth, width, activations, normalization layer, dropout). The default configurations are as follows:
Our Multi-Layer Perceptron (MLP) consists of $5$ hidden layers with 128 units each, employing ReLU activations, batch normalization applied before activation, and $20\%$ dropout. The Convolutional Neural Network (CNN) architecture comprises 3 convolutional layers with $[64, 128, 256]$ channels respectively, using $3\times3$ kernels with stride 1 and padding 1, followed by $2\times2$ max pooling operations. The convolutional features are processed by a single fully connected layer with 512 units, with ReLU activations, batch normalization, and 10\% dropout throughout. For ResNet, we implement a ResNet-18 variant with $[2, 2, 2, 2]$ residual blocks per stage, starting with 64 base channels that double at each stage, using ReLU activations, batch normalization, and 10\% dropout. The Vision Transformer (ViT) architecture divides input images into $8\times8$ pixel patches, which are projected to $384$-dimensional embeddings and processed through $6$ transformer layers with $6$ attention heads each. The ViT employs an MLP ratio of $4.0$ (yielding hidden dimensions of 1536), GELU activations, layer normalization, and $10\%$ dropout for both general operations and attention mechanisms. All normalization layers include learnable affine parameters ($\gamma$, $\beta$), unless stated otherwise, and bias terms are enabled where applicable. Default hyperparameter configurations for each architecture can be adjusted per experiment as described in the experimental setup.
\paragraph{Datasets and Preprocessing} involve standard image classification benchmarks: MNIST ($28 \times 28$ grayscale), CIFAR-10 and CIFAR-100 ($32 \times 32$ RGB with standard augmentations like random crops and flips), and Tiny ImageNet ($64 \times 64$ RGB). Standard train/test splits are used. For all the figures and results reported here, we used tiny ImageNet dataset for continual learning experiments, while for cloning, CIFAR-10 was used.  
\paragraph{Training Configuration} involves optimizers like Adam or SGD without momentum and no weight decay with otherwise parameters in torch. The learning rates for the continual experiments where set to $0.001$ using Adam for all architectures except for Vision Transformer, which was set to $0.0001.$ For cloning experiments with dropout, we varied the learning rate on a grid $0.01, 0.001, 0.0001.$
\paragraph{Experimental Control} is maintained through comprehensive random seeding, which controls the randomness across all relevant libraries (Python, NumPy, PyTorch) and CuDNN deterministic mode.  We used 5 seeds for all experiments to calculate confidence intervals. Experiments utilize GPUs when available, falling back to CPUs otherwise. Metrics are typically computed at fixed epoch intervals (e.g., every 5 epochs), often on consistent fixed data batches for reproducibility. Computationally intensive metrics like SVD may use subsampling of the features or samples to make them less expensive. 
\paragraph{Computational Resources.}
For the continual learning and cloning experiments, our experimental grid consisted of approximately 2,000 individual runs (counting each random seed separately). These experiments were executed on a cluster of NVIDIA A100 GPUs, utilizing a heterogeneous mix of 40GB and 80GB memory variants. The total computational cost for these experiments was approximately 10,000 GPU-hours.
The bit flipping experiments and additional theory validation experiments were conducted on a more diverse set of hardware, utilizing lower-end computational nodes equipped with NVIDIA RTX 3090, V100, and RTX 2080 GPUs. This heterogeneous setup was sufficient for these less computationally intensive experiments, and the overall compute amounted to under 100 GPU-hours on these nodes.
The theoretical validation figures and numerical simulations presented in the theory appendix (\Cref{app:numerical_validations_activations}) were generated on a MacBook using CPU computation only.
\paragraph{Figures details.} 
Unless stated otherwise, all our figures report standard deviations over $5$ experiment randomization, by the use of a different seed. Additionally, to reduce the number of points in the plot, in 
\cref{fig:emergence_lop_symptoms_cl,fig:norm-rank-nG,fig:norm-rank,fig:CBP-scale-dup,fig:CBP-scale-rank} we plot the average over time windows of $1000$ steps. 
\subsection{Additional Figures and Empirical Substantiation}
This subsection includes placeholder figures for concepts discussed in the main text, for which specific existing figures were not available or suitable for direct inclusion in the main body.
\begin{figure}[!ht]
    \centering
    \includegraphics[width=0.2\linewidth]{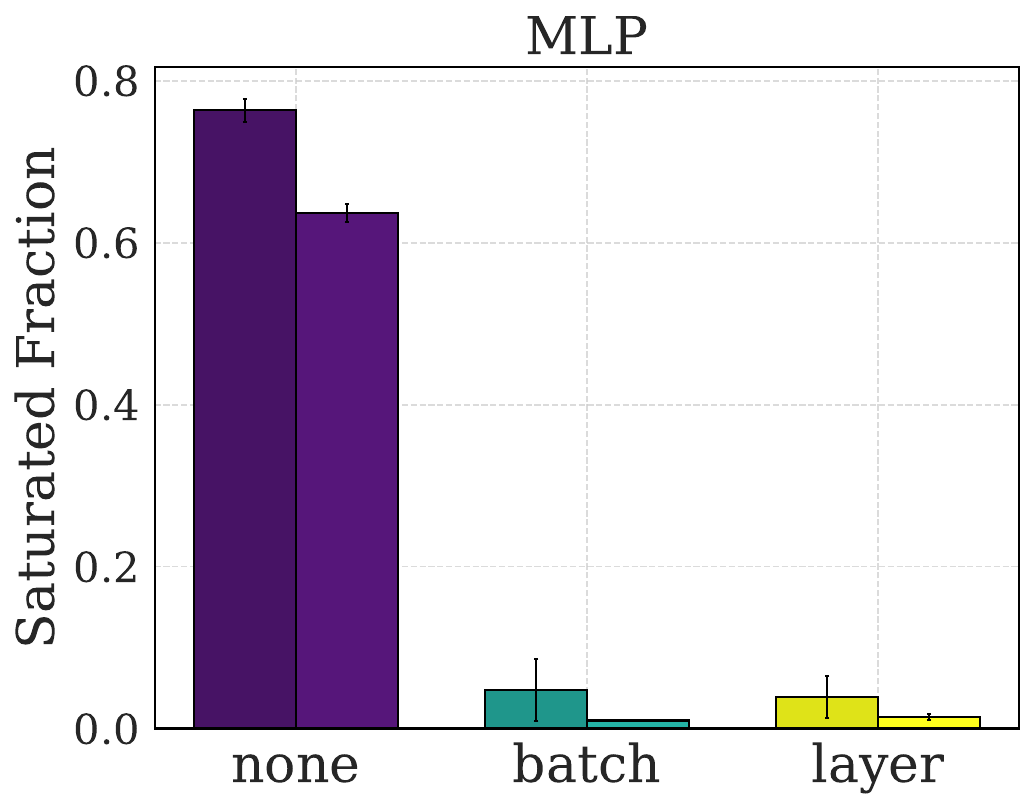}
    \includegraphics[width=0.2\linewidth]{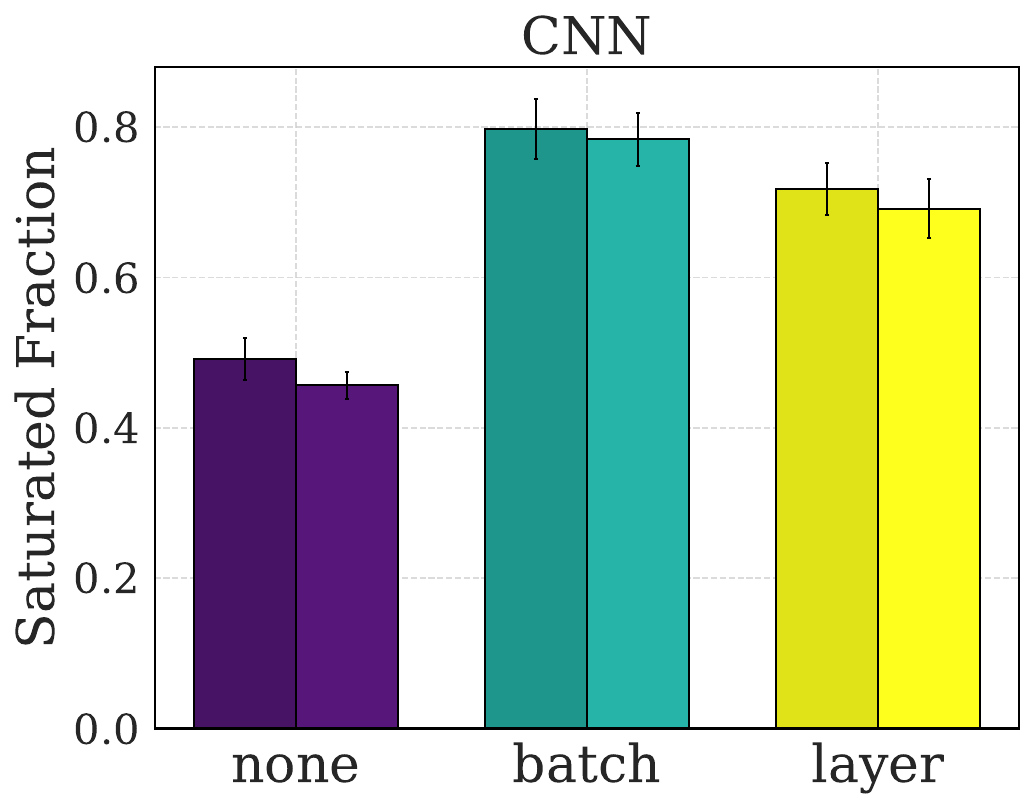}
    \includegraphics[width=0.2\linewidth]{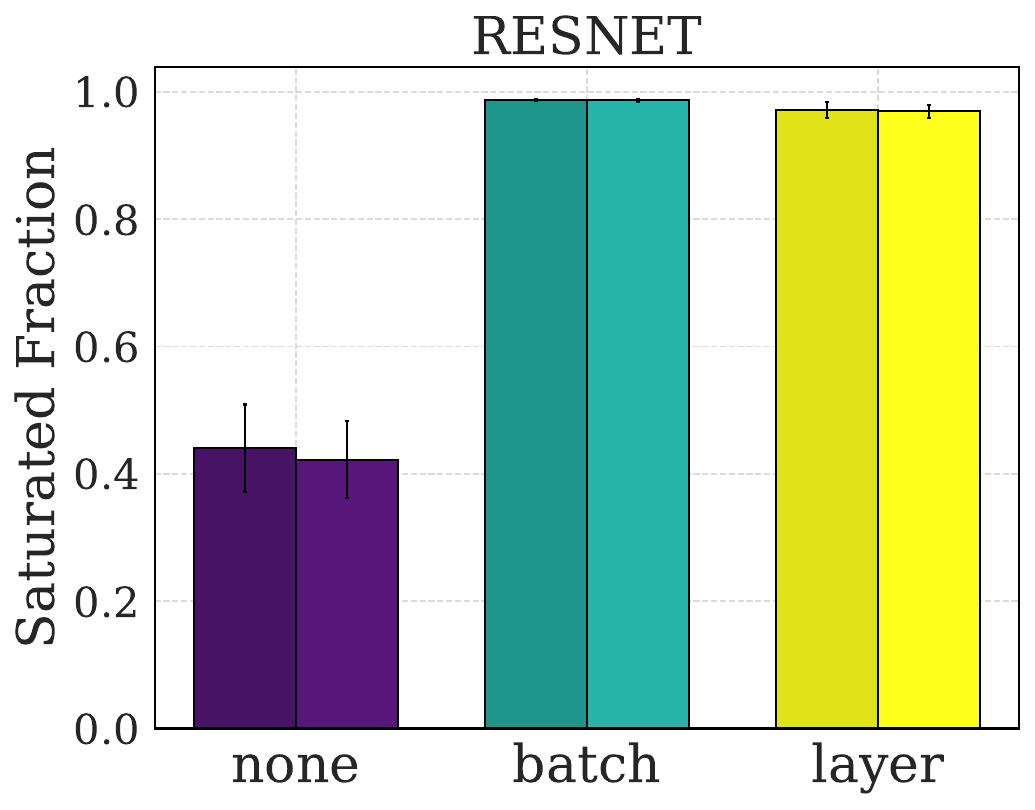}
    \includegraphics[width=0.2\linewidth]{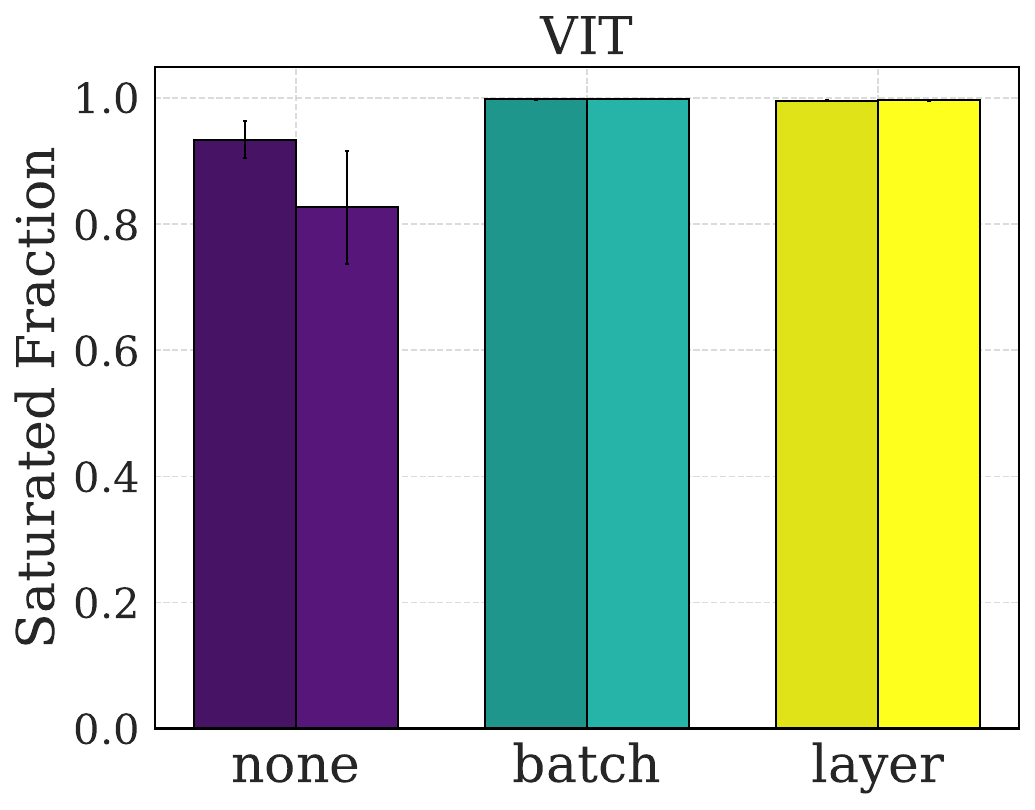} \\
    \includegraphics[width=0.2\linewidth]{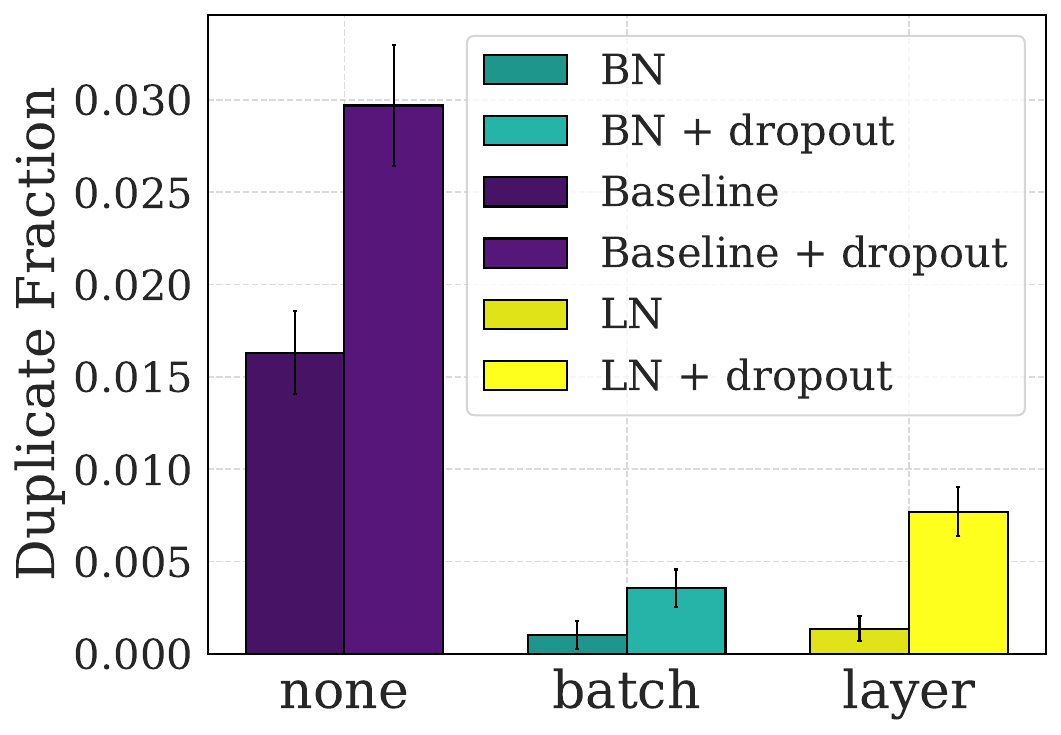}
    \includegraphics[width=0.2\linewidth]{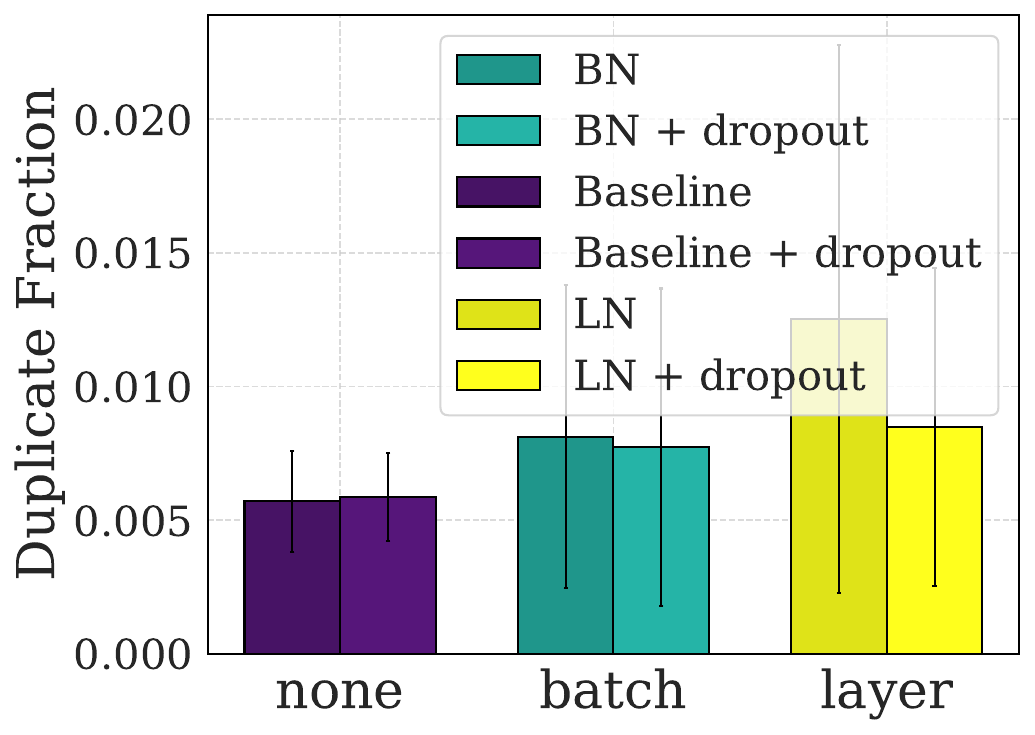}
    \includegraphics[width=0.2\linewidth]{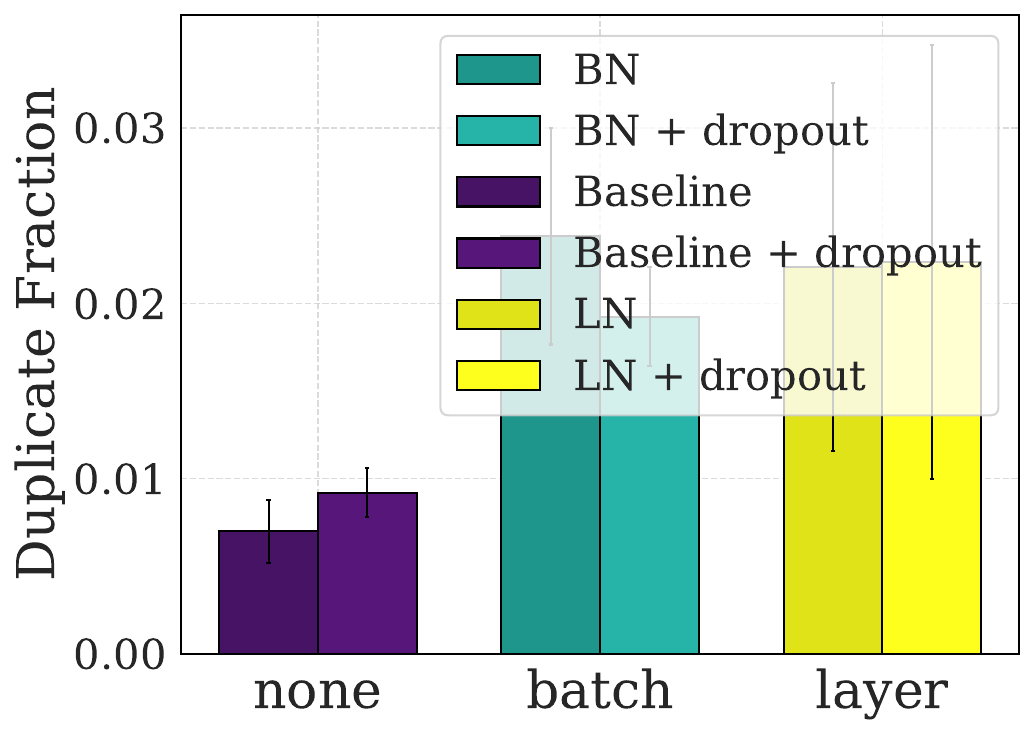}
    \includegraphics[width=0.2\linewidth]{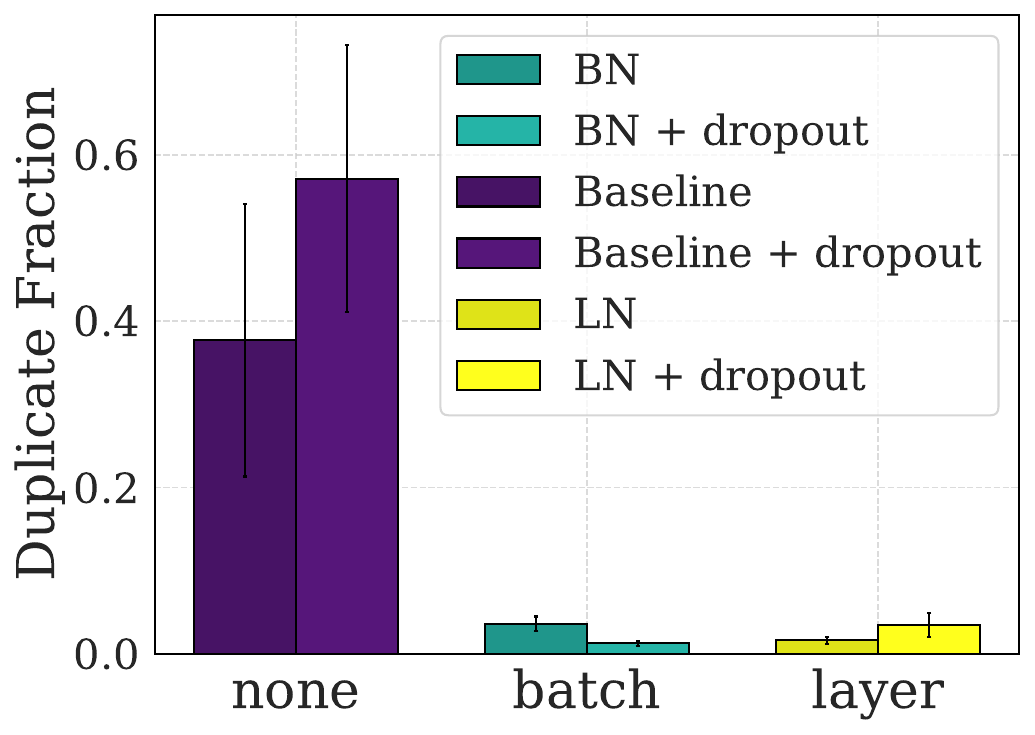} \\
    \includegraphics[width=0.2\linewidth]{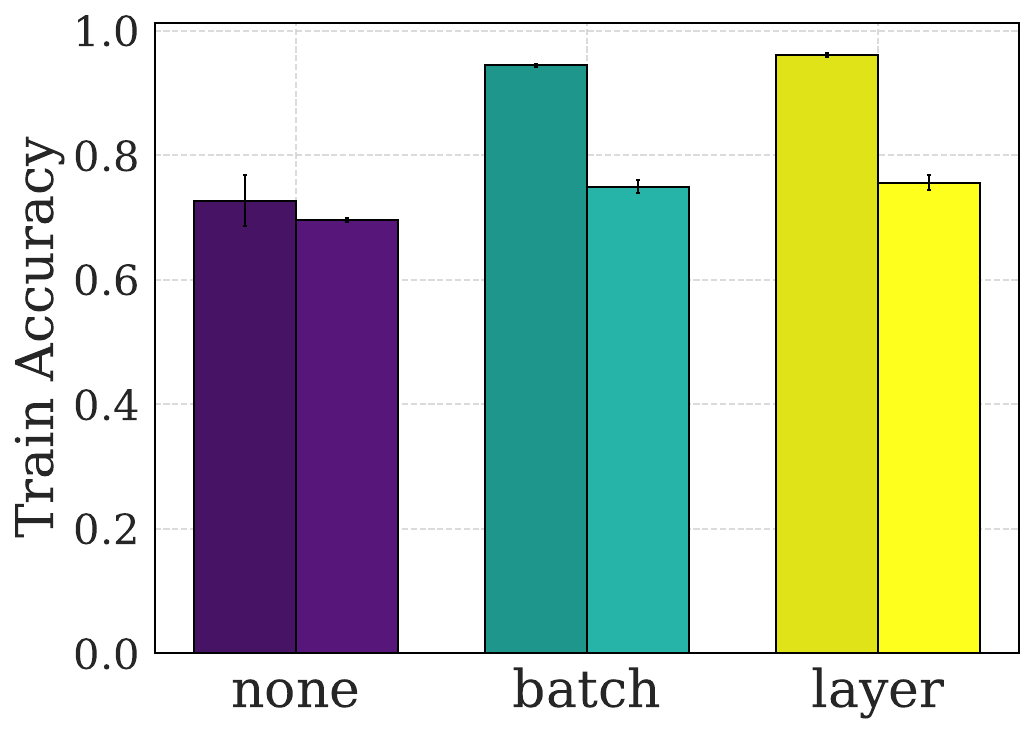}
    \includegraphics[width=0.2\linewidth]{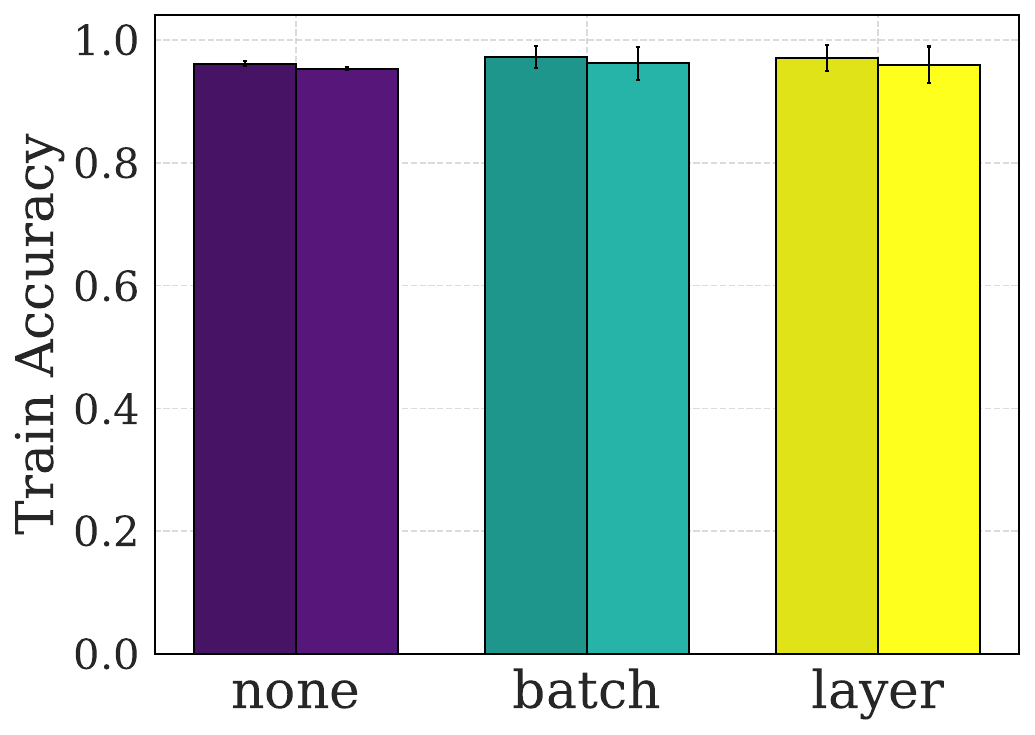}
    \includegraphics[width=0.2\linewidth]{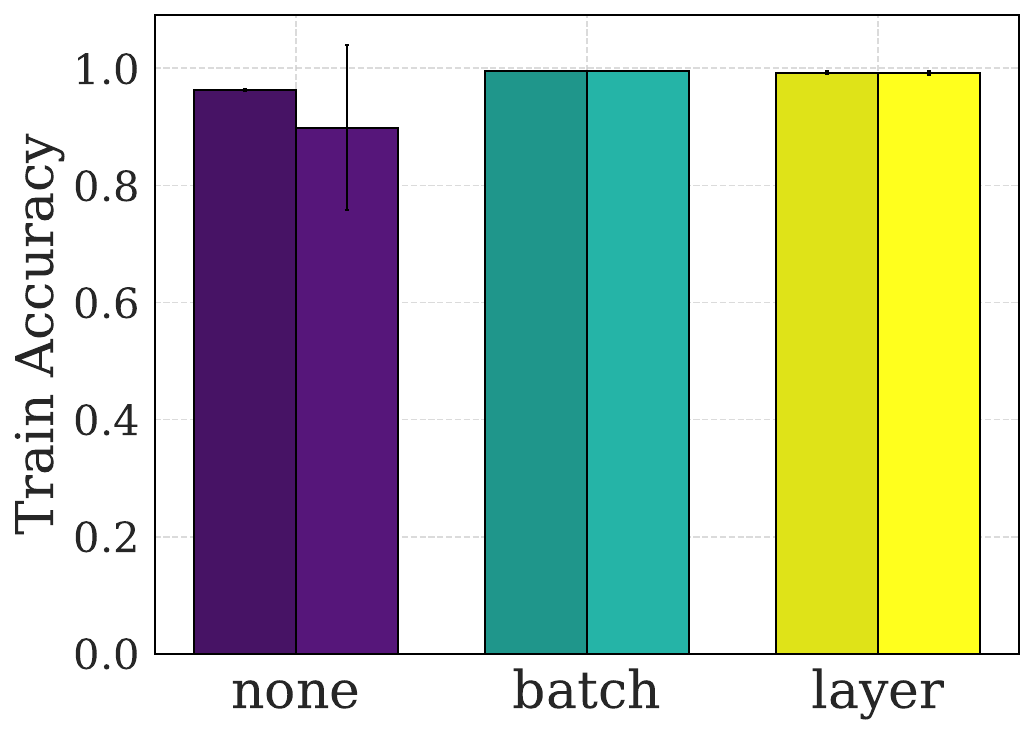}
    \includegraphics[width=0.2\linewidth]{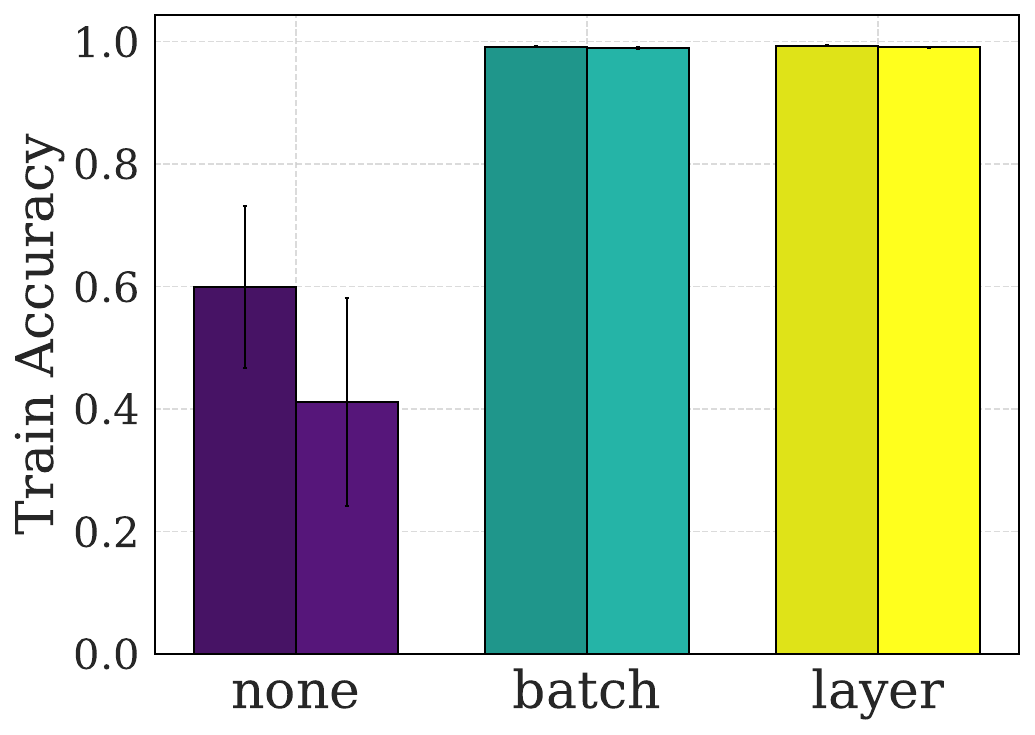}
    \caption{Normalization reduces the number of dead/saturated units (top row) and duplicated units (middle row), and its impact on training accuracy (bottom row) across different architectures. The training accuracy displayed is calculated as the average online accuracy over the entire training length. These results highlight the role of normalization in mitigating LoP symptoms.}
    \label{fig:normalizations-recovering}
\end{figure}
\begin{figure}[!ht]
    \centering
    \includegraphics[width=0.23\linewidth]{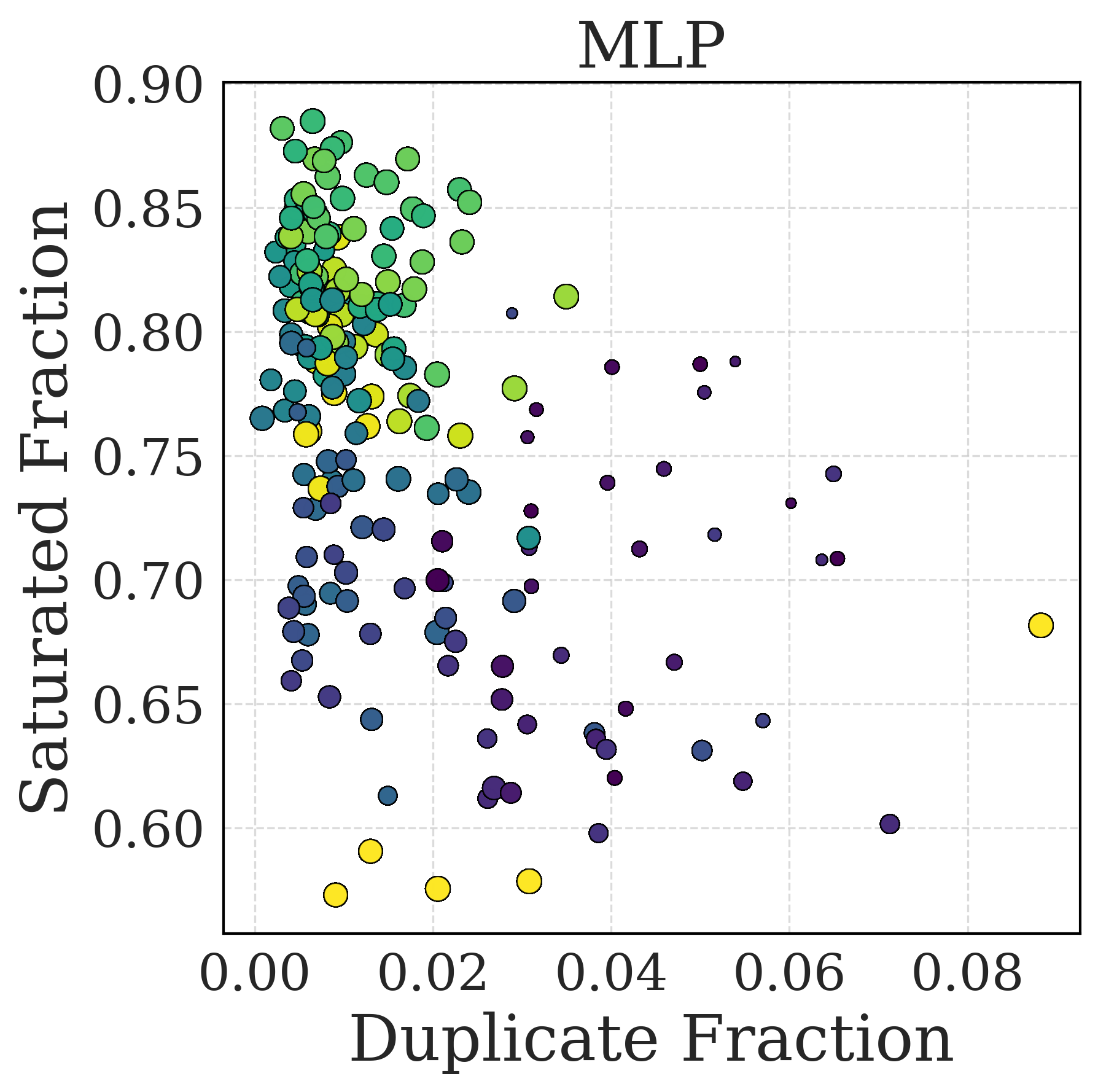}
    \includegraphics[width=0.23\linewidth]{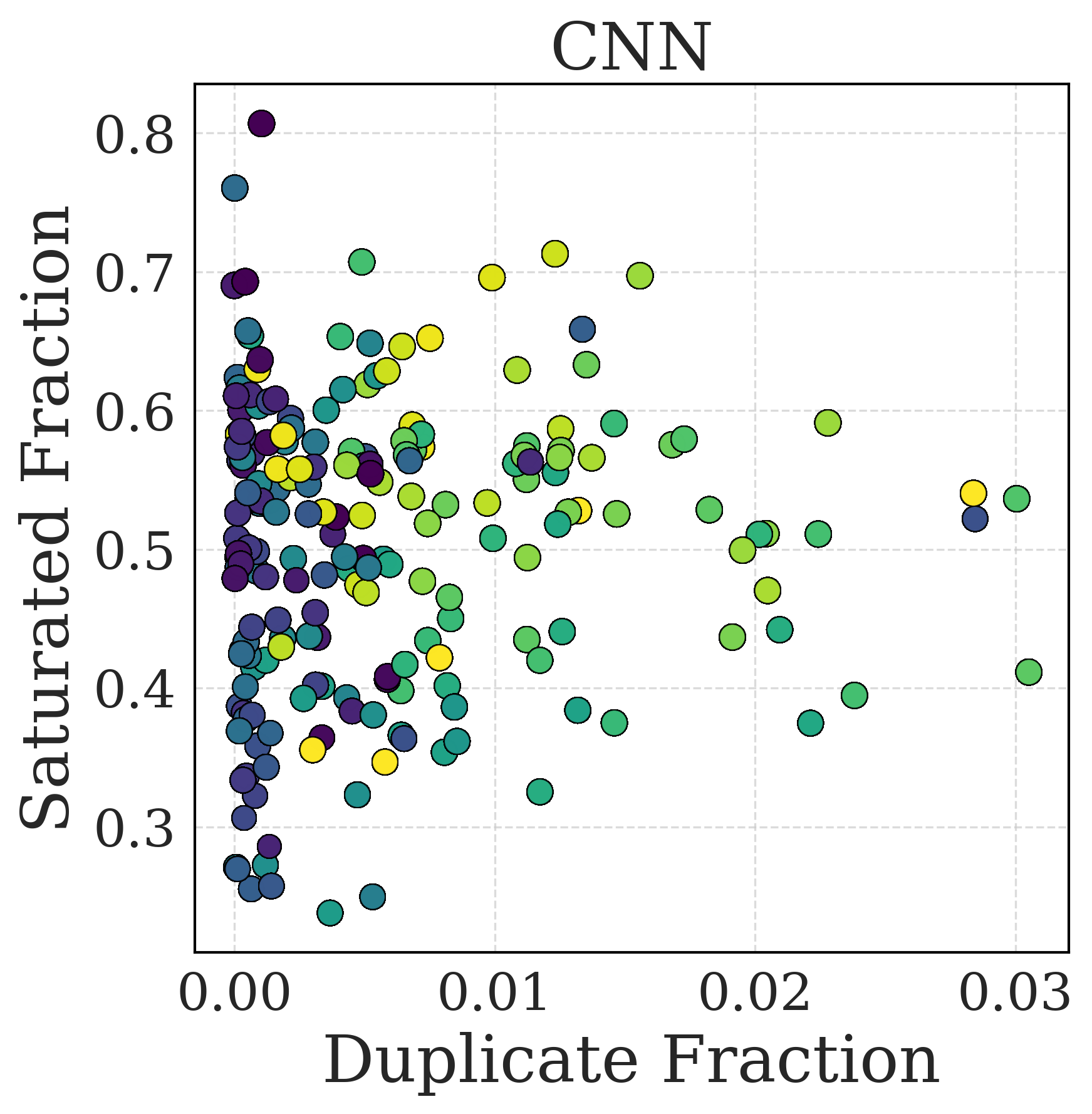}
    \includegraphics[width=0.23\linewidth]{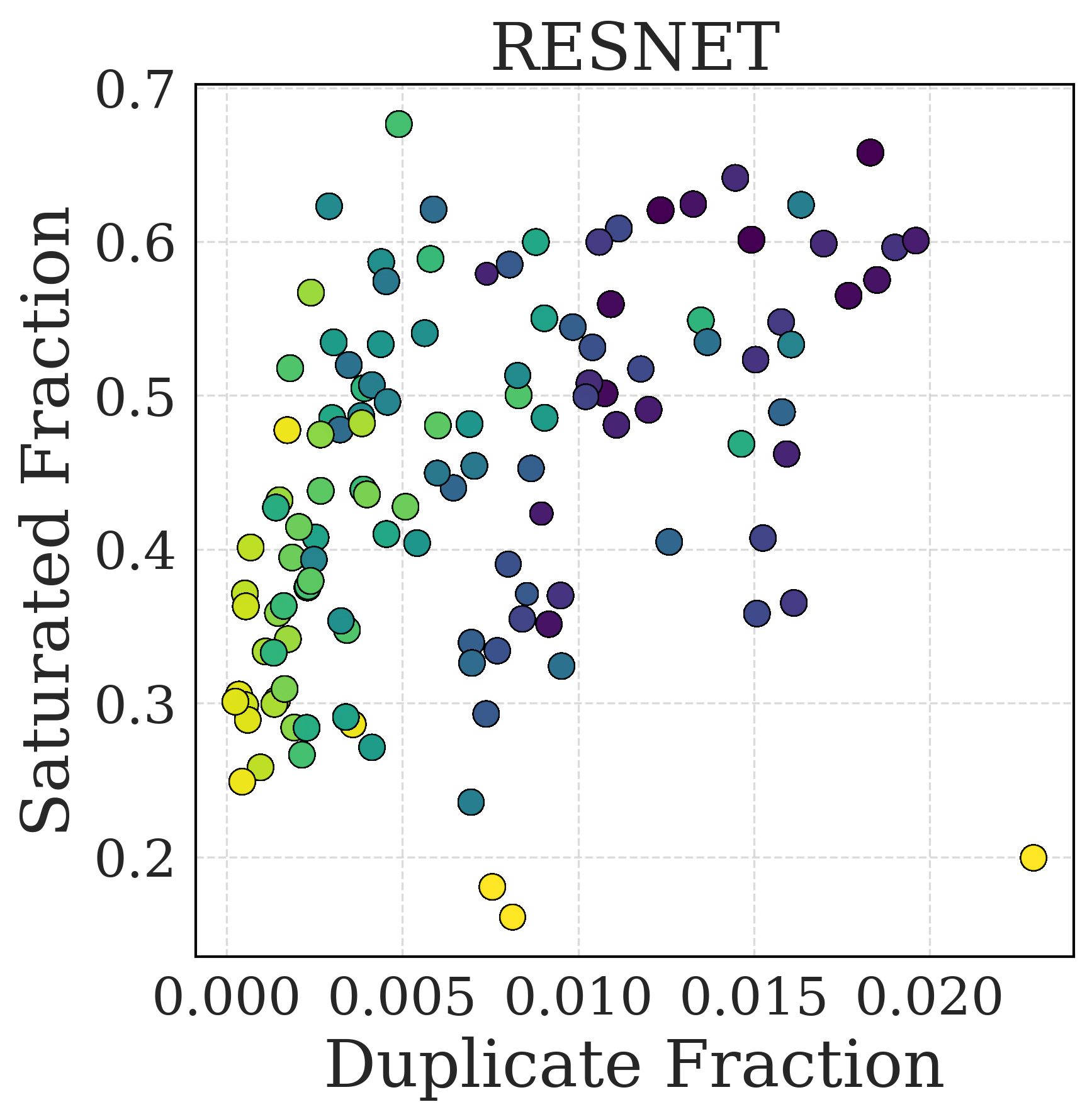}
    \includegraphics[width=0.25\linewidth]{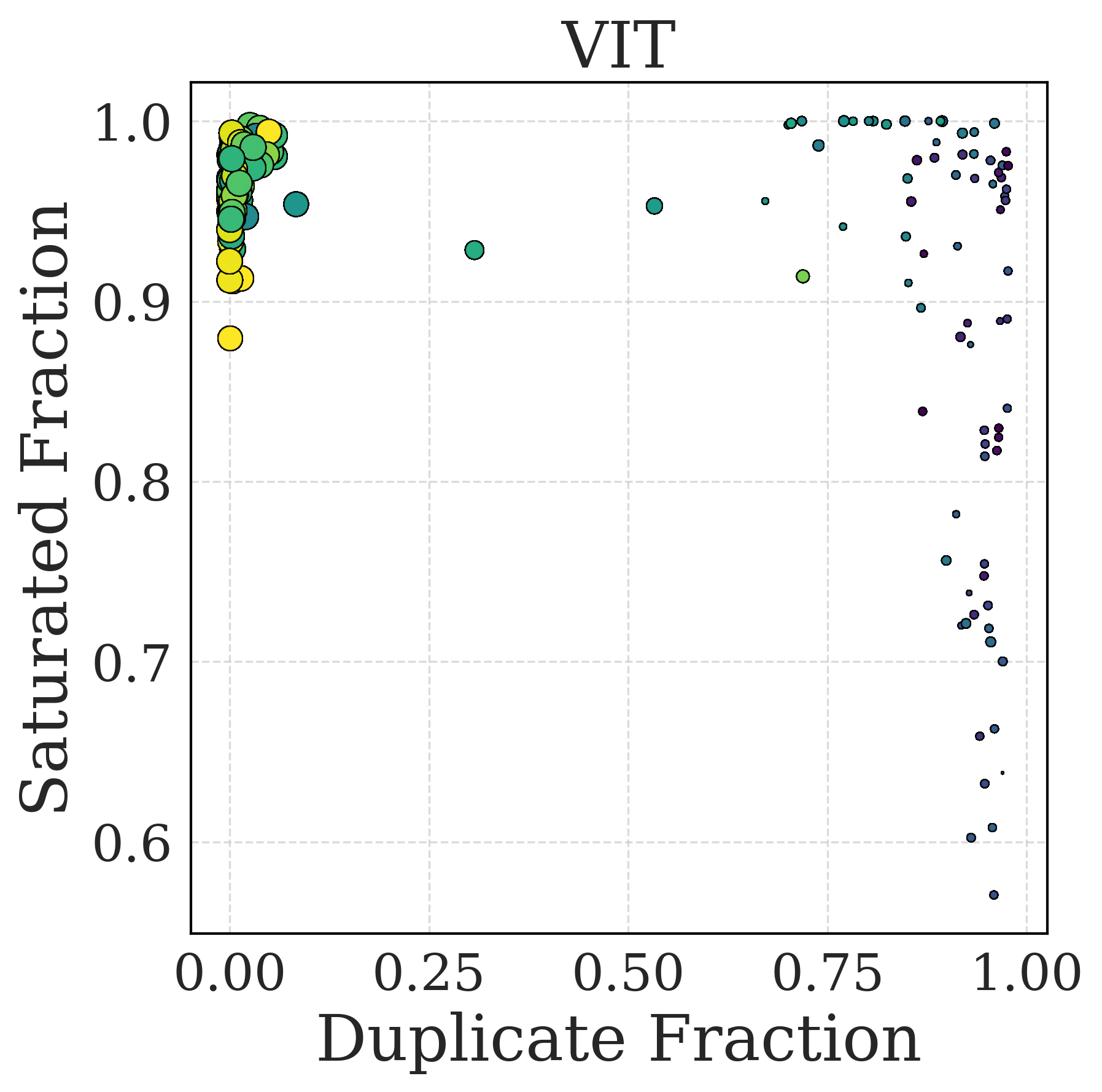}
    \caption{Evolution of duplicate/dead unit fractions and training accuracy. The colors correspond to training steps (lighter is earlier) and the points size to the Training Accuracy (bigger is higher). This figure illustrates the correlation between the increase in LoP symptoms (duplicate/dead units) and training dynamics.}
    \label{fig:dupfrac-LoP-dead-dup}
\end{figure}
\begin{figure}[!ht]
    \centering
    \includegraphics[width=0.24\linewidth]{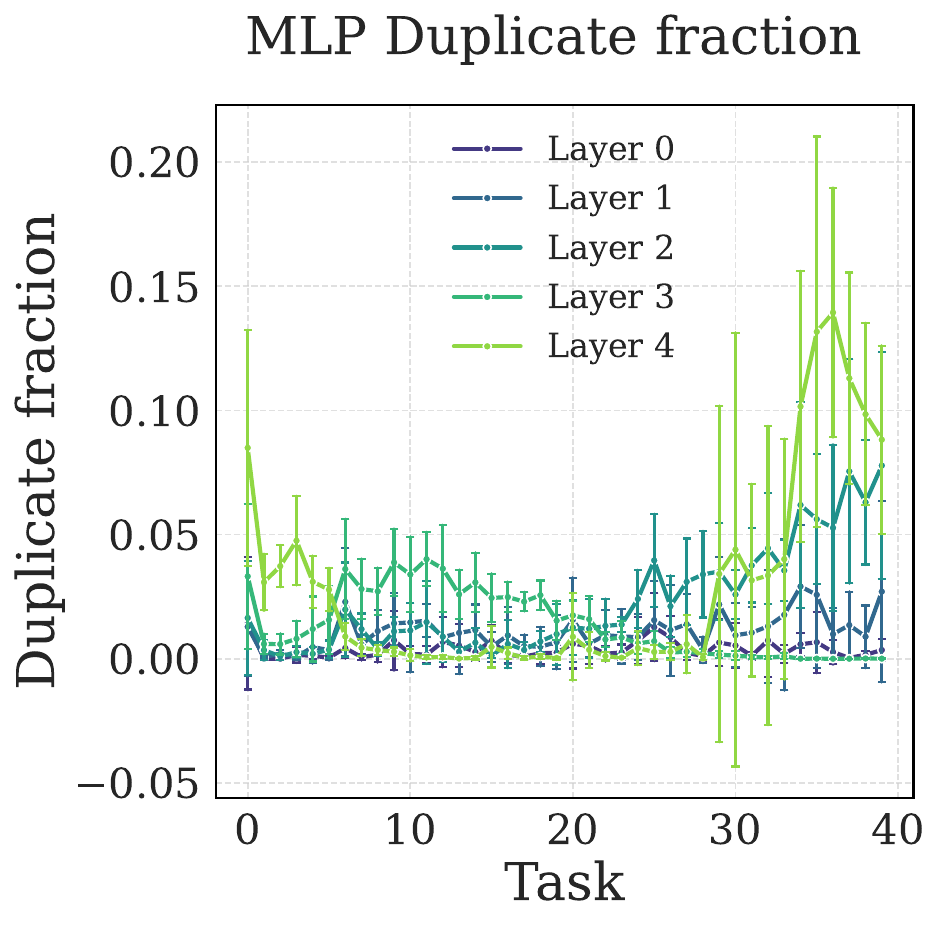}
    \includegraphics[width=0.24\linewidth]{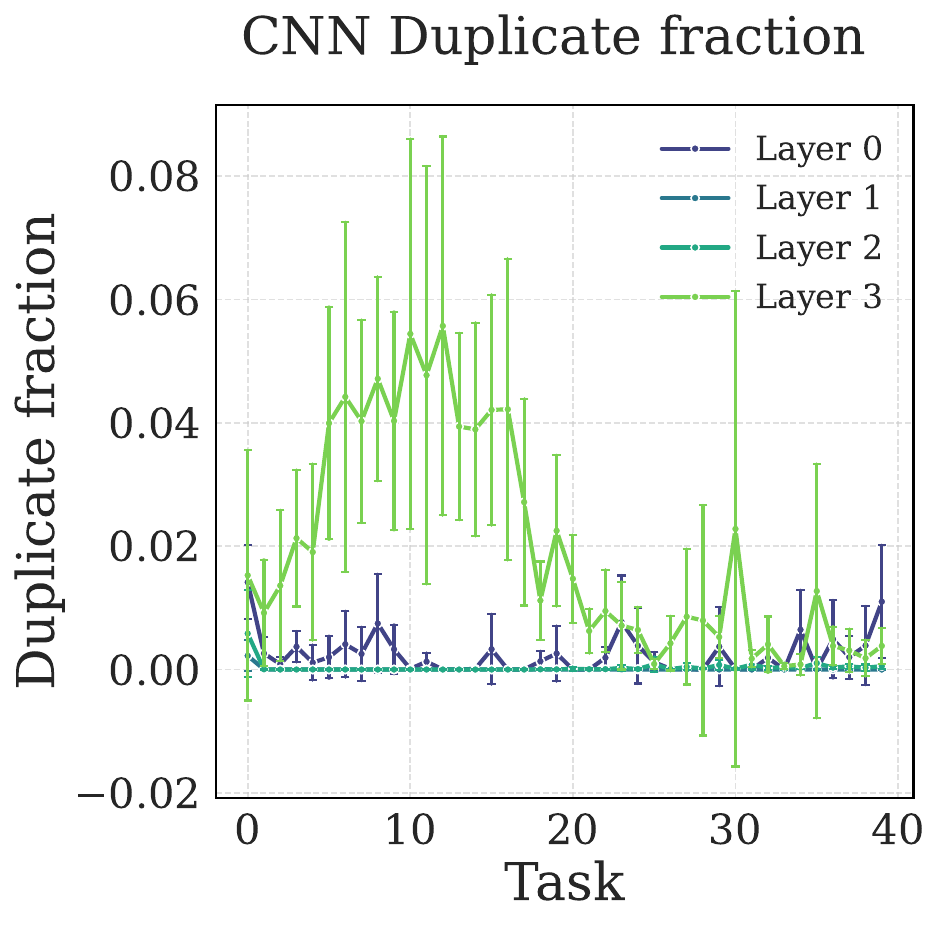}
    \includegraphics[width=0.24\linewidth]{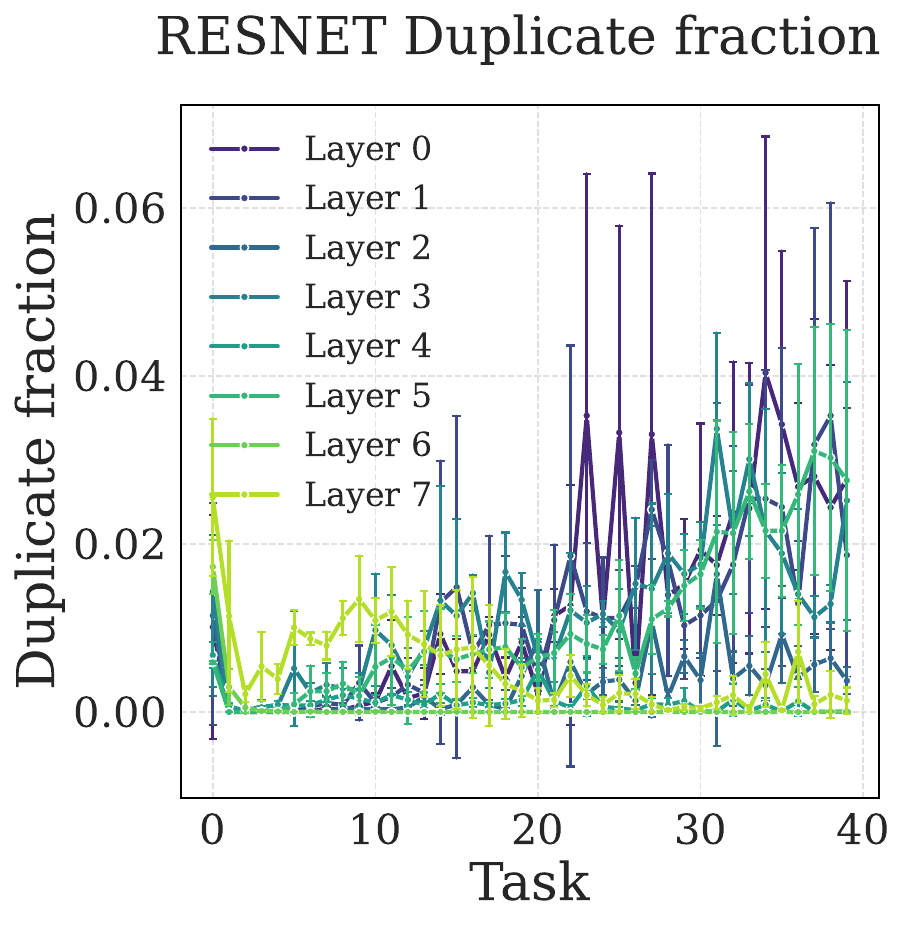}
    \includegraphics[width=0.24\linewidth]{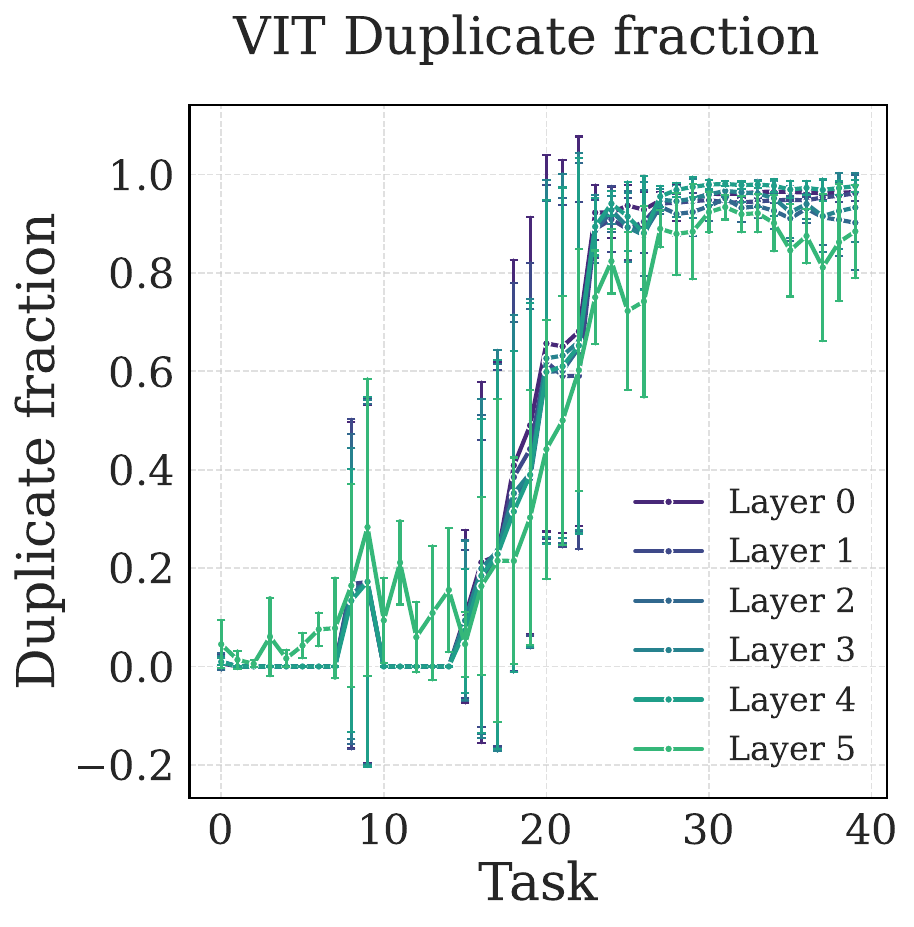}
    \caption{Emergence of duplicate units layer-wise during training without normalization and no dropout. This figure shows the increasing fraction of duplicate units as training progresses, a symptom of LoP. }
    \label{fig:dupfrac-LoP-layerwise}
\end{figure}
\begin{figure}[!ht]
    \centering
    \includegraphics[width=0.24\linewidth]{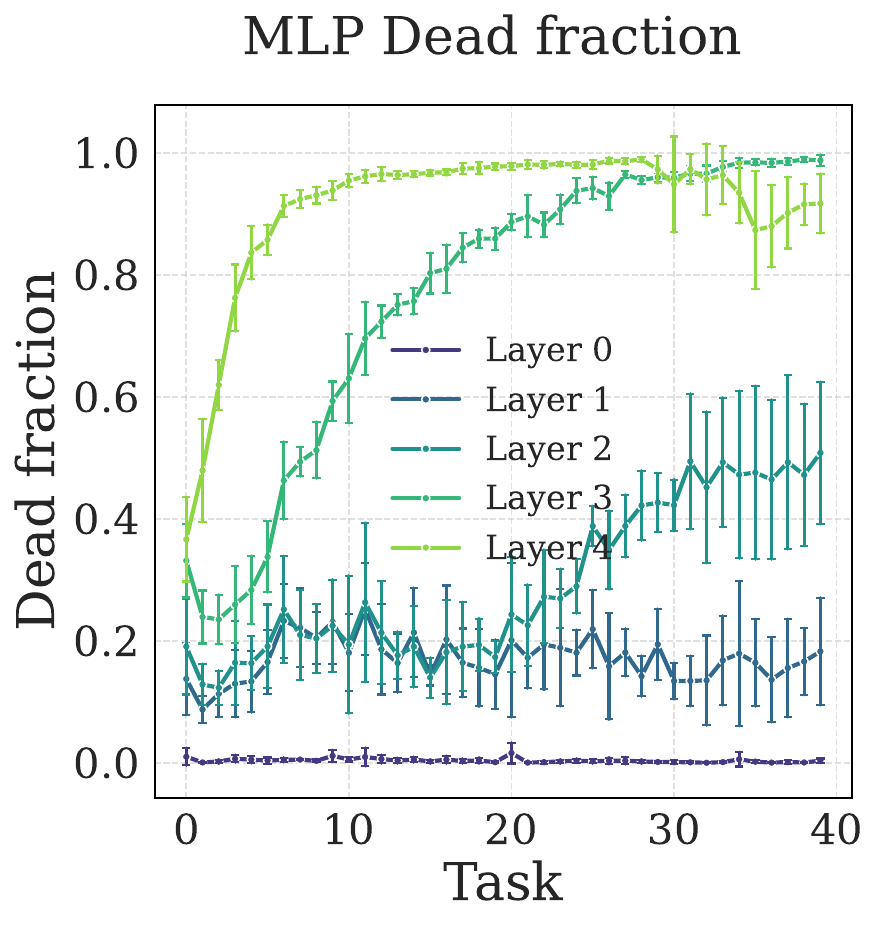}
    \includegraphics[width=0.24\linewidth]{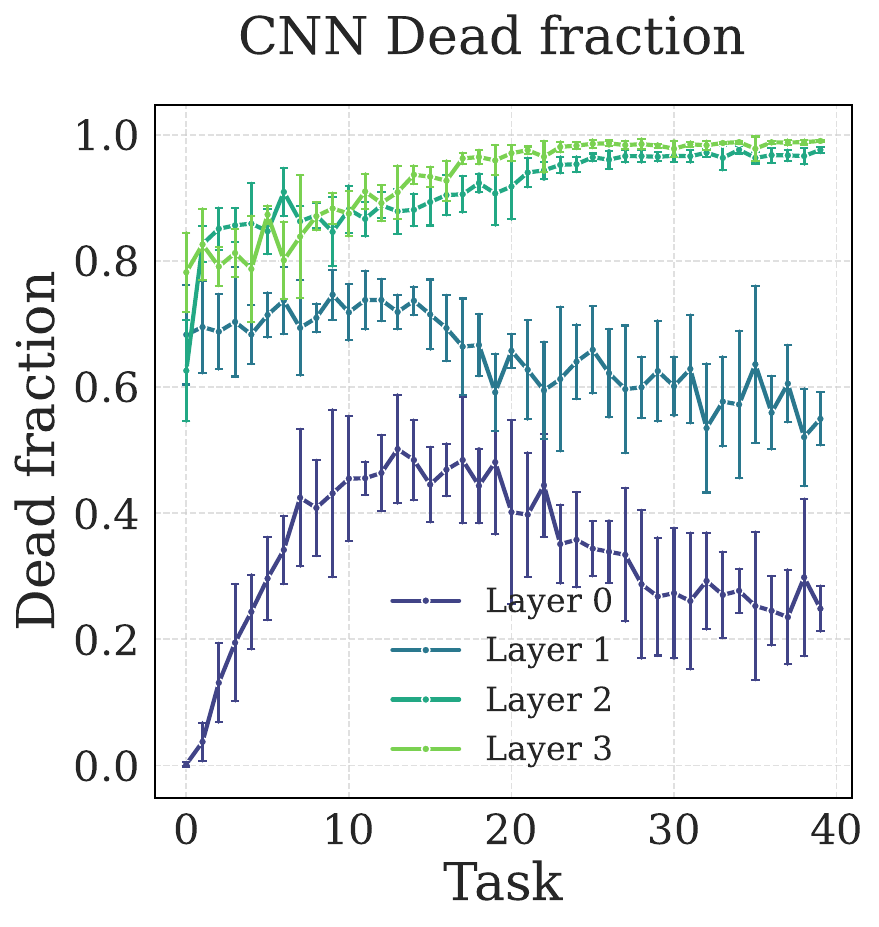}
    \includegraphics[width=0.24\linewidth]{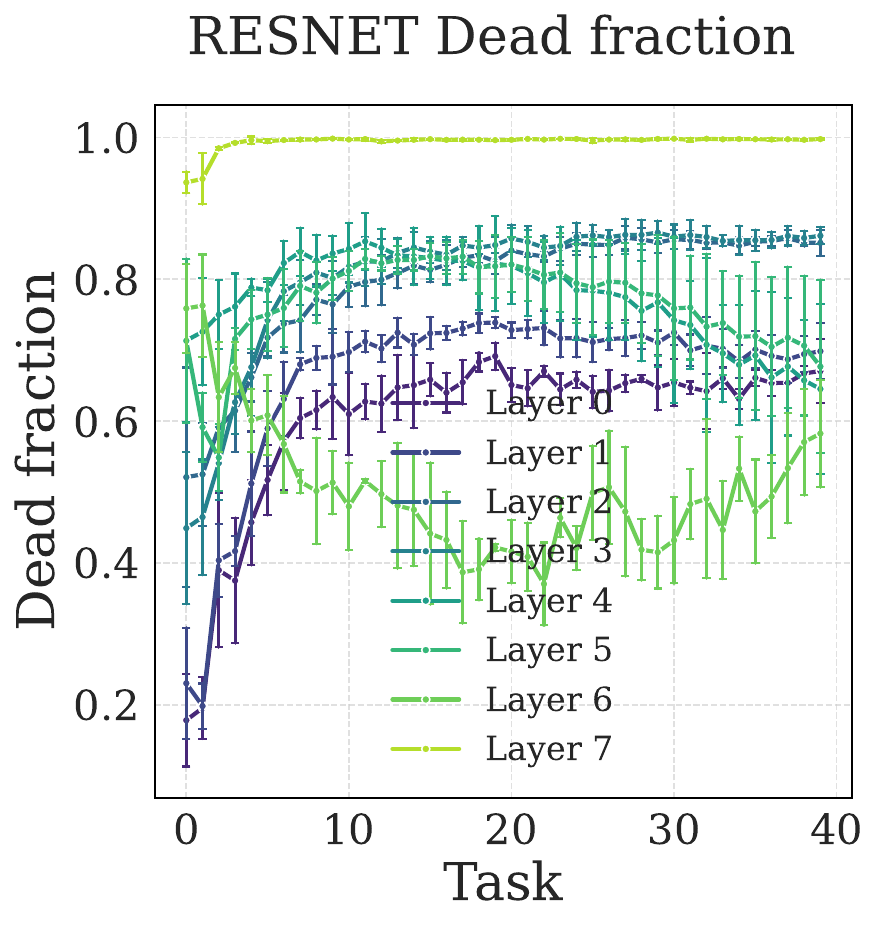}
    \includegraphics[width=0.24\linewidth]{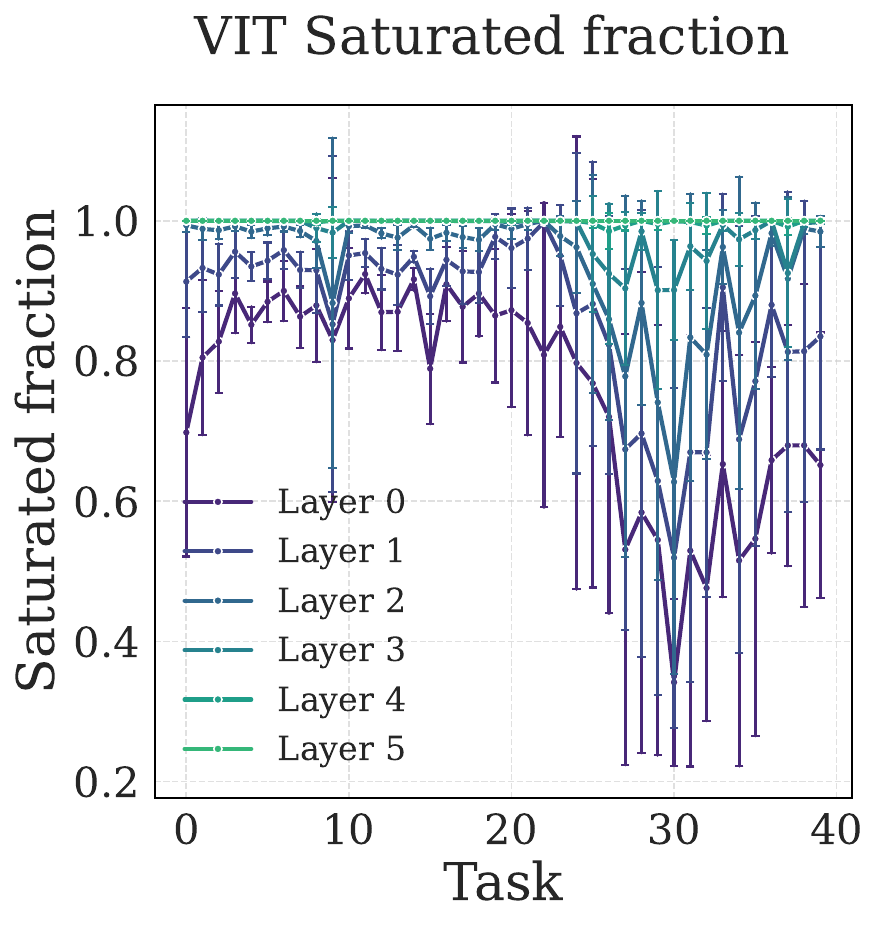}
    \caption{Emergence of dead or saturated units layer-wise during training without normalization and no dropout. This figure shows the increasing fraction of dead units as training progresses, a symptom of LoP.}
    \label{fig:dupfrac-LoP}
\end{figure}
\begin{figure}
    \centering
    \includegraphics[width=0.3\linewidth]{images/mlp_cloning_r2_plot.pdf}
    \includegraphics[width=0.3\linewidth]{images/mlp_cloning_losses_plot.pdf}
    \includegraphics[width=0.3\linewidth]{images/mlp_cloning_rank_plot.pdf}
    
    \includegraphics[width=0.3\linewidth]{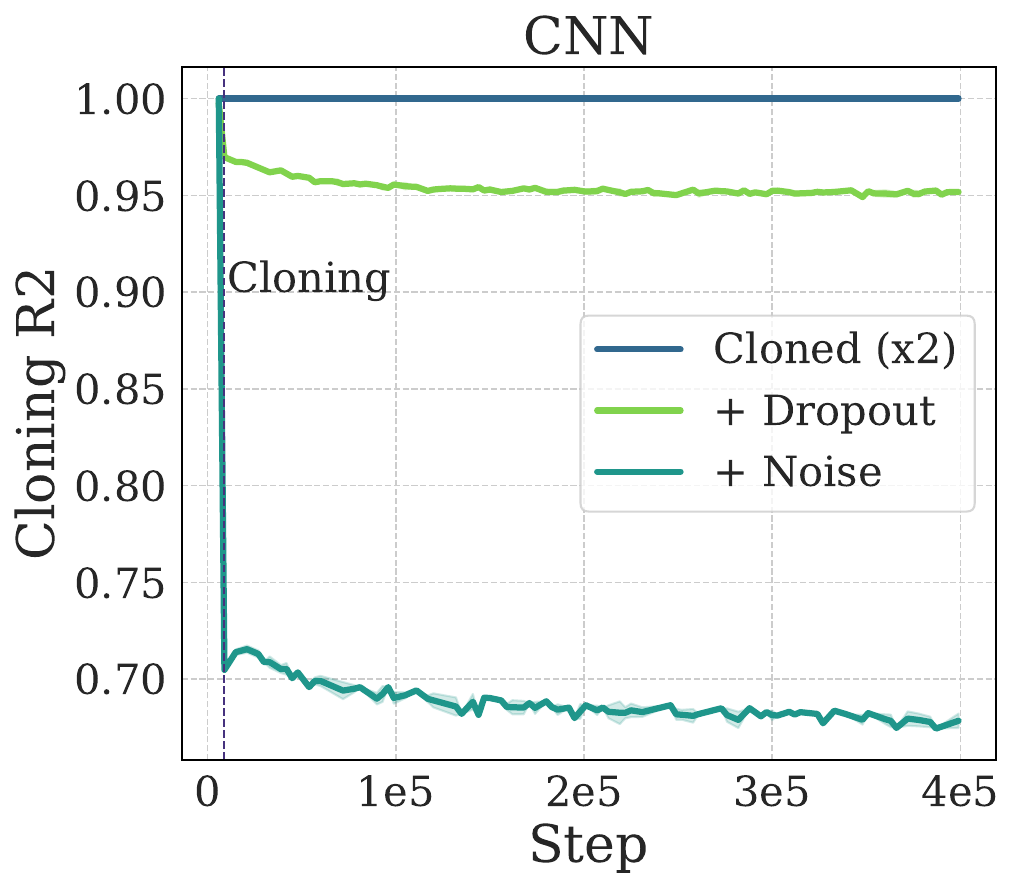}
    \includegraphics[width=0.3\linewidth]{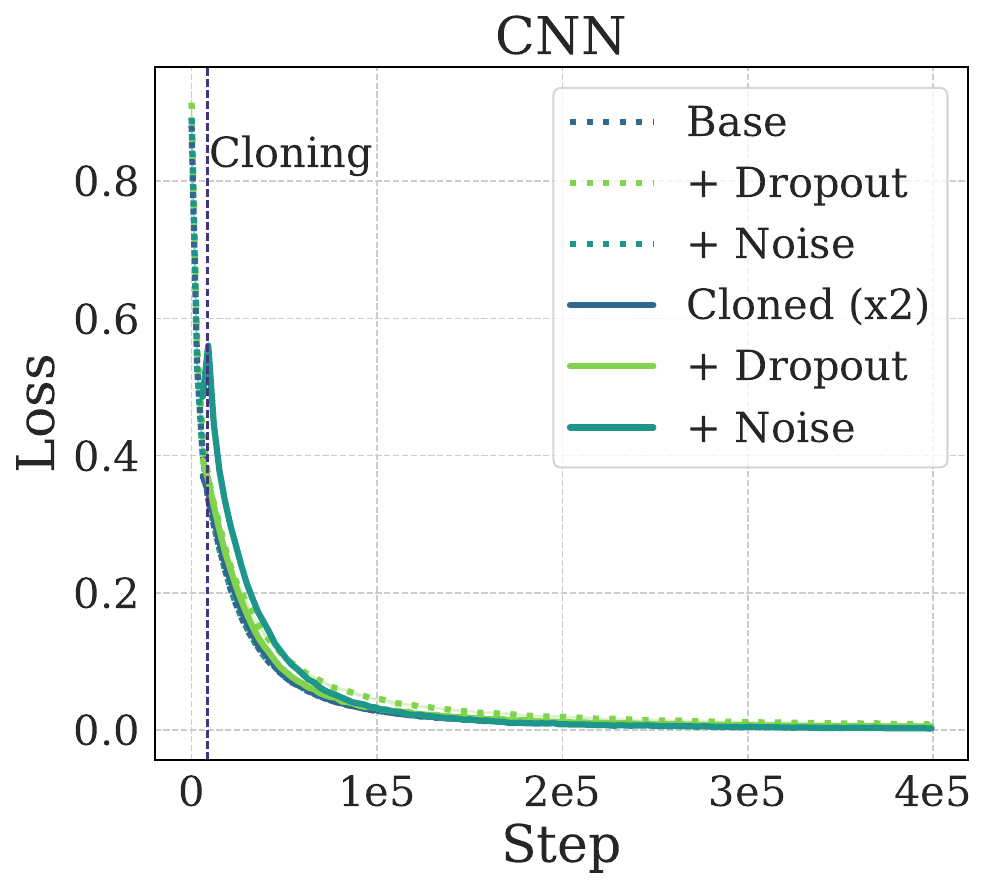}
    \includegraphics[width=0.3\linewidth]{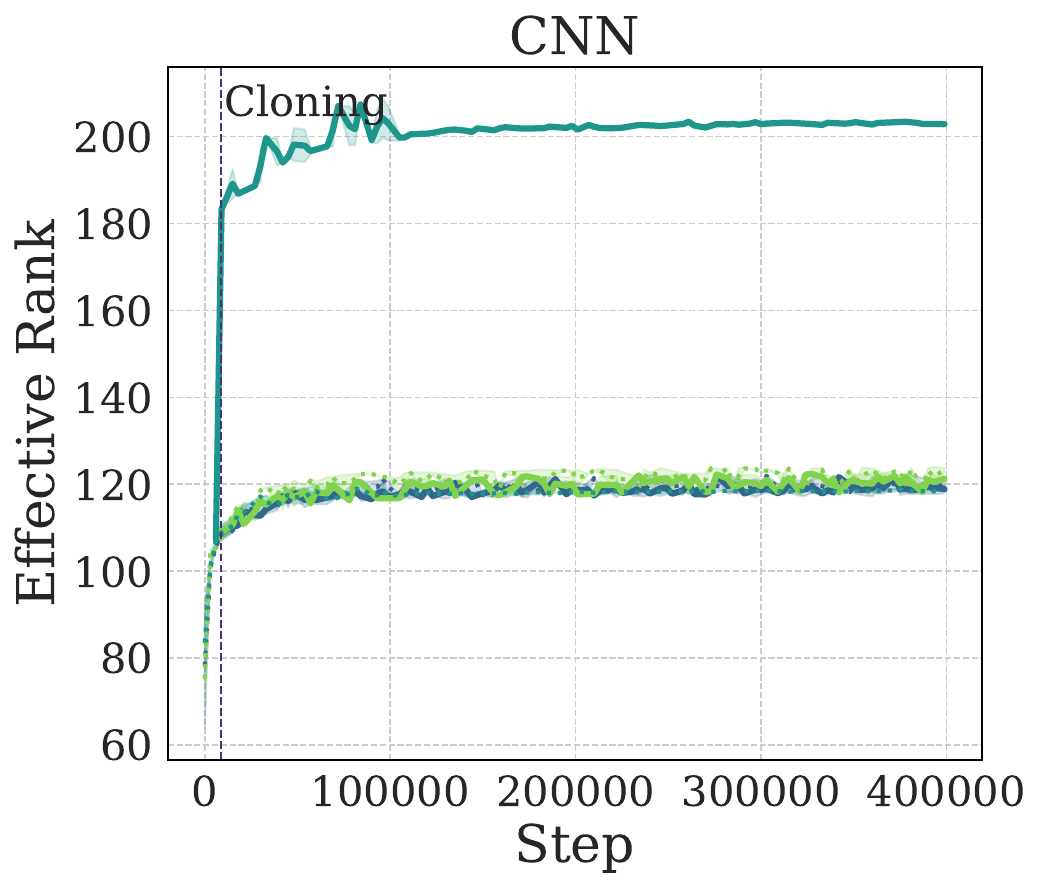}
    
    \includegraphics[width=0.3\linewidth]{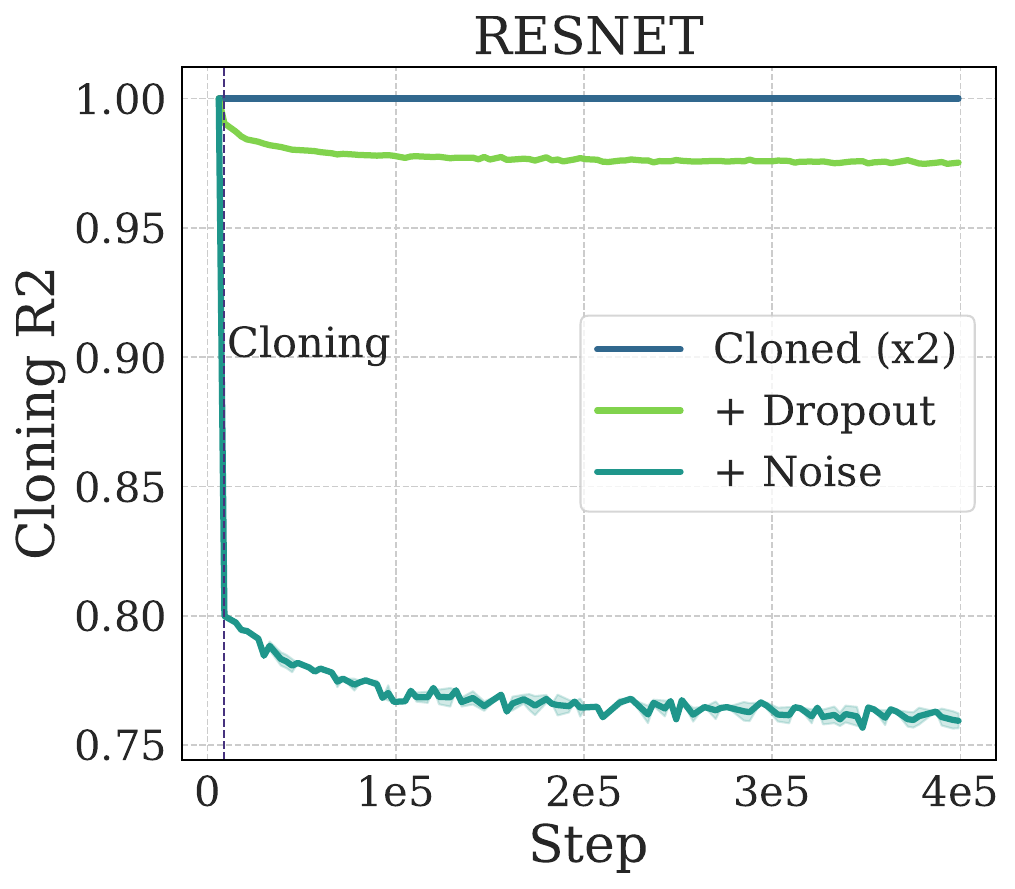}
    \includegraphics[width=0.3\linewidth]{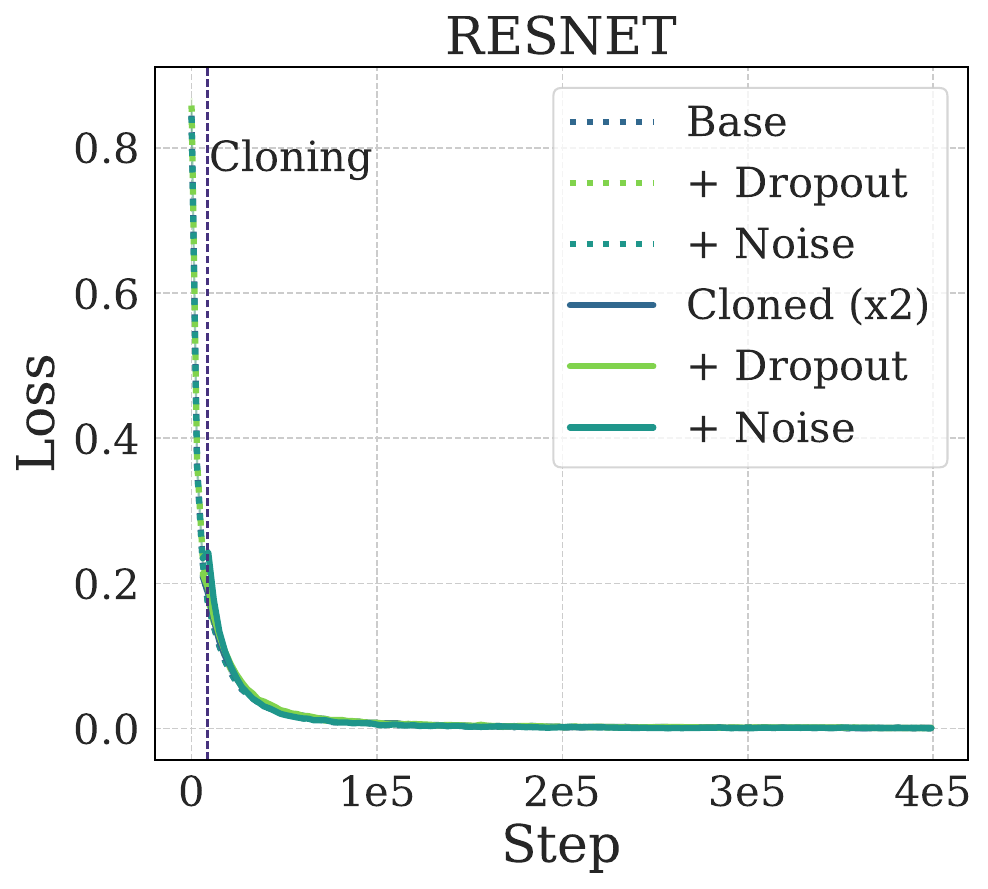}
    \includegraphics[width=0.3\linewidth]{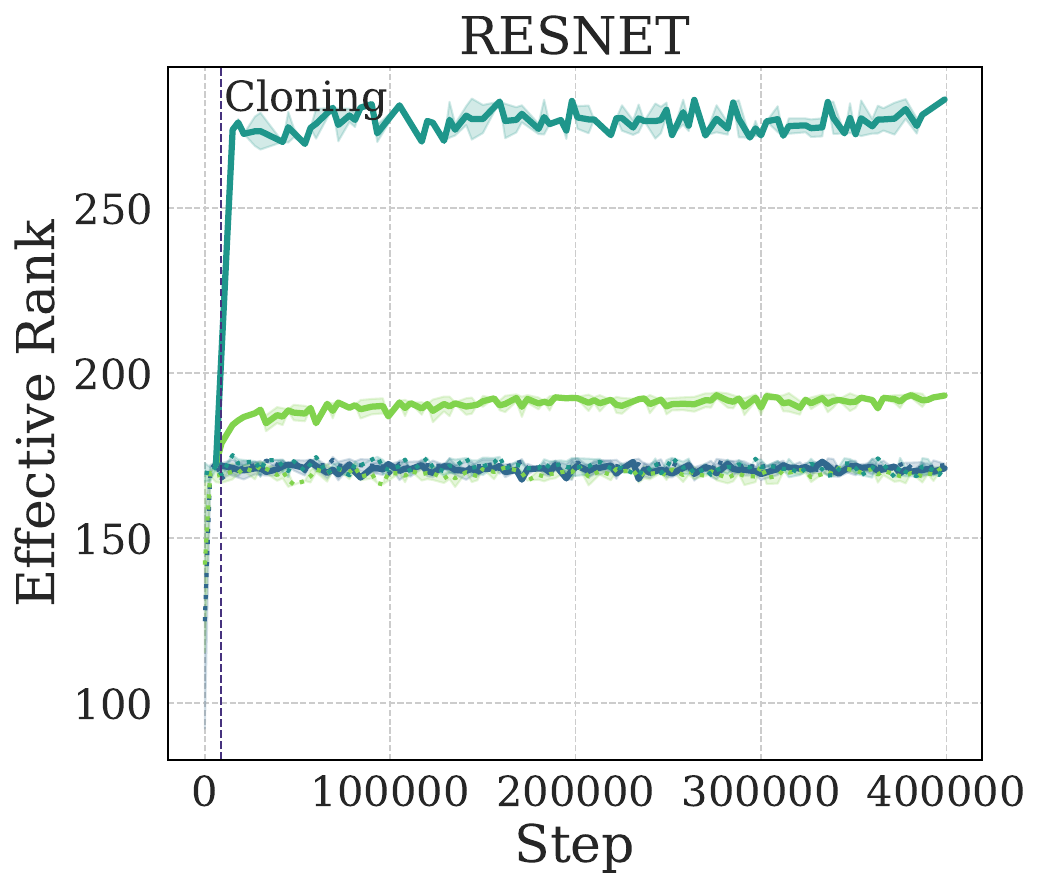}
    
    \includegraphics[width=0.3\linewidth]{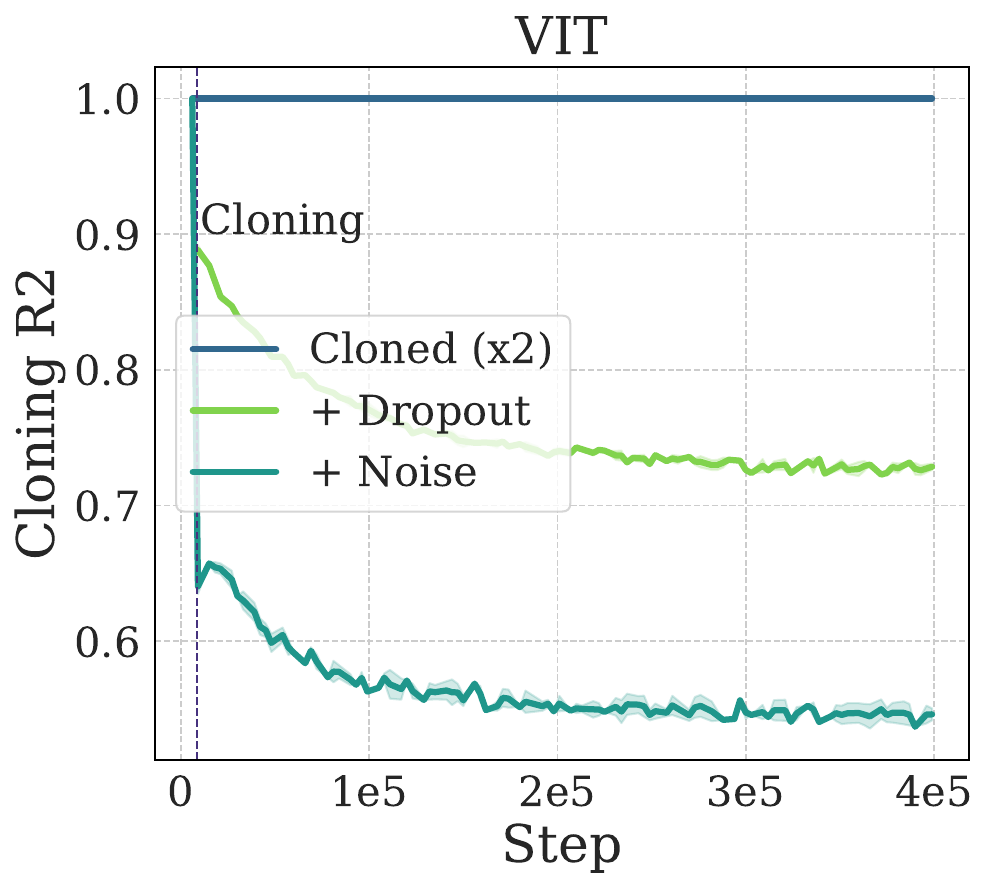}
    \includegraphics[width=0.3\linewidth]{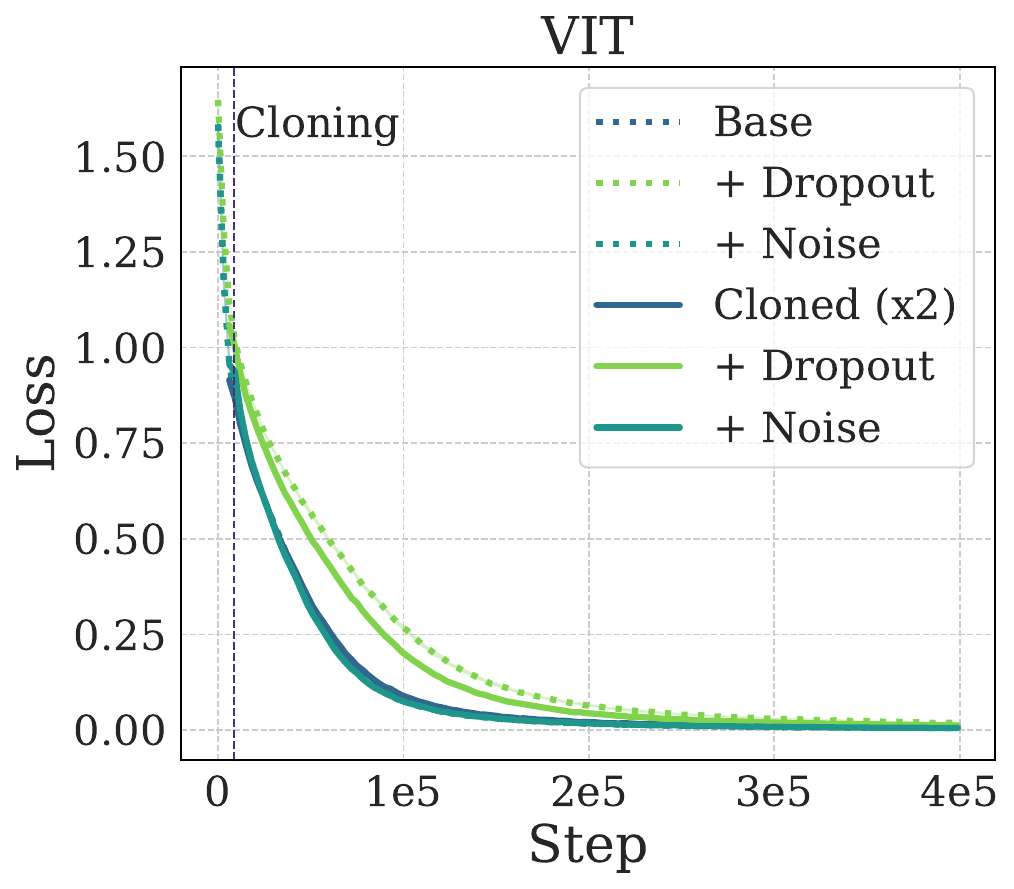}
    \includegraphics[width=0.3\linewidth]{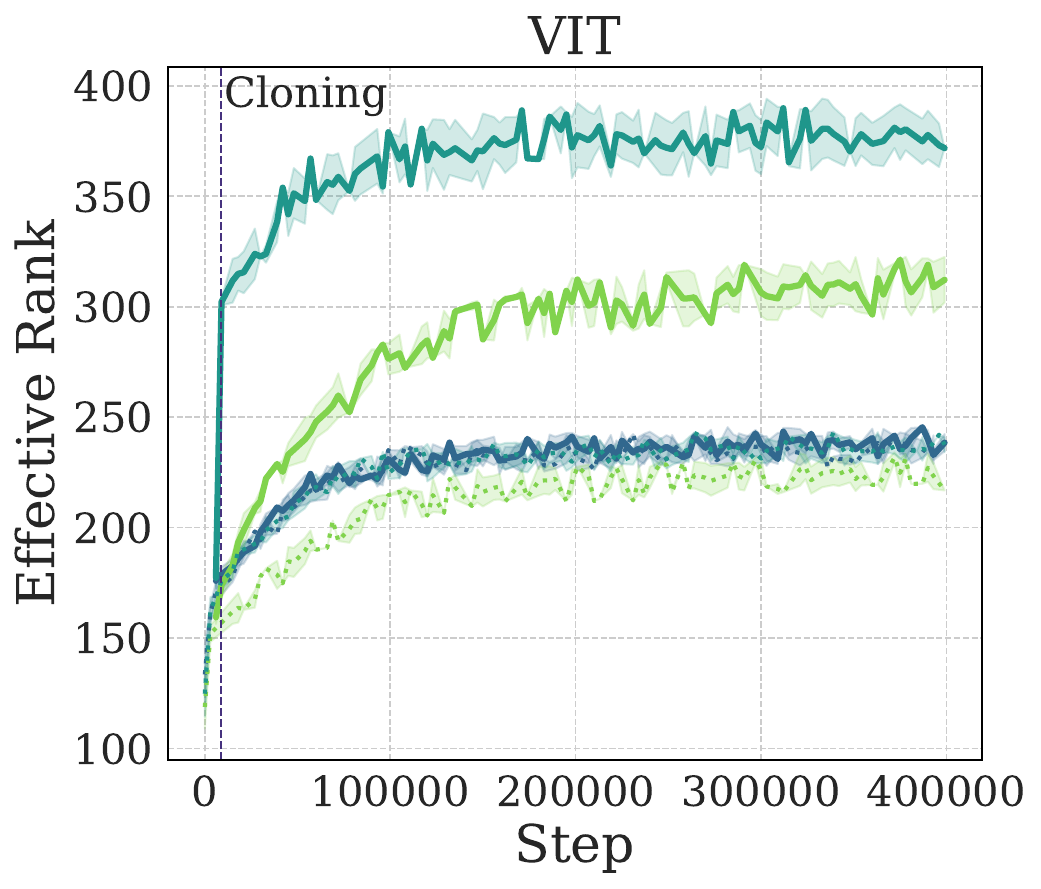}
    
    \caption{Cloning experiments across architectures. Configurations details: SGD with LR=0.01, Noisy SGD with $\sigma = 0.01$ and $\lambda = 0.999$, and Dropout with probability $0.1$. Normalization used: Batch Norm for all architectures, except ViTs, where we use Layer Norm.}
    \label{fig:cloning}
\end{figure}
\begin{figure}
    \centering
    \includegraphics[width=0.3\linewidth]{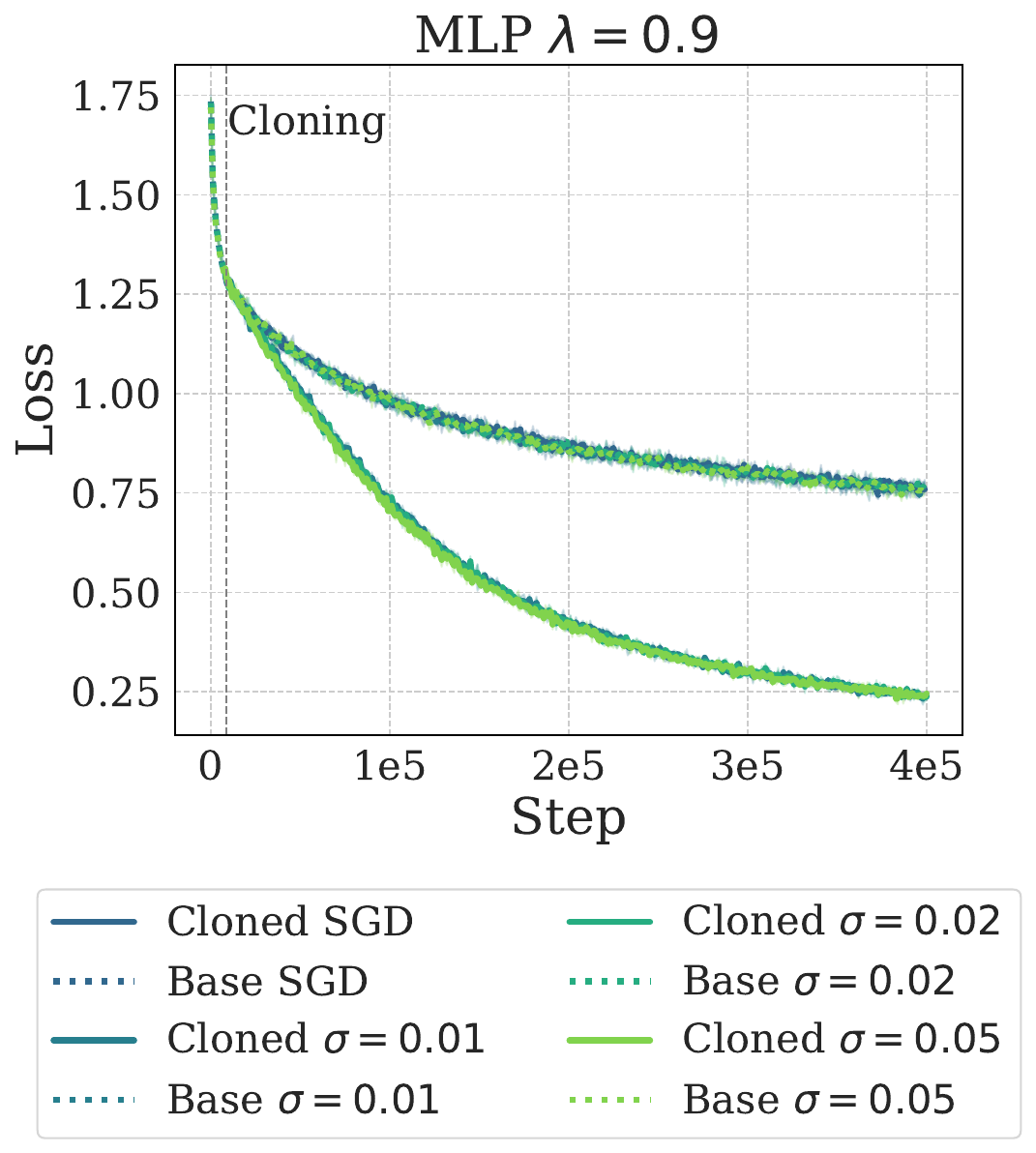}
    \includegraphics[width=0.3\linewidth]{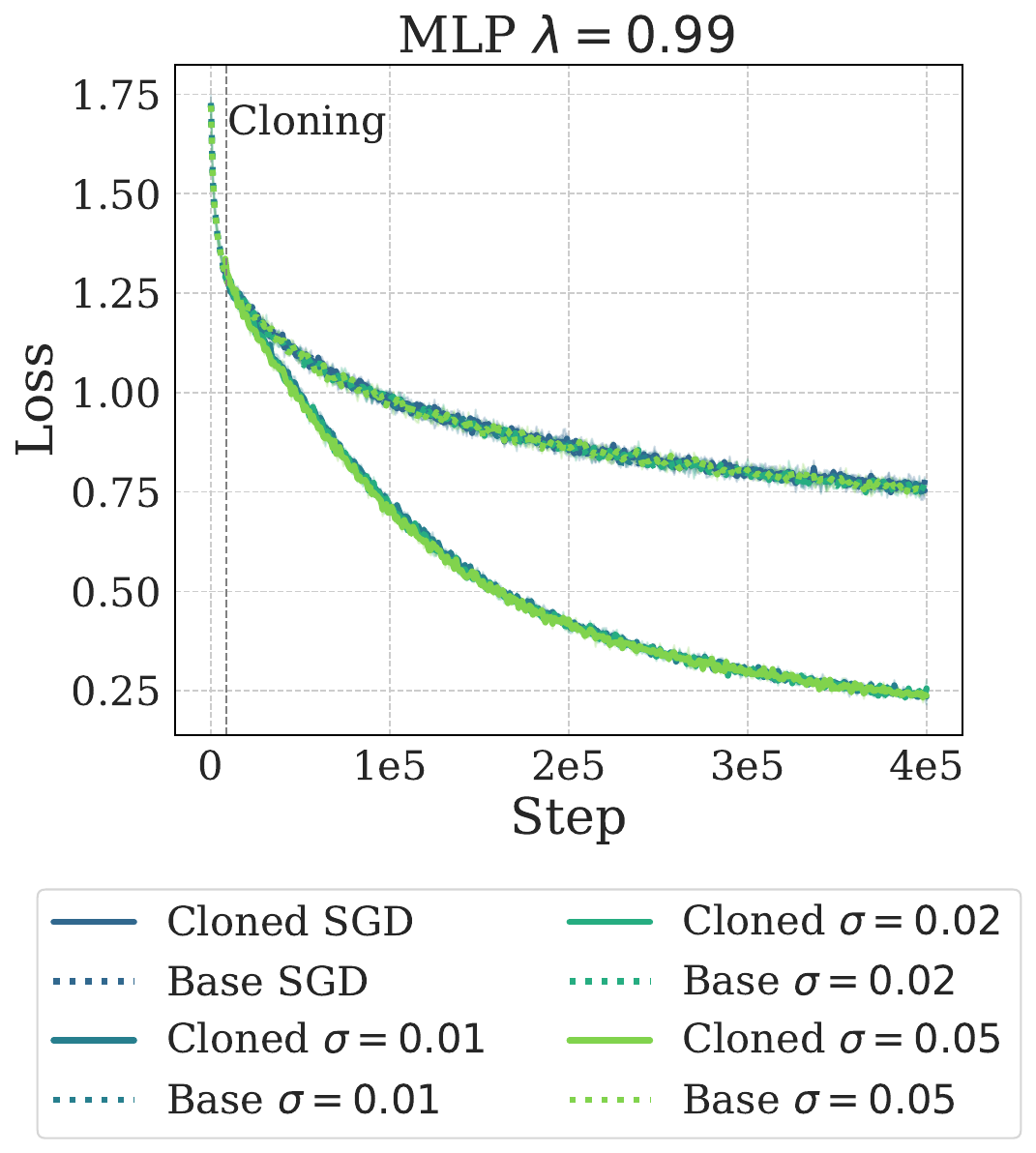}
    \includegraphics[width=0.3\linewidth]{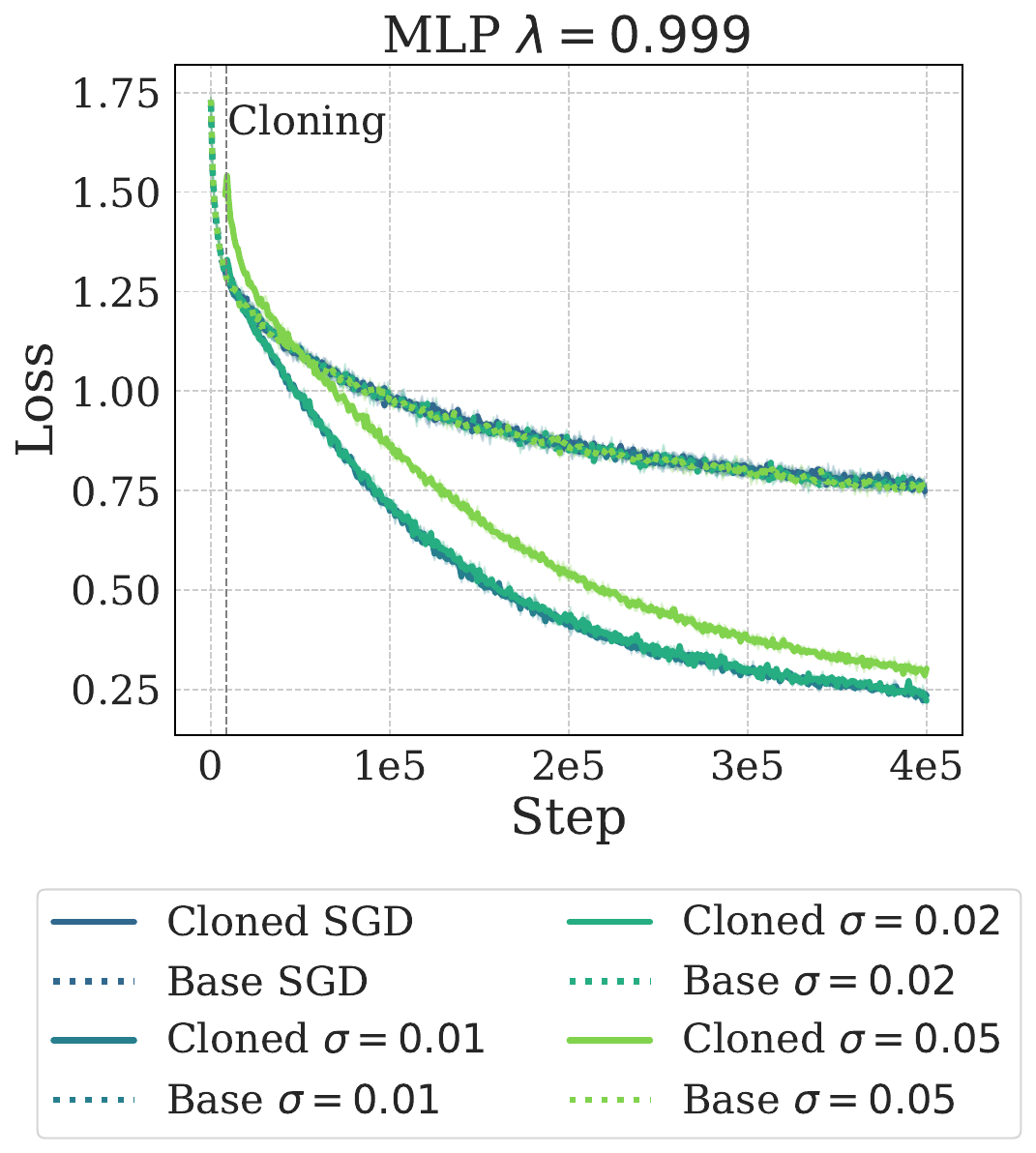}
    \caption{Effect of noise scale parameter $\sigma$ in Noisy SGD for the Cloning MLP Experiments.}
    \label{fig:cloning-noise-scale-mlp}
\end{figure}
\begin{figure}
    \centering
    \includegraphics[width=0.3\linewidth]{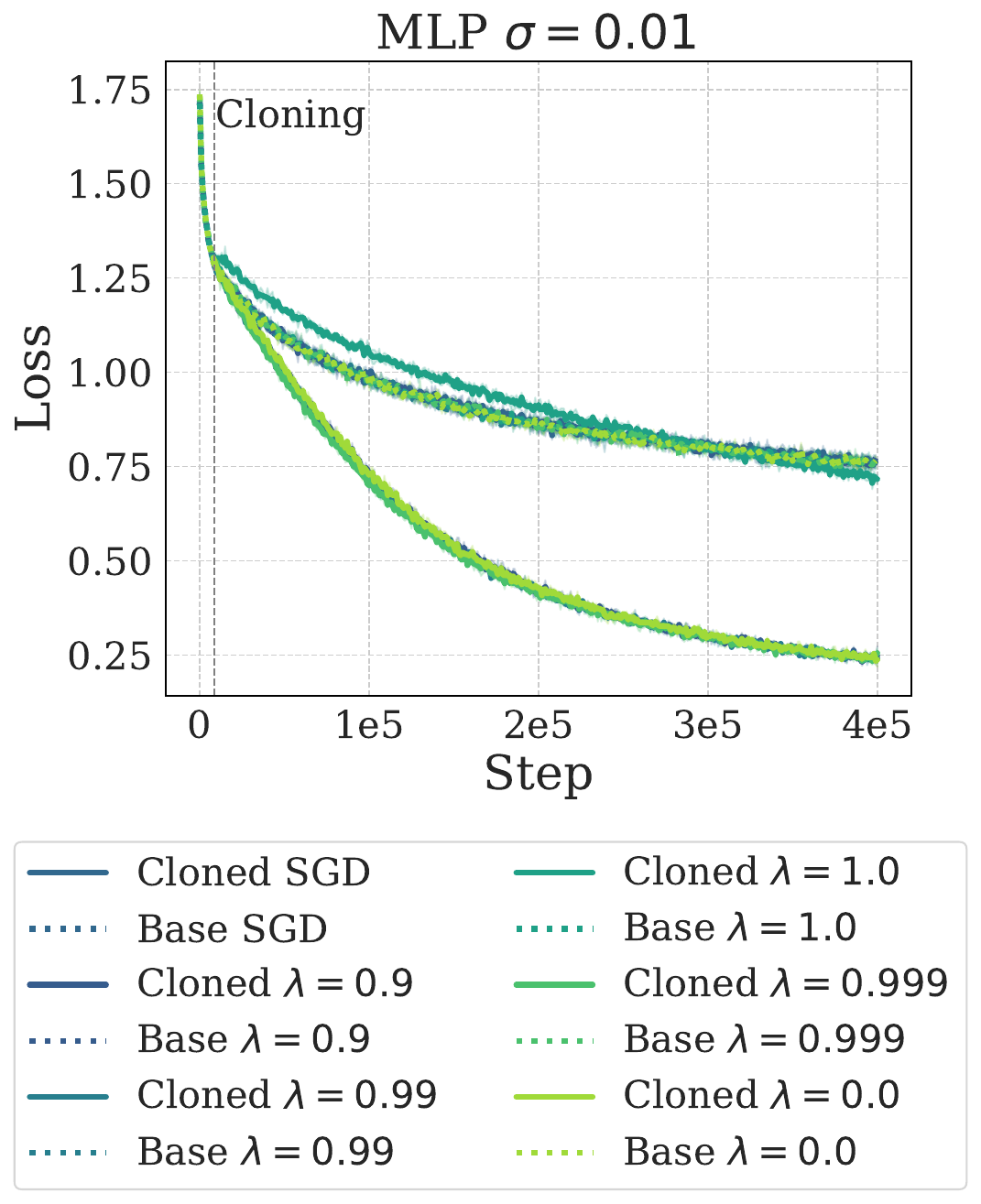}
    \includegraphics[width=0.3\linewidth]{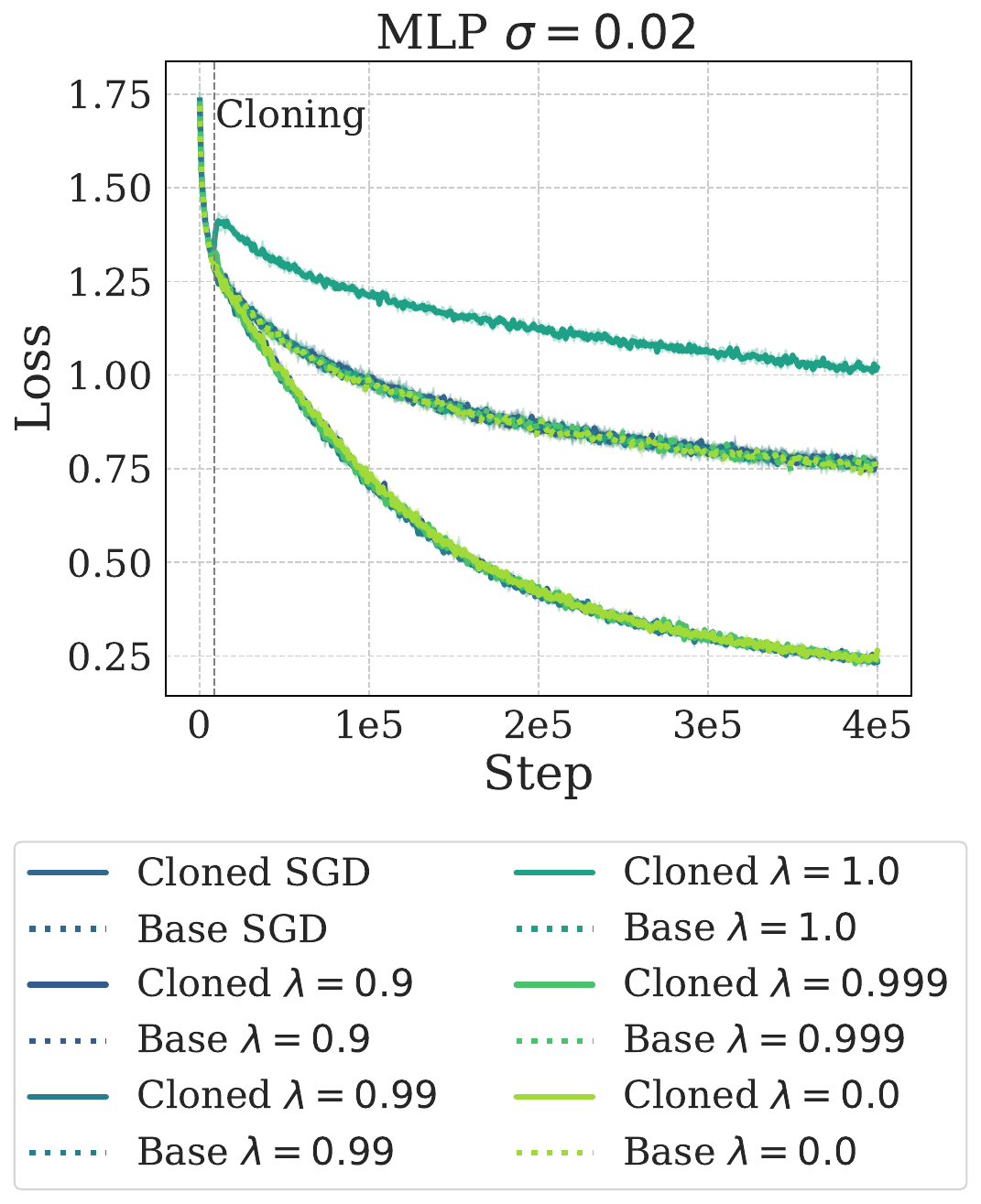}
    \includegraphics[width=0.3\linewidth]{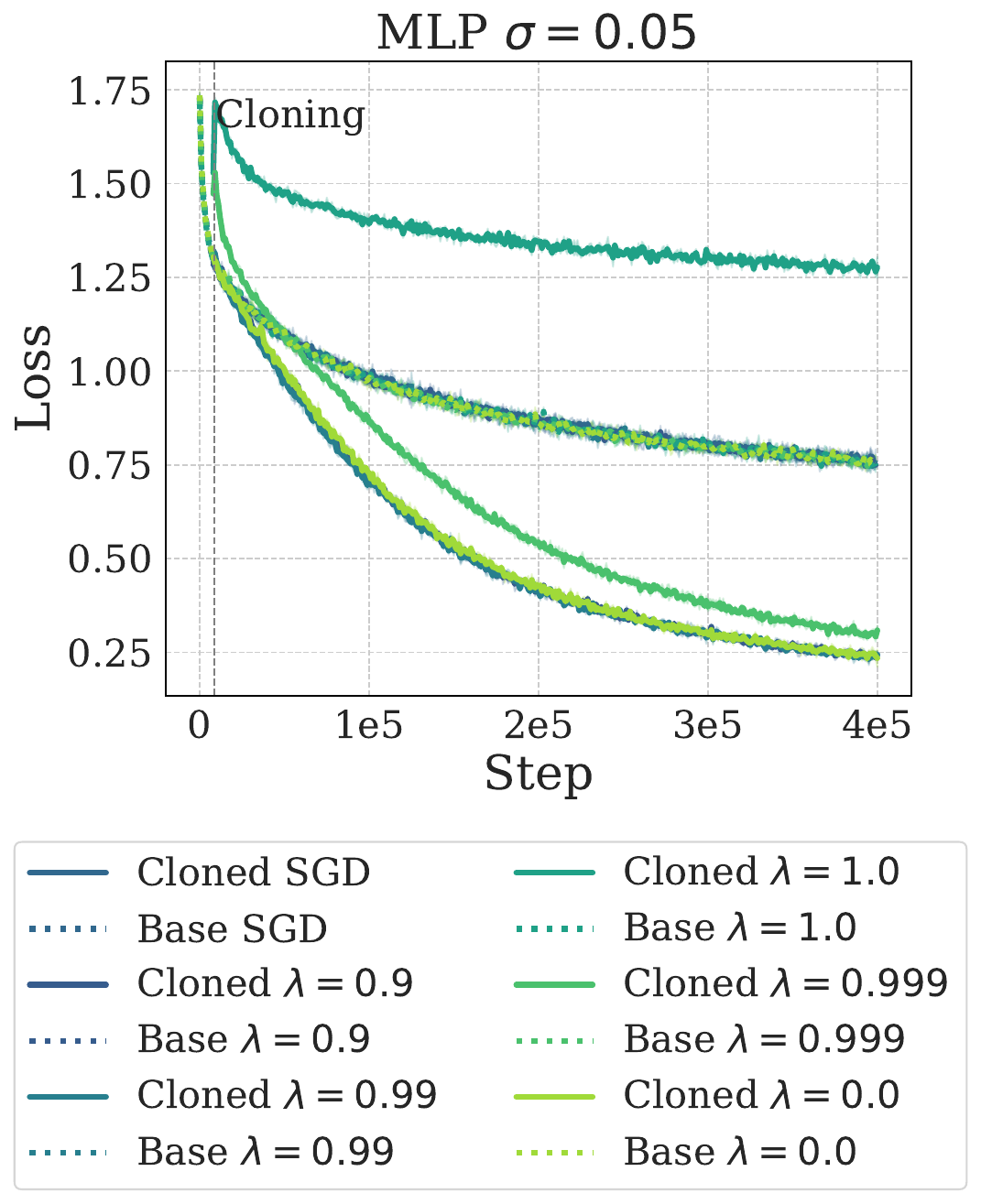}
    \caption{Effect of noise decay parameter $\lambda$ in Noisy SGD for the Cloning MLP Experiments.}
    \label{fig:cloning-decay-mlp}
\end{figure}
\begin{figure}
    \centering
    \includegraphics[width=0.3\linewidth]{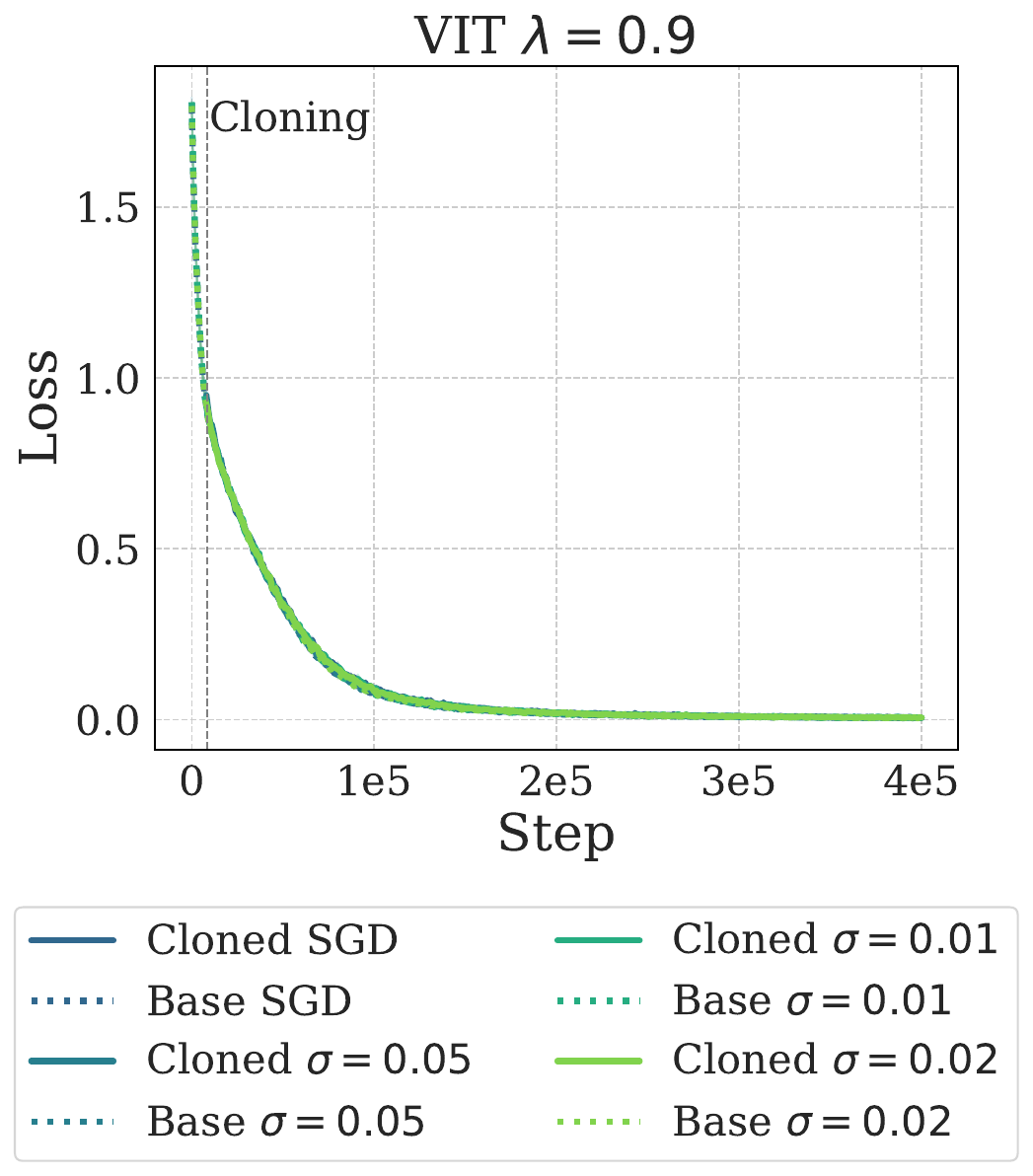}
    \includegraphics[width=0.3\linewidth]{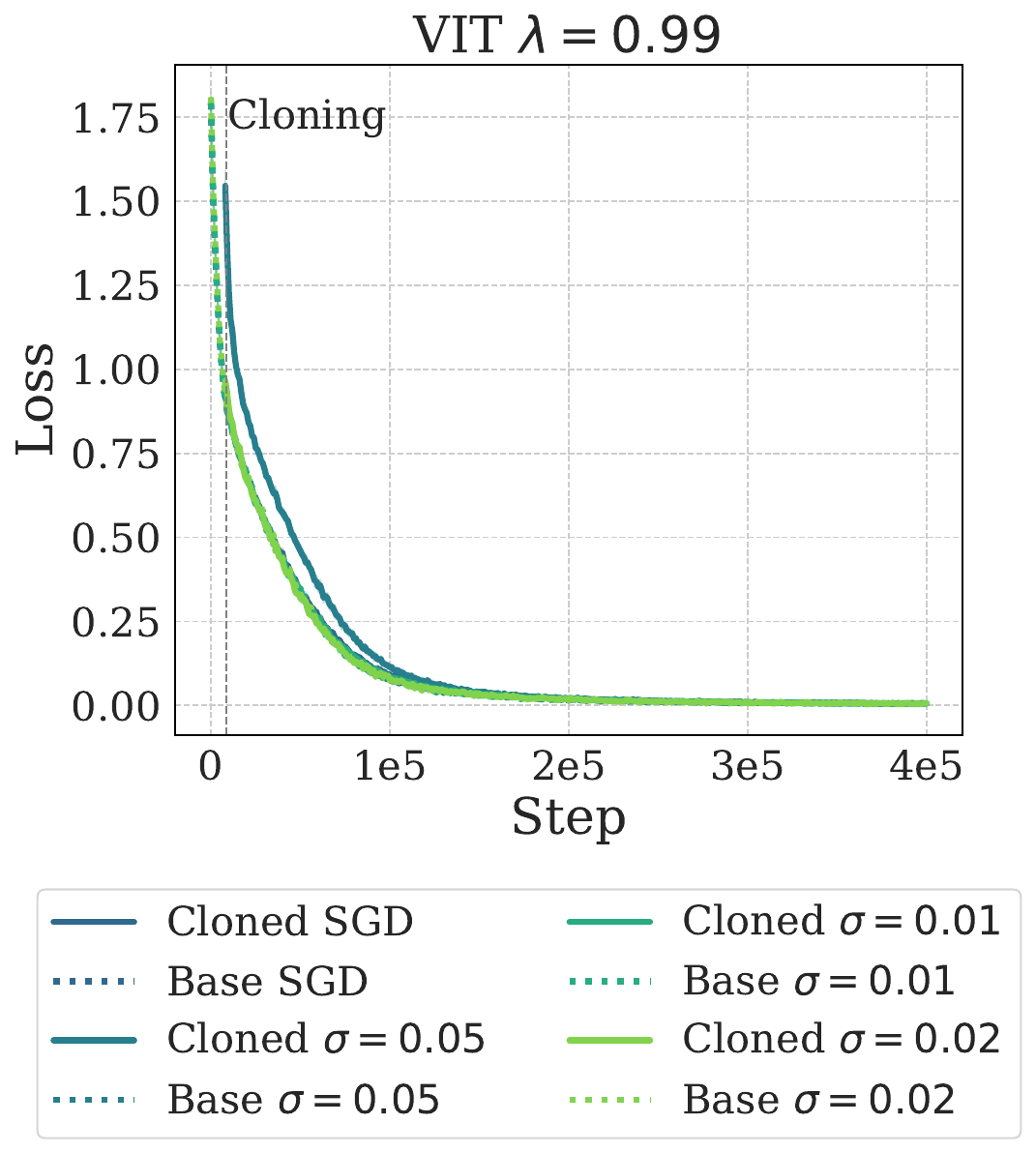}
    \includegraphics[width=0.3\linewidth]{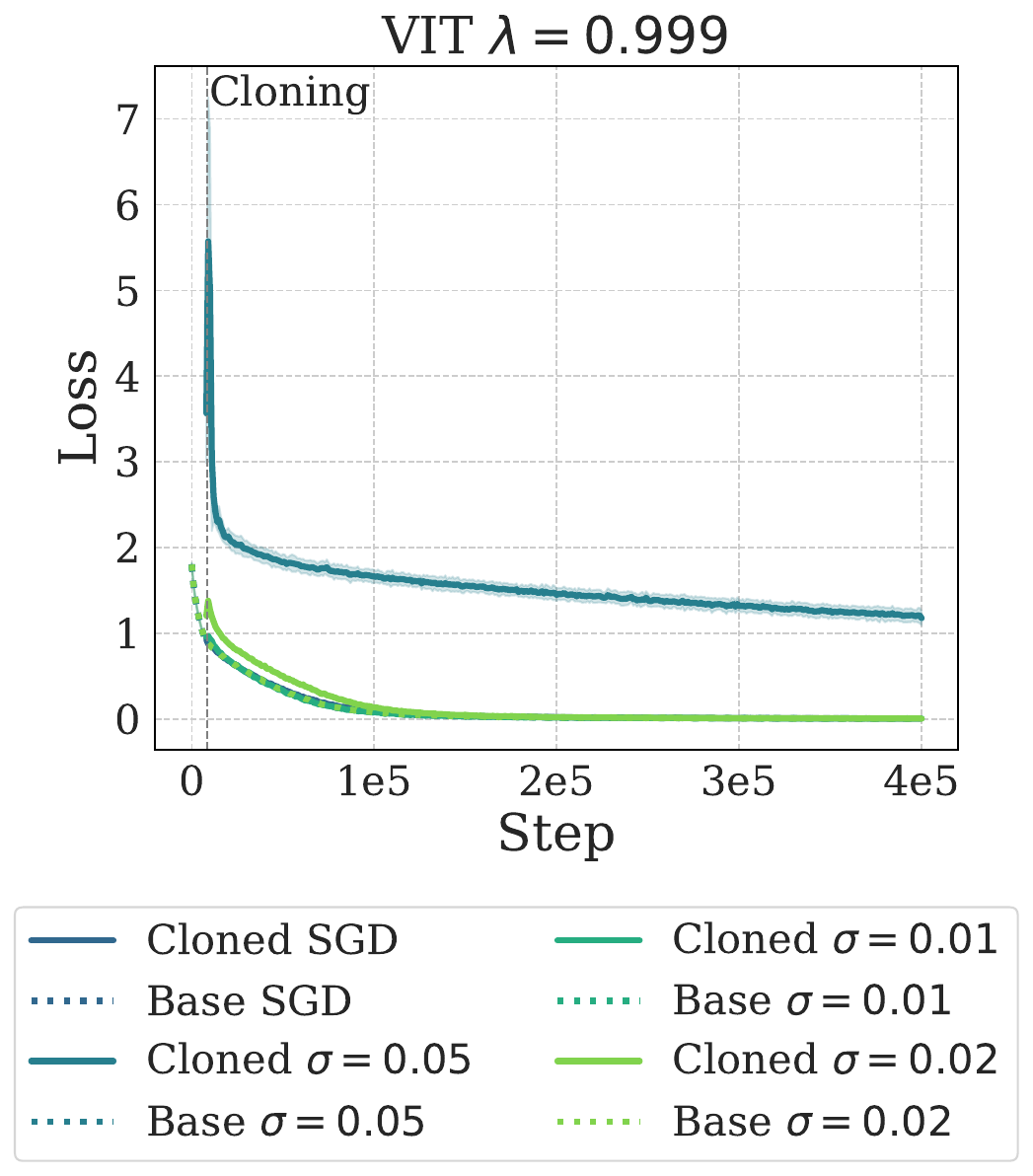}
    \caption{Effect of noise scale parameter $\sigma$ in Noisy SGD for the Cloning ViT Experiments.}
    \label{fig:cloning-scale-vit}
\end{figure}
\begin{figure}
    \centering
    \includegraphics[width=0.3\linewidth]{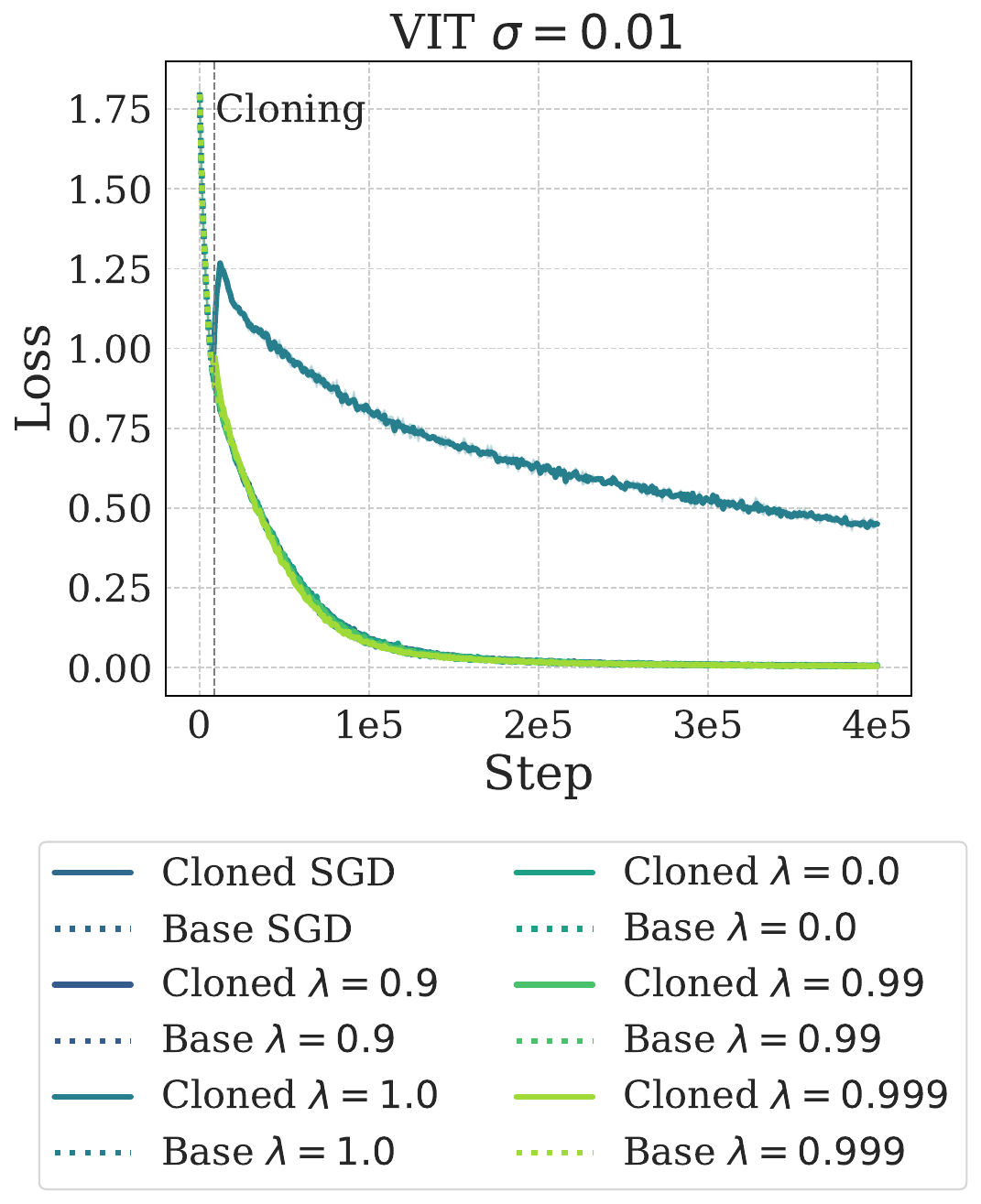}
    \includegraphics[width=0.3\linewidth]{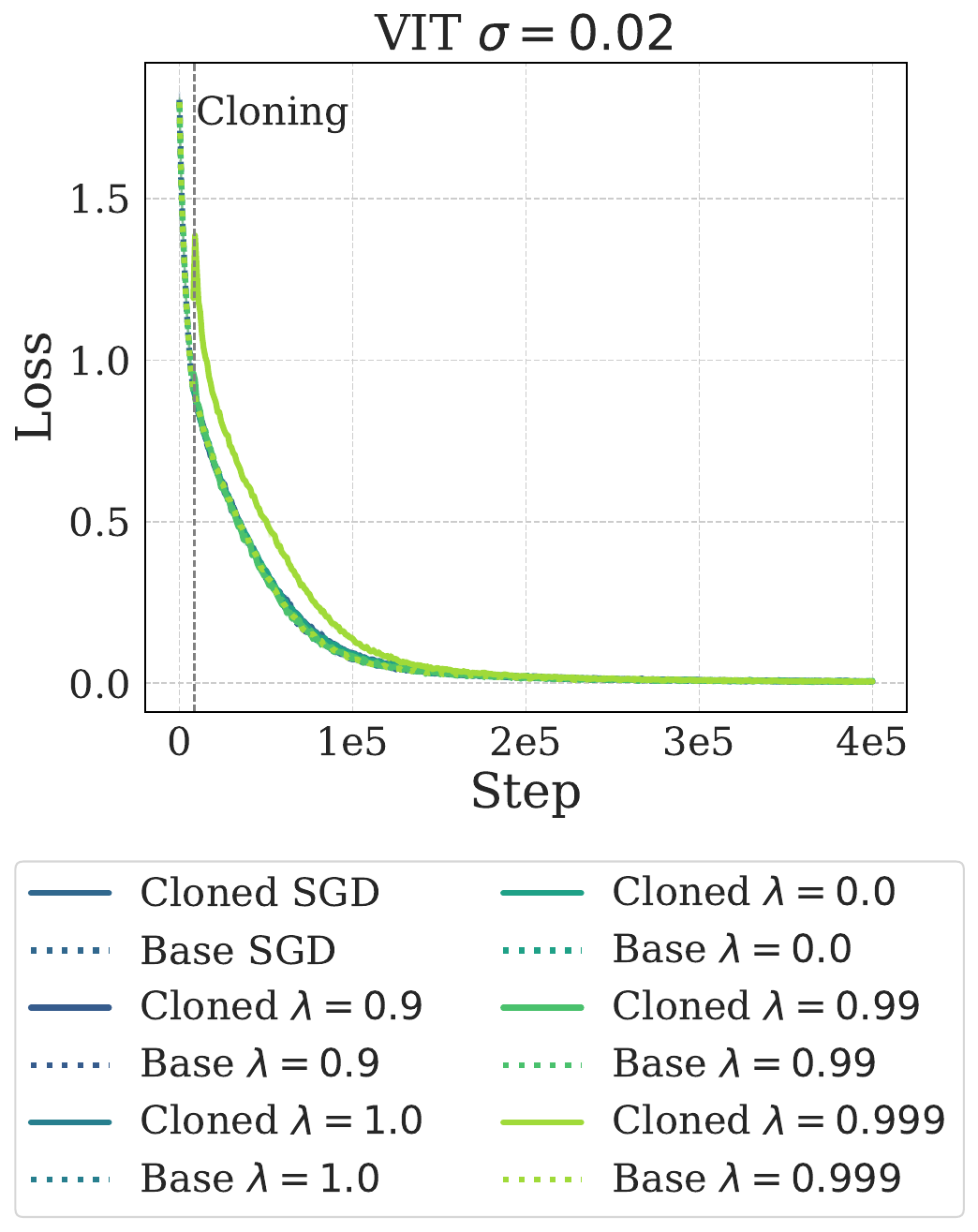}
    \includegraphics[width=0.3\linewidth]{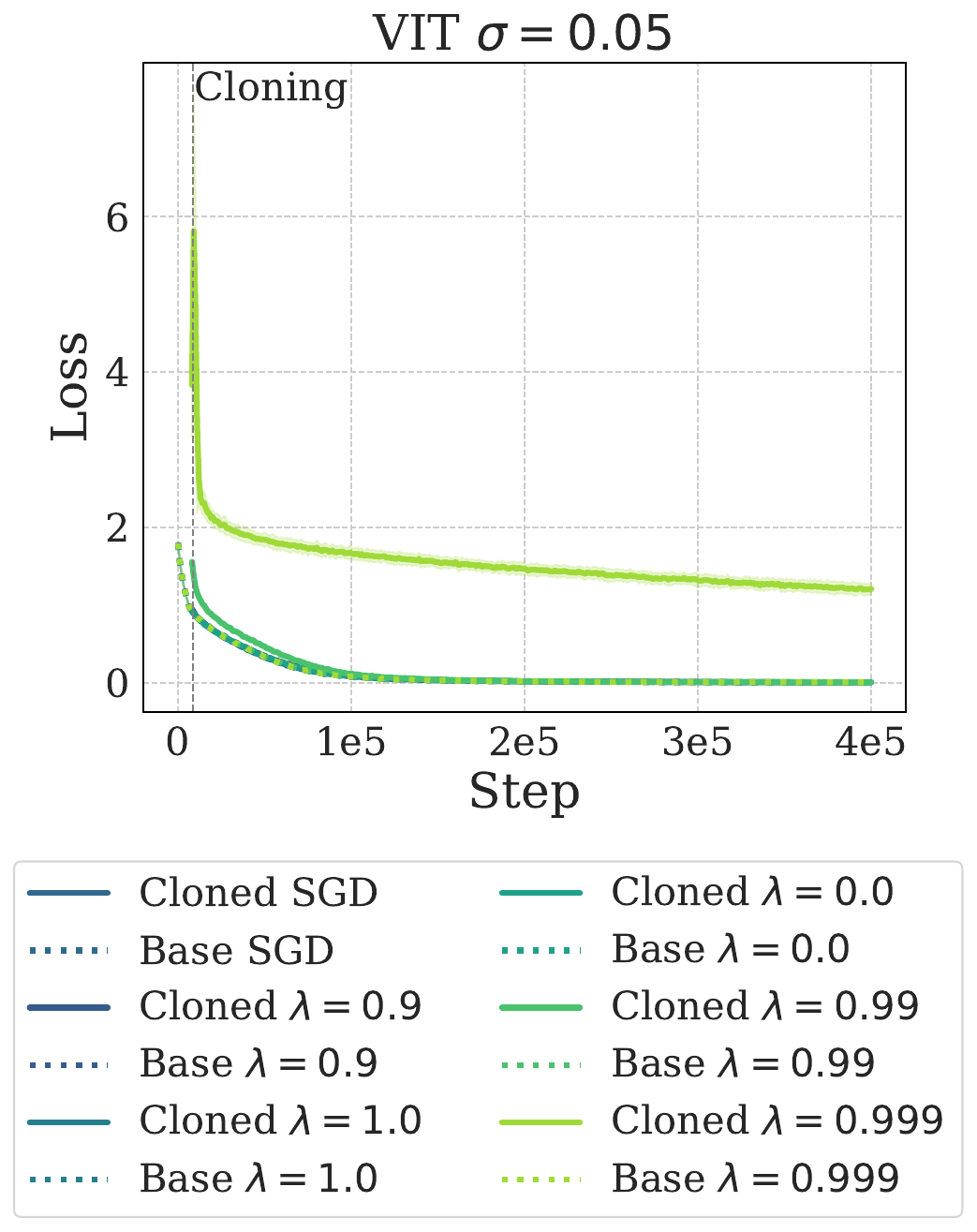}
    \caption{Effect of noise decay parameter $\lambda$ in Noisy SGD for the Cloning ViT Experiments.}
    \label{fig:noise-decay-vit}
\end{figure}
\begin{figure}
    \centering
    \includegraphics[width=0.3\linewidth]{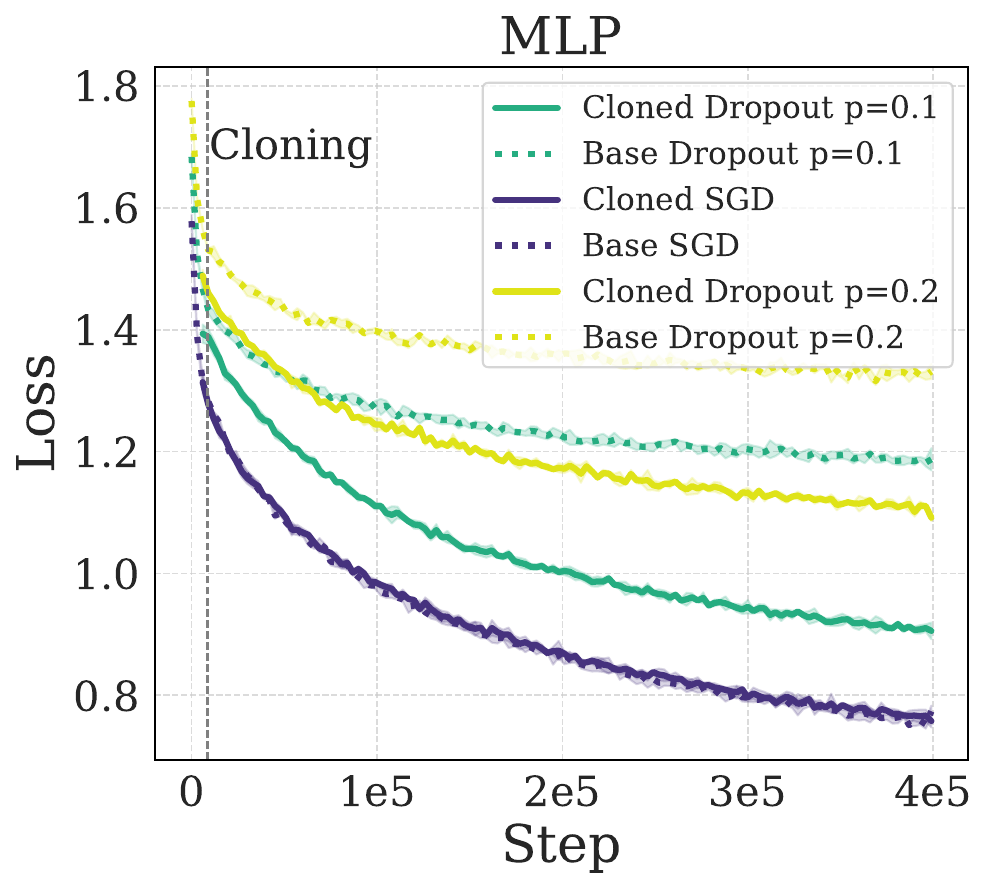}
    \includegraphics[width=0.3\linewidth]{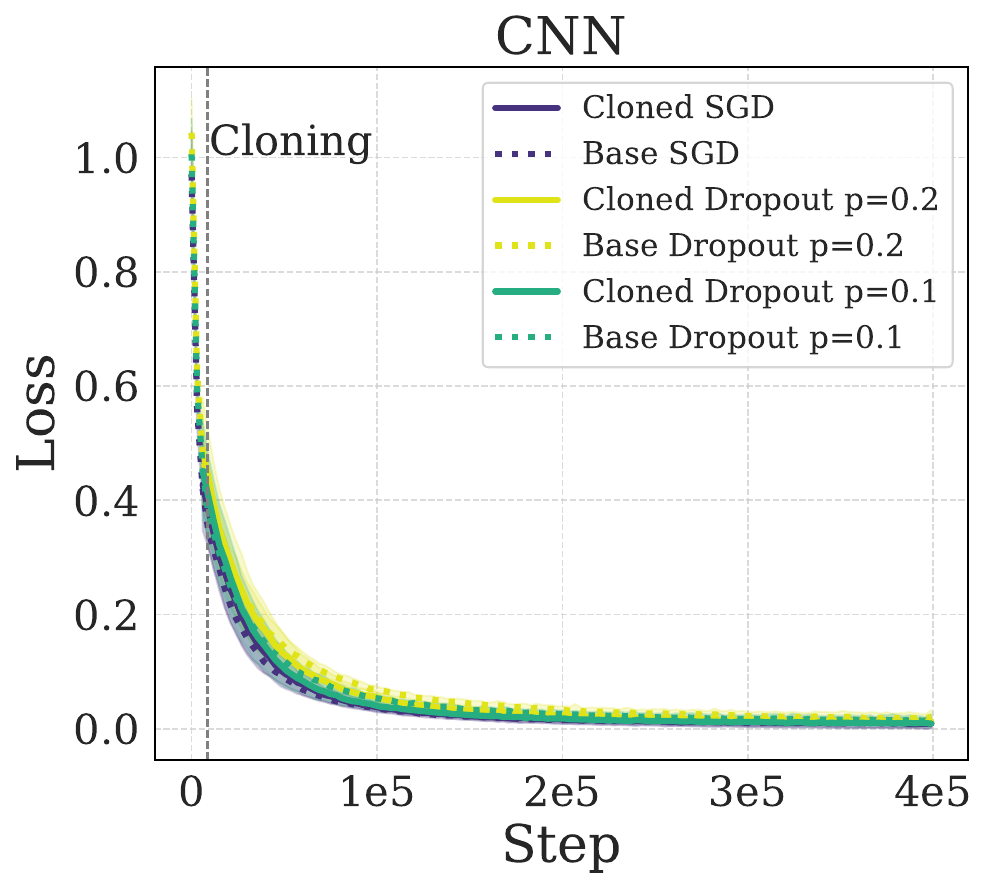}
    \includegraphics[width=0.3\linewidth]{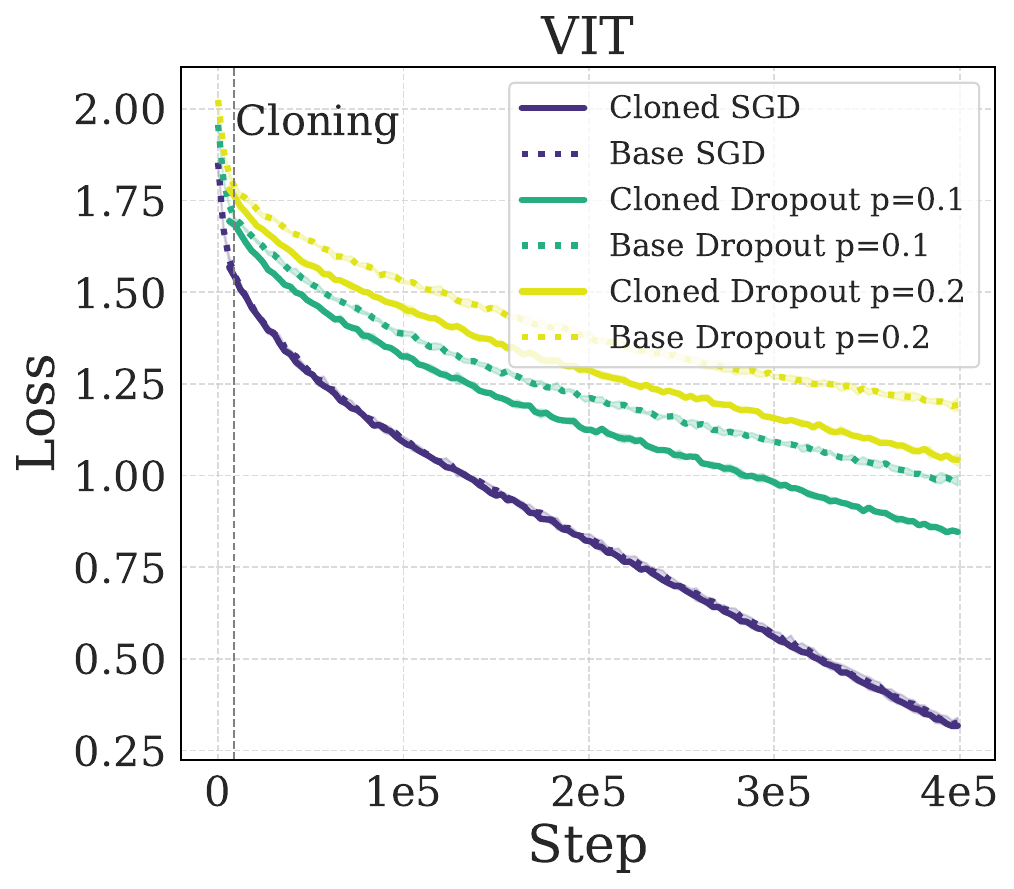}
    \caption{Effect of dropout probability parameter for the Cloning MLP, CNN and ViT Experiments. Batch norm is used for the MLP and CNN models, and Layer norm for the ViT model.}
    \label{fig:cloning-dropout}
\end{figure}

\begin{figure}
    \centering
    \includegraphics[width=0.3\linewidth]{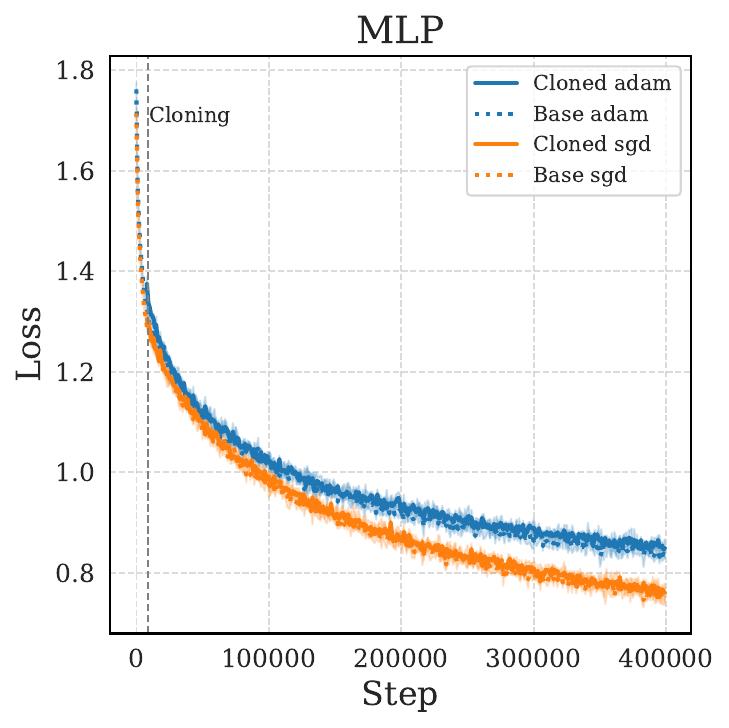}
    \includegraphics[width=0.3\linewidth]{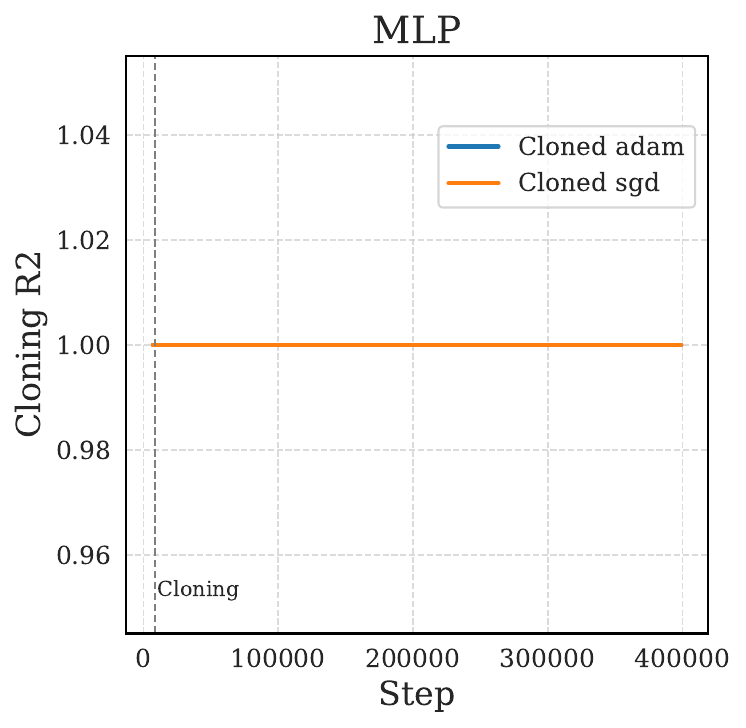}
    \includegraphics[width=0.3\linewidth]{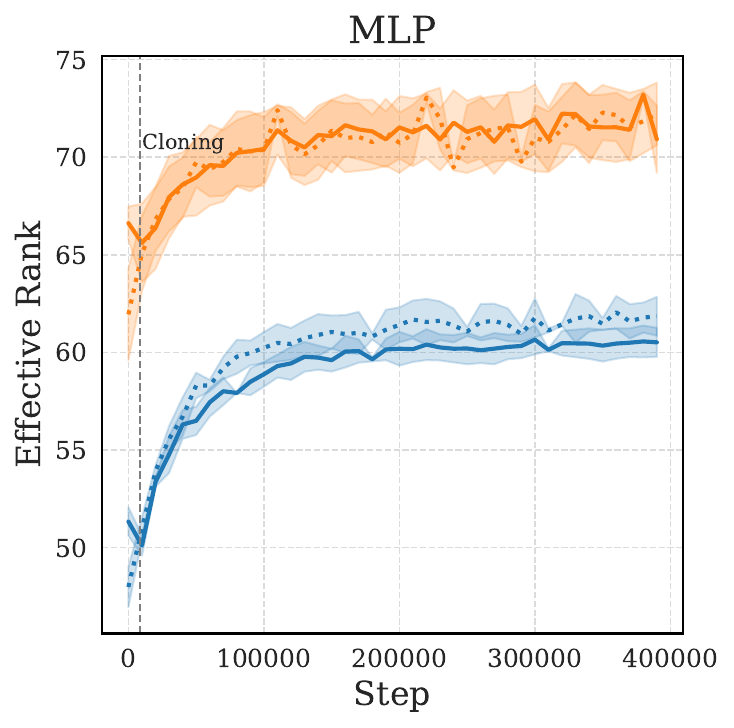}
    \caption{Differences between SGD and Adam optimizers in the MLP Experiments. Like SGD, Adam cannot escape the base sub-manifold, although the dynamics are different.}
    \label{fig:cloning-sgd-adam}
\end{figure}
\begin{figure}
    \centering
    \includegraphics[width=0.3\linewidth]{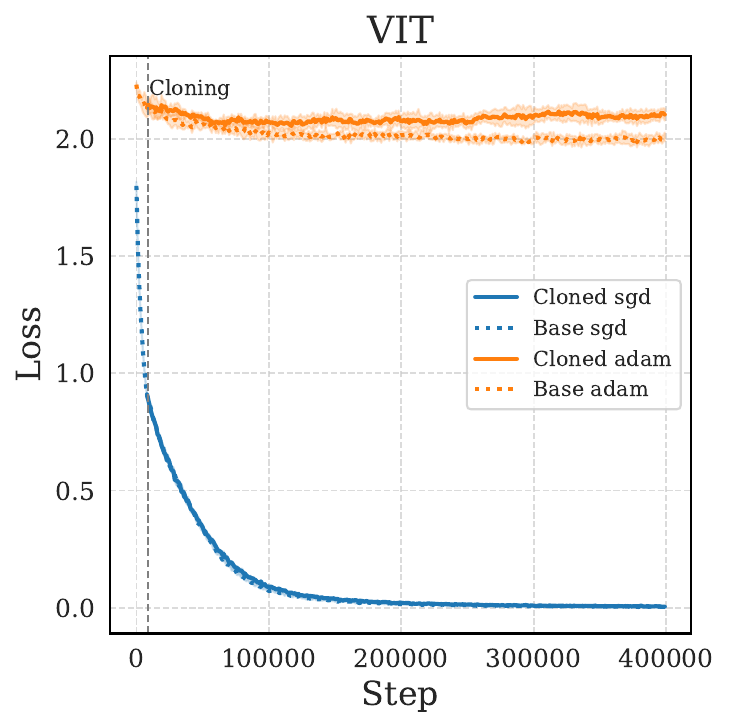}
    \includegraphics[width=0.3\linewidth]{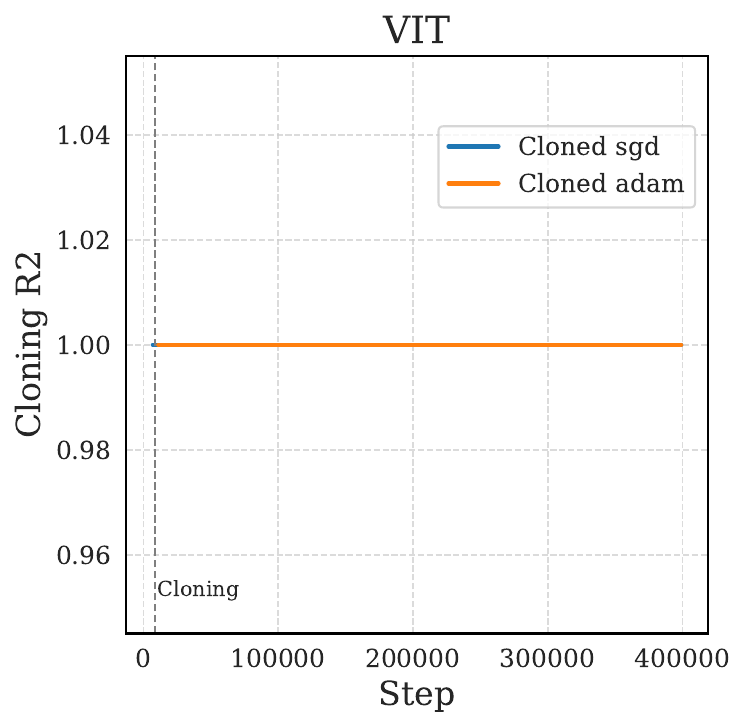}
    \includegraphics[width=0.3\linewidth]{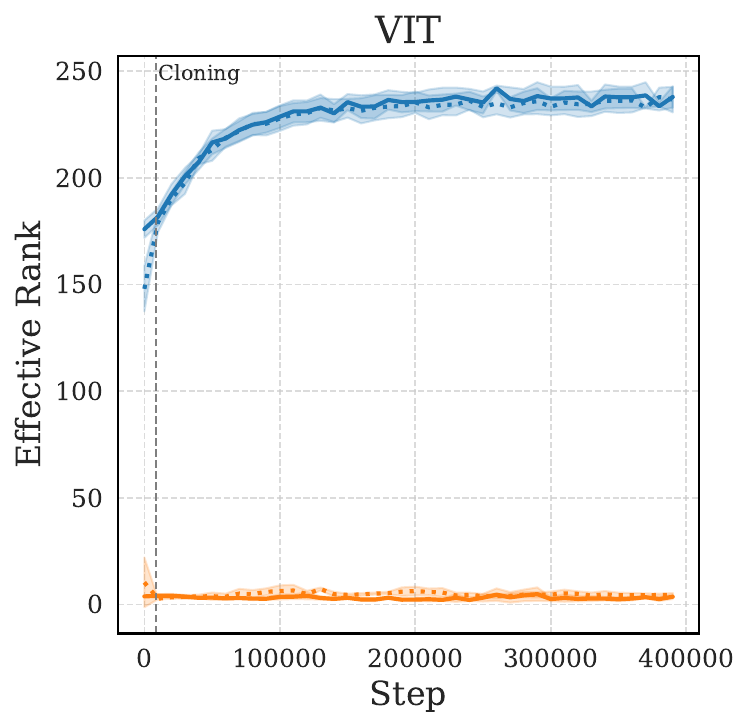}
    \caption{Differences between SGD and Adam optimizers in the ViT Experiments. Like SGD, Adam cannot escape the base sub-manifold, although the dynamics are different.}
    \label{fig:cloning-optimizer-diff}
\end{figure}
\end{document}